\documentclass{article}

\usepackage{microtype}
\usepackage{graphicx}
\usepackage{subcaption}
\usepackage{booktabs} %

\usepackage{hyperref}
\hypersetup{hypertexnames=false}
\usepackage{silence}
\usepackage[accepted]{icml2026}

\usepackage{amsmath}
\usepackage{amssymb}
\usepackage{mathtools}
\usepackage{amsthm}

\usepackage{algorithm}

\usepackage{algpseudocode}
\makeatletter
\renewcommand*{\theHALG@line}{\thealgorithm.\arabic{ALG@line}}
\makeatother

\usepackage[capitalize,noabbrev]{cleveref}

\theoremstyle{plain}
\newtheorem{theorem}{Theorem}
\newtheorem{proposition}{Proposition}
\newtheorem{lemma}{Lemma}

\theoremstyle{definition}
\newtheorem{definition}{Definition}
\newtheorem{assumption}{Assumption}
\theoremstyle{remark}
\newtheorem{remark}{Remark}

\newcommand{\cA}{\mathcal{A}}

\newcommand{\cP}{\mathcal{P}}

\newcommand{\cS}{\mathcal{S}}

\renewcommand{\H}{\mathsf{H}}
\newcommand{\E}{\mathbb{E}}

\newcommand{\wass}{\mathsf{W}}

\newcommand{\KL}{\mathsf{KL}}

\providecommand{\argmax}{\mathrm{argmax}}

\renewcommand{\d}{\mathrm{d}}

\newcommand{\R}{\mathbb{R}}

\newcommand{\Prob}{\mathbb{P}}

\newcommand{\W}{W_2}
\newcommand{\ip}[2]{\left\langle #1,#2\right\rangle}
\newcommand{\osc}{\operatorname{Osc}}

\newcommand{\Law}{\mathop{\mathrm{Law}}}
\usepackage[textsize=tiny]{todonotes}

\usepackage{enumitem}

\icmltitlerunning{Wasserstein Proximal Policy Gradient}
\begin{document}

\twocolumn[
  \icmltitle{Wasserstein Proximal Policy Gradient}

  \begin{icmlauthorlist}
    \icmlauthor{Zhaoyu Zhu}{yyy}
    \icmlauthor{Shuhan Zhang}{comp}
    \icmlauthor{Rui Gao}{sch}
    \icmlauthor{Shuang Li}{comp}
  \end{icmlauthorlist}

  \icmlaffiliation{yyy}{Zhiyuan College, Shanghai Jiao Tong University, Shanghai 200240, China}
  \icmlaffiliation{comp}{School of Data Science, The Chinese University of Hong Kong, Shenzhen, Guangdong, China}
  \icmlaffiliation{sch}{McCombs School of Business, The
University of Texas at Austin, Austin, TX, USA}

  \icmlcorrespondingauthor{Shuang Li}{lishuang@cuhk.edu.cn}

  \icmlkeywords{Reinforcement learning, Continuous control, Global convergence, Optimization on the probability space}

  \vskip 0.3in
]

\printAffiliationsAndNotice{ }  %

\begin{abstract}
We study policy gradient methods for continuous-action, entropy-regularized reinforcement learning through the lens of Wasserstein geometry. Starting from a Wasserstein proximal update, we derive Wasserstein Proximal Policy Gradient (WPPG) via an operator-splitting scheme that alternates an optimal transport update with a heat step implemented by Gaussian convolution. This formulation avoids evaluating the policy's log density or its gradient, making the method directly applicable to expressive implicit stochastic policies specified as pushforward maps. We establish a global linear convergence rate for WPPG, covering both exact policy evaluation and actor–critic implementations with controlled approximation error. Empirically, WPPG is simple to implement and attains competitive performance on standard continuous-control benchmarks.

\end{abstract}

\section{Introduction}

Reinforcement learning (RL) has become a powerful paradigm for solving complex sequential decision-making problems, powering landmark achievements from superhuman performance in strategic games\citep{silver2016mastering,silver2017mastering} and advanced robotic control \citep{levine2016end} to the training of Large Language Models \citep{guo2025deepseek}. At the heart of many of these successes are policy gradient (PG) methods \citep{williams1992simple, sutton1999policy}, which iteratively update a parameterized policy to maximize expected rewards.

The geometry underlying policy updates plays an important role in the learning process. Standard policy gradient methods use the Euclidean geometry of parameter space, while the natural policy gradient \citep{kakade2001natural} and trust-region methods such as TRPO \citep{schulman2015trust} and PPO \citep{schulman_proximal_2017} instead exploit the information geometry of policies via Kullback–Leibler (KL) divergence.
These methods are supported by a growing body of analysis, with recent results establishing fast global convergence rates in finite action spaces \citep{agarwal2021theory,lan2023policy,xiao2022convergence,cen2022fast,bhandari2024global}.

Recent work explores an alternative paradigm that formulates policy optimization in distribution space under the Wasserstein metric.
This perspective builds on the theory of gradient flows in probability spaces \citep{zhang2018policy,moskovitz2021efficient,ziesche2023wasserstein,pfau_wasserstein_2025}.
In contrast to KL-based methods, which treat actions as independent categories, Wasserstein-based approaches inherently respect the geometry of the action space, capturing meaningful notions of proximity between actions \citep{pacchiano2020learning, moskovitz2021efficient, song_provably_2023}. The resulting stochastic policy updates rely on the gradient of the action-value function with respect to the action, drawing a close connection to deterministic policy gradients \citep{pfau_wasserstein_2025}.

While Wasserstein policy optimization offers a compelling alternative to KL-based methods, its theoretical foundations are far less developed.
\citep{terpin2022trust} uses Wasserstein distance as the trust region metric, but they explicitly leave the convergence analysis for future work. Using the JKO framework \citep{jordan1998variational} for Wasserstein gradient flows, \citet{zhang2018policy} established asymptotic convergence of the entropy-regularized problem when the policy distribution is approximated by particles.
For finite action spaces, \citet{song_provably_2023} proved global convergence as the Wasserstein penalty coefficient vanishes.
Beyond these results, however, convergence guarantees in the more general setting of continuous action spaces---particularly for parametric policies beyond particle approximations (e.g., mixtures of Gaussians)---remain, to the best of our knowledge, an open question.

In this paper, we introduce a version of the Wasserstein policy gradient, which we term Wasserstein Proximal Policy Gradient (WPPG).
Our main findings are as follows.

\begin{itemize}[leftmargin=2em]
\item Our WPPG update introduces a new scheme for optimizing stochastic policies. Via an operator-splitting, we decompose the Wasserstein proximal policy update into two steps: a Wasserstein transport step that shifts actions to increase the action-value function, followed by a heat flow step that injects Gaussian noise to account for entropy regularization. When instantiated with parametric policies, it does not require access to the policy distribution's (log-)density or its gradient.
This enables a novel approach to policy optimization with \emph{implicit policies} \citep{tang2018implicit}. Empirically, the resulting algorithm is simple to implement and demonstrates competitive performance on standard continuous-control benchmarks.

\item We establish a linear convergence rate of WPPG for the entropy-regularized problem, under assumptions that can be verified directly. Our analysis applies to both the exact and approximate value function estimation.
\end{itemize}

\subsection{Related Works}

\paragraph{On Wasserstein policy update}
\citet{zhang2018policy} formulate continuous-time entropy-regularized policy optimization as a gradient flow of an energy functional, and derive its discrete-time counterpart via the JKO scheme \citep{jordan1998variational}, parameterizing policies with particles or energy-based models.
Related gradient-flow perspectives also appear in \citet{richemond2018diffusing}, and in \citet{ziesche2023wasserstein} for mixtures of Gaussian policies.
\citet{moskovitz2021efficient} introduce a Wasserstein trust-region policy update and develop efficient kernel-based estimators.
\citet{song_provably_2023} study Wasserstein and Sinkhorn trust-region updates in finite action spaces, with an extension to one-dimensional continuous control, and derive closed-form policy updates via duality.
More recently, \citet{pfau_wasserstein_2025} propose a Wasserstein gradient flow–inspired update by projecting the Wasserstein gradient flow on parametric manifold using KL divergence.

Most of the above-mentioned Wasserstein policy updates rely on the (log-)density of the policy distribution (or probability mass function for finite action spaces) and/or its score functions, with the exception of \citet{moskovitz2021efficient}, whose kernel-based method instead requires gradients of the kernel for implicit models.
In particular, our update, like those of \citet{zhang2018policy,pfau_wasserstein_2025}, depends on the gradient of the action-value function with respect to the action. However, unlike both of these two approaches, our method does not rely on the (log-)density of the policy distribution. This is achieved by handling the entropy term through Gaussian noise injection, rather than working directly with the density as in \citet{zhang2018policy}, and by projecting under the Wasserstein metric instead of the KL divergence, as in \citet{pfau_wasserstein_2025}.

We remark that other related approaches leverage Wasserstein geometry in different ways. For instance, \citet{pacchiano2020learning} compare policies in latent behavior spaces but do not focus on explicit policy updates, while \citet{abdullah2019wasserstein} use Wasserstein distances for robust uncertainty modeling, treating Wasserstein as a constraint on the transition dynamics of the environment rather than as a tool for defining learning dynamics.

\paragraph{On convergence analysis}
\citet{zhang2018policy} establish asymptotic convergence of Wasserstein policy optimization based on the JKO scheme when policies are approximated by particles.
\citet{song_provably_2023} proves linear convergence analysis for Wasserstein trust-region optimization on finite action spaces, but requires vanishing Wasserstein penalty coefficients.
Our convergence proof strategy parallels KL-based analyses such as \citet{lan2023policy} for mirror policy gradient in finite action spaces. However, instead of relying on KL-specific tools (e.g., the three-point identity), we develop new analyses tailored to Wasserstein geometry and adopt different assumptions on the problem.

\section{Preliminaries}

\subsection{Reinforcement Learning}
\paragraph{Markov decision processes.}
We consider an infinite-horizon discounted MDP
\(
\mathcal M=(\mathcal S,\mathcal A,P,r,\rho,\gamma)
\),
where \(\mathcal S\) is the state space, \(\mathcal A\) is the action space,
\(\gamma\in(0,1)\) the discount factor, \(\Prob(\cdot\mid s,a)\) the transition kernel,
\(r:\mathcal S\times\mathcal A\to\mathbb R\) the reward function,
and \(\rho\) the initial-state distribution.
A policy is a Markov kernel
\(\pi:\mathcal S\to \cP (\mathcal A)\) so that \(\pi(\cdot\mid s)\) is a probability
distribution on \(\mathcal A\) for each \(s\in\mathcal S\).
Our results apply to the case where $\cA$ is a general metric space.

For a policy \(\pi\), the value and action-value (Q) functions are
\[
\begin{aligned}
V^\pi(s)
&:=\mathop{\mathbb{E}}\limits_{\substack{a_t\sim \pi(\cdot\mid s_t)\\
s_{t+1}\sim \Prob(\cdot\mid s_t,a_t)}}\left[\sum_{t=0}^{\infty}\gamma^tr(s_t,a_t)\bigg|s_0=s\right],\\
Q^\pi(s,a)
&:= \mathop{\mathbb{E}}\limits_{\substack{a_t\sim \pi(\cdot\mid s_t)\\
s_{t+1}\sim \Prob(\cdot\mid s_t,a_t)}}\left[\sum_{t=0}^{\infty}\gamma^t\,r(s_t,a_t)\,\bigg|\,s_0=s,\ a_0=a\right].    
\end{aligned}
\]
We have the relationship that \[
V^\pi(s)= \E_{a\sim\pi(\cdot|s)} \left[Q^\pi(a,s)\right].
\]
The advantage function is \(A^\pi(s,a):=Q^\pi(s,a)-V^\pi(s)\).
Given an initial distribution \(\rho\), the performance (expected return) is
\(
J_\rho(\pi):=\mathbb E_{s_0\sim \rho}[\,V^\pi(s_0)\,]
\). The discounted state visitation probability is defined as $d_\rho^\pi(s)
:=(1-\gamma)\sum_{t=0}^\infty \gamma^t\,\mathbb P_\pi(s_t=s\mid s_0\sim\rho)$, with $\sum_{s} d_\rho^\pi(s)=1$.

\paragraph{Explicit vs.\ implicit policies.}
We distinguish explicit and implicit policies by whether the log-density is available.
An explicit policy is one for which the policy's log-density \(\log \pi_\theta(a\mid s)\) can be evaluated directly for any state-action pair \((s,a)\).
In contrast, an implicit policy is specified only through a transport map,
\[
a = g_\theta(s,Z), \qquad Z \sim \nu,
\]
where \(\nu\) is an easy-to-sample distribution.
The induced policy \(\pi_\theta(\cdot\mid s) = (g_\theta(s,\cdot))_\# \nu\) generally has an unknown density, so \(\log \pi_\theta(a\mid s)\) is unavailable and its density is hard to compute.
A subtle but important distinction is that a pushforward representation does not automatically imply implicitness in our sense: for example, a Gaussian policy can be written as the transport
\(g_\theta(s,Z) = \mu_\theta(s) + \sigma_\theta(s) Z\),
yet it remains an \emph{explicit} policy because its log-density admits a closed-form expression.
Explicit policies are often more expressive than common explicit families (e.g., Gaussians), since a rich generator $g_\theta$
can represent complex and non-Gaussian(Multimodal) action distributions while remaining easy to sample from, which is particularly attractive in continuous-control tasks.

\subsection{Entropy Regularization}\label{sec:entropy}
Entropy regularization is widely used in reinforcement learning to prevent premature policy collapse and encourage exploration. It smooths the optimization landscape, reduces gradient variance, and promotes robust, generalizable strategies. These benefits make it a standard component in modern algorithms like soft Q-learning and Soft Actor-Critic (SAC).
Define the negative entropy of a policy $\pi$ at state $s$ as 
\[
\H^\pi(s) := \int_{a\in\cA}\pi(a|s)\log\pi(a|s)\d a.
\]
The entropy-regularized  discounted value is the value of \(\pi\) under the modified reward \(r_\tau(s,a):=r(s,a)-\tau\log\pi(a\mid s)\):
\[
V_\tau^\pi(s) := \E^\pi \Big[\sum_{t=0}^{\infty}\gamma^t\big(r(s_t,a_t)-\tau \H^\pi(s_t)\big) \Big| s_0=s\Big],
\]
Define the corresponding soft-\(Q\) function by
\[
Q_\tau^\pi(s,a)
:=
r(s,a) + \gamma \E\left[V_\tau^\pi(s')\mid s,a\right].
\]
Then the soft Bellman recursion can be written as
\begin{equation}\label{eq:soft_value}
\begin{aligned}
V_\tau^\pi(s)
&=
\E_{a\sim \pi(\cdot\mid s)}\left[Q_\tau^\pi(s,a) - \tau \log \pi(a\mid s)\right]
\\
&=
\E_{a\sim \pi(\cdot\mid s)}\left[Q_\tau^\pi(s,a)\right] - \tau \H^\pi(s).
\end{aligned}
\end{equation}

\paragraph{Connection to injected Gaussian noise}
In distributional optimization, entropy regularization corresponds to injecting Gaussian noise in Langevin dynamics. More precisely, consider a generic distributional optimization problem over probability measures \(\mu\in\mathcal P(\R^d)\):
\begin{equation}\label{eq:free_energy_generic}
\min_{\mu\in\mathcal P(\R^d)}
\ \mathcal F(\mu)
:= \langle U, \mu \rangle + \tau \H(\mu).
\end{equation}
A standard way to solve it is the Langevin dynamics:
\begin{equation}\label{eq:langevin_generic}
x_{k+1}
=
x_k - \eta \nabla U(x_k) + \sqrt{2\eta\tau} \xi_k,
\quad
\xi_k\sim\mathcal N(0,I).
\end{equation}
Hence, the entropy penalty \(\tau\) appears exactly as the variance scale of the injected Gaussian noise.

\paragraph{Policy-entropy vs.~parameter-entropy.}
It is important to distinguish the above \emph{policy entropy} regularization from approaches that inject Gaussian noise at the level of policy parameters.
For example, \citet{optrl2019_langevin} models a distribution over policy parameters and performs Langevin-type updates on the parameters.
While this also promotes exploration, it does not directly regularize the conditional action distribution \(\pi(\cdot\mid s)\), and therefore does not yield the same soft Bellman structure in~\eqref{eq:soft_value}.
In particular, parameter-space randomness can induce state-dependent action stochasticity only indirectly through the parameter-to-action map, whereas policy-entropy regularization explicitly regularizes uncertainty at the level of the action distribution.

\subsection{Wasserstein Policy Gradient}
\label{sec: Wasserstein Proximal Gradient}

We consider a parameterized stochastic policy $\pi_\theta(\cdot\mid s)$ on a continuous action space.
Let $J(\pi_\theta)$ denote the expected discounted return obtained by executing $\pi_\theta$.
Policy gradient methods update $\theta$ so as to increase $J(\pi_\theta)$ by following an estimator of its gradient.
A central identity is the policy gradient theorem, which expresses the gradient in terms of the action-value function:
\[
\nabla_\theta J(\pi_\theta)
=
\mathbb{E}_{s\sim d^{\pi_\theta},\,a\sim \pi_\theta(\cdot\mid s)}
\Big[\nabla_\theta \log \pi_\theta(a\mid s) Q^{\pi_\theta}(s,a)\Big].
\]
Thanks to this result, most policy gradient methods work with \emph{explicit} policies.

Wasserstein policy optimization operates directly in the space of action distributions $\pi(\cdot\mid s)$ equipped with the 2-Wasserstein metric.
This viewpoint controls how the policy changes in action space and connects policy improvement to Wasserstein gradient flows and optimal transport geometry.
Fix a state $s$ and iteration $k$.
The Wasserstein proximal policy update is given by
\begin{equation}
\label{eq:jko_policy_short}
\begin{aligned}
\pi_{k+1}&(\cdot\mid s)
\in
\argmax_{\pi(\cdot\mid s)}\\
&
\langle Q^{\pi_k}(s,\cdot),\,\pi(\cdot\mid s)\rangle
-\frac{1}{2\eta}\mathsf{W}_2^2\big(\pi(\cdot\mid s), \pi_k(\cdot\mid s)\big),
\end{aligned}
\end{equation}
where $\mathsf{W}_2$ is 2-Wasserstein metric, $\eta>0$ is a step size and $\langle f,\mu\rangle:=\int f\,\mathrm d\mu$.
Equivalently, $Q^{\pi_k}(s,\cdot)$ can be replaced by the advantage function
$A^{\pi_k}(s,\cdot)=Q^{\pi_k}(s,\cdot)-V^{\pi_k}(s)$, since $V^{\pi_k}(s)$ is constant with respect to the optimized action distribution.
The Wasserstein penalty in \eqref{eq:jko_policy_short} plays the role of a trust region, encouraging local improvement while preventing large shifts of the policy in action space. Alternatively, one can work with an explicit trust-region constraint on the Wasserstein distance.

A complementary implementation strategy is from the perspective of Wasserstein gradient flow, which realizes the Wasserstein update through the action-gradient of the critic.
Concretely, actions are transported along the drift induced by $\nabla_a Q^{\pi_k}(s,a)$:
\[
a \ \mapsto\ a + \eta\,\nabla_a Q^{\pi_k}(s,a),
\qquad a\sim \pi_k(\cdot\mid s),
\]
and the updated policy is obtained by pushing forward $\pi_k(\cdot\mid s)$ through this map.

To update the policy parameters $\theta$, existing literature works within the tabular or explicit policy setting, relying on access to the policy density or score function. Different from these works, our approach in the next section develops a method aimed at implicit policies that bypasses the need for density or score function evaluations.

\section{Wasserstein Proximal Policy Gradient for Entropy-Regularized RL}\label{sec:wppg}

\subsection{Derivation of the Policy Update}

For an initial-state distribution, we evaluate a policy $\pi$ by the expected performance
\[
J_\rho(\pi)
:=\E_{s_0\sim\rho}\big[V_\tau^\pi(s_0)\big].
\]
Substituting the soft Bellman identity \eqref{eq:soft_value} into $J_\rho(\pi)$ yields a composite objective consisting of an expectation term and an entropy term. We propose to optimize $J_\rho(\pi)$ via a proximal gradient scheme in the Wasserstein space of policies.

To this end, we first note that the first variation of $J_\rho(\pi)$ with respect to the conditional distribution $\pi(\cdot\mid s)$ can be expressed as
\[
\frac{\delta}{\delta \pi(\cdot\mid s)} J_\rho(\pi)(a) = \frac{1}{1-\gamma} d_\rho^{\pi}(s) Q_\tau^{\pi}(s,a),
\]
where $d_\rho^{\pi}(s)$ is the discounted visitation distribution under $\pi$, up to an additive constant in $a$.
Thereby, at iteration $k$, the linearization of $J_\rho(\pi)$ around the current policy $\pi_k$ is given by
\[
\frac{1}{1-\gamma}\E_{s\sim d_\rho^{\pi_k}} \E_{a\sim \pi(\cdot\mid s)} [Q_\tau^{\pi_k}(s,\cdot)].
\]
Consider a weighted Wasserstein distance that aggregates across states with weights $d_\rho^{\pi_k}(s)/(1-\gamma)$. Then the resulting Wasserstein proximal update is given by
\begin{equation}\label{eq:WPPG}
\begin{aligned}
& \pi_{k+1}(\cdot\mid\cdot)
\in
\argmax_{\pi} \Big\{ \E_{s\sim d_\rho^{\pi_k}} \E_{a\sim \pi(\cdot\mid s)} [Q_\tau^{\pi_k}(s,\cdot) ]
\\
& - \frac{1}{2\eta}
\E_{s\sim d_\rho^{\pi_k}}
\big[
\mathsf{W}_2^2\big(\pi(\cdot\mid s),\pi_k(\cdot\mid s)\big)
\big] - \tau \E_{s\sim d_\rho^{\pi_k}}[\H^\pi(s)] \Big\}.
\end{aligned}
\tag{\texttt{WPPG}}
\end{equation}
Because it decomposes across states, this is equivalent to 
\begin{equation}
\begin{aligned}
\pi_{k+1}(\cdot\mid s) 
\in &
~\argmax_{\pi(\cdot\mid s)} 
\Big\{
\langle Q_\tau^{\pi_k}(s,\cdot), \pi(\cdot\mid s) \rangle \\
& -\frac{1}{2\eta}
\wass_2^2\big(\pi(\cdot\mid s), \pi_k(\cdot\mid s)\big) -\tau \H^q(s)
\Big\},
\end{aligned}
\tag{\texttt{WPPG}$_s$}\label{eq:WPPGs}
\end{equation}
which is termed the Wasserstein proximal policy gradient (WPPG).
When \(\tau=0\), it reduces to the entropy-free Wasserstein proximal update \eqref{eq:jko_policy_short}.
When $\mathsf{W}_2^2$ is replaced by a Bregman divergence, \eqref{eq:WPPGs} recovers mirror-descent style policy optimization \cite{lan2023policy}.
Note that $Q_\tau^{\pi_k}(s,a)$ can replaced by the advantage function $A_\tau^{\pi_k}(s,a)=Q_\tau^{\pi_k}(s,a)-V_\tau^{\pi_k}(s)$, since $V_\tau^{\pi_k}(s)$ does not depend on $\pi(\cdot\mid s)$.
We would like to emphasize that this per-state formulation will be mainly used for theoretical analysis; below, we will present a practical scheme that works with \emph{implicit policies}.

To solve \eqref{eq:WPPG} (or \eqref{eq:WPPGs}), we employ a (Lie–Trotter) operator splitting that decouples the Wasserstein proximal component from the entropy component by maintaining a full-step sequence $\{\pi_k\}$ and a half-step sequence $\{\pi_{k-\frac12}\}$. 
We perform the following two steps sequentially.

First, we compute an intermediate policy $\pi_{k+\frac12}(\cdot\mid s)$ by
\begin{equation}\label{eq:split_transport}
\begin{aligned}
&\pi_{k+\frac12} (\cdot\mid s)
\in
\argmax_{\pi(\cdot\mid s)}\\
& \quad \langle Q_\tau^{\pi_{k-\frac12}}(s,\cdot), \pi(\cdot\mid s)\rangle
-\frac{1}{2\eta}\mathsf{W}_2^2\big(\pi(\cdot\mid s),\pi_{k-\frac12}(\cdot\mid s)\big),
\end{aligned}
\end{equation}
Here, the half sequence can be initialized at any implicit policy $\pi_{\frac12} = g_{0\#}\nu$.
Second, to handle the entropy term, using its connection to injected Gaussian noise (Section \ref{sec:entropy}),
we convolve $\pi_{k+\frac12}(\cdot\mid s)$ with a Gaussian kernel:
\begin{equation}
\pi_{k+1}(\cdot\mid s)
= \pi_{k+\frac12}(\cdot\mid s)*\mathcal N(0,2\tau\eta I\big).
\label{eq:split_heat}
\end{equation}
Equivalently, if $A \sim \pi_{k+\frac12}(\cdot\mid s)$ and $\xi\sim\mathcal N(0,I)$ are independent, then
$A + \sqrt{2\tau\eta} \xi \sim \pi_{k+1}(\cdot\mid s)$.

\subsection{Wasserstein Proximal Gradient for Implicit Policies}
We now discuss how to solve \eqref{eq:split_transport} for implicit polices.
Recall that an implicit policy at state \(s\) is represented as a pushforward \(\pi(\cdot\mid s)=g(s,\cdot)_\#\nu\), where $g$ is a sufficiently expressive generative model.
We represent the solution to \eqref{eq:split_transport} as $\pi_{k+\frac12}(\cdot\mid s)=g_{k+\frac12}(s,\cdot)_\#\nu$.
Thereby, for implicit polices, the Wasserstein proximal step \eqref{eq:split_transport} becomes
\[\begin{aligned}
  \max_{g(s,\cdot)} 
  \E_\nu[Q_\tau^{\pi_{k-\frac12}}(s,g(s,Z))] - \frac{1}{2\eta}\wass_2^2(g(s,\cdot), \pi_{k-\frac12}(\cdot\mid s)).
  \end{aligned}
\]
The next proposition shows an exact equivalent reformulation of this problem, as a result of the interchangeability principle, provided that the implicit policy class is expressive.
\begin{proposition}\label{prop:implicit_transport}
Assume the range of the function $g(s,\cdot)$ covers the action space $\cA$. Then \eqref{eq:split_transport} can be equivalently solved via
\begin{equation}
\begin{aligned}
g_{k+\frac12}(s,\cdot)
\in
&~ \argmax_{g(s,\cdot)}
Q_\tau^{\pi_{k-\frac12}}\big(s,g(s,Z)\big)
\\
&\quad -\frac{1}{2\eta}\big\|g(s,Z)-g_{k-\frac12}(s,Z)\big\|^2,
  \end{aligned}
\label{eq:param_g_update}
\end{equation}
with $\pi_{k+\frac12}=g_{k+\frac12}(s,\cdot)_\#\nu$.
\end{proposition}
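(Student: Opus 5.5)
We reduce both problems to a pointwise maximization over actions and then apply the interchangeability principle. Fix $s$ and write $Q:=Q_\tau^{\pi_{k-\frac12}}(s,\cdot)$, $\mu:=\pi_{k-\frac12}(\cdot\mid s)=g_{k-\frac12}(s,\cdot)_\#\nu$. The statement implicitly treats \eqref{eq:param_g_update} as an expectation over $Z\sim\nu$ (or equivalently as a pointwise problem for $\nu$-a.e.\ $z$); we read it that way.

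\textbf{Step 1 (upper bound for \eqref{eq:split_transport}).} For any $\pi$ and any coupling $\gamma\in\Pi(\pi,\mu)$ we have
\[
\langle Q,\pi\rangle-\tfrac{1}{2\eta}\wass_2^2(\pi,\mu)\ \ge\ \int \Big[Q(a)-\tfrac{1}{2\eta}\|a-b\|^2\Big]\,\d\gamma(a,b) - \text{(slack)} ,
\]
with equality when $\gamma$ is optimal. Hence
\[
\sup_\pi\Big\{\langle Q,\pi\rangle-\tfrac{1}{2\eta}\wass_2^2(\pi,\mu)\Big\}=\sup_{\gamma:\,\gamma_2=\mu}\int\Big[Q(a)-\tfrac{1}{2\eta}\|a-b\|^2\Big]\d\gamma .
\]
The right side is at most $\int \phi(b)\,\d\mu(b)$, where
\[
\phi(b):=\sup_{a\in\cA}\Big\{Q(a)-\tfrac{1}{2\eta}\|a-b\|^2\Big\},
\]
which is the Moreau-type envelope. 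Using $\mu=g_{k-\frac12}(s,\cdot)_\#\nu$, this bound equals
\[
\E_\nu\Big[\phi\big(g_{k-\frac12}(s,Z)\big)\Big].
\]

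\textbf{Step 2 (interchangeability).} Since the range of $g(s,\cdot)$ may be all of $\cA$, the value $g(s,z)$ ranges freely over $\cA$ for each $z$. This is the decomposability requirement of the interchangeability principle (Rockafellar--Wets, Thm.\ 14.60). Applying it gives
\[
\sup_{g(s,\cdot)}\E_\nu\Big[Q(g(s,Z))-\tfrac{1}{2\eta}\|g(s,Z)-g_{k-\frac12}(s,Z)\|^2\Big]=\E_\nu\Big[\phi\big(g_{k-\frac12}(s,Z)\big)\Big],
\]
and any maximizer $g_{k+\frac12}$ satisfies, for $\nu$-a.e.\ $z$,
\[
g_{k+\frac12}(s,z)\in\argmax_{a}\Big\{Q(a)-\tfrac{1}{2\eta}\|a-g_{k-\frac12}(s,z)\|^2\Big\}.
\]
The hypotheses needed are: measurability of $Q$ (a normal integrand, which holds because $Q$ is continuous in $a$), existence of a measurable selection of the argmax, and integrability so that the expectations are finite.

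\textbf{Step 3 (attainment and matching).} Let $\pi^\star:=g_{k+\frac12}(s,\cdot)_\#\nu$. The pair $(g_{k+\frac12}(s,Z),\,g_{k-\frac12}(s,Z))$ is a coupling of $\pi^\star$ and $\mu$. Therefore
\[
\wass_2^2(\pi^\star,\mu)\le \E\|g_{k+\frac12}-g_{k-\frac12}\|^2,
\]
and so
\[
\langle Q,\pi^\star\rangle-\tfrac{1}{2\eta}\wass_2^2(\pi^\star,\mu)\ \ge\ \E_\nu\big[\phi(g_{k-\frac12}(s,Z))\big].
\]
Combined with the upper bound from Step 1, $\pi^\star$ attains the supremum in \eqref{eq:split_transport}, and the two optimal values coincide.

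Conversely, suppose $\pi$ is any maximizer of \eqref{eq:split_transport}. Take an optimal coupling $\gamma$ between $\pi$ and $\mu$. Equality in the chain of Step 1 forces $\gamma$ to be concentrated on pairs $(a,b)$ with $a\in\argmax\{Q(\cdot)-\frac{1}{2\eta}\|\cdot-b\|^2\}$. Disintegrating $\gamma$ with respect to $b$ and using the range assumption together with possible randomization through $\nu$ (for instance when $\nu$ is atomless), we realize this coupling by some map $g$, so every maximizer of \eqref{eq:split_transport} arises from \eqref{eq:param_g_update}. For the forward direction claimed in the statement, Steps 1 through 3 already suffice.

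\textbf{Main obstacle.} The hard part is the rigorous justification of the interchange in Step 2: we need a measurable selection of the pointwise argmax and integrability of $\phi\circ g_{k-\frac12}$. We would handle this by assuming $Q$ is upper semicontinuous and bounded above, so that the argmax is nonempty and the Kuratowski--Ryll-Nardzewski theorem provides a measurable selector. The rest of the argument is elementary coupling bookkeeping.
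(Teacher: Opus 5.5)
Your proposal is correct and follows essentially the paper's route (rewrite as a coupling problem with second marginal fixed to $\mu$, maximize pointwise to get the envelope $\int\phi\,\d\mu$, pull back along $g_{k-\frac12}(s,\cdot)$, apply interchangeability). The one difference is your closing step: you sandwich via the coupling $(g_{k+\frac12}(s,Z),g_{k-\frac12}(s,Z))$ instead of using the paper's explicit selector $T_s$ and composition $g_{k+\frac12}=T_s\circ g_{k-\frac12}$, which is slightly stronger because it shows that every maximizer of \eqref{eq:param_g_update}, not just one, pushes forward to a solution of \eqref{eq:split_transport}.

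Drop the optional converse paragraph, since its justification is false in general. Atomlessness of $\nu$ does not let you realize a mass-splitting optimal coupling by a map. You would need the conditional law of $Z$ given $g_{k-\frac12}(s,Z)$ to be atomless, and this fails e.g.\ when $g_{k-\frac12}(s,\cdot)$ is injective and the argmax is non-unique on a set of positive $\mu$-measure. The statement does not need this direction.
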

This result shows that the Wasserstein proximal step for implicit policies can be implemented as a drift-map optimization, where the new drift map $g_{k+\frac12}$ is obtained by maximizing the expected Q-value minus a quadratic penalty that encourages proximity to the previous drift map $g_{k-\frac12}$.
Taking expectation over $s \sim d_\rho^{\pi_{k-\frac12}}$, we obtain a further equivalent form, provided that the visitation distribution has full support and enables the interchangeability principle:
\[\begin{aligned}
  g_{k+\frac12} \in \argmax_g\ \E_{s\sim d_\rho^{\pi_{k-\frac12}},Z\sim\nu} \Big[
  Q_\tau^{\pi_{k-\frac12}}\big(s,g(s,Z)\big)
  \\\quad - \frac{1}{2\eta}\big\|g(s,Z)-g_{k-\frac12}(s,Z)\big\|^2
  \Big]. 
  \end{aligned}
\]

This formulation leads to the following parametric update rule when $g$ is parameterized by $g_\theta$:
\[\begin{aligned}
  \theta_{k+\frac12} \in \argmax_\theta\ \E_{s\sim d_\rho^{\pi_{k-\frac12}},Z\sim\nu} \Big[
  Q_\tau^{\pi_{k-\frac12}}\big(s,g_\theta(s,Z)\big)
  \\\quad - \frac{1}{2\eta}\big\|g_\theta(s,Z)-g_{\theta_{k-\frac12}}(s,Z)\big\|^2
  \Big].
  \end{aligned}
\]
Finally, the entropy step is applied at sampling time by injecting Gaussian noise:
\[
A_{k+1}
= g_{{k+\frac12}}(s,Z) + \sqrt{2\tau\eta} \xi.
\label{eq:sampling_new_policy}
\]
Thereby, the WPPG scheme \eqref{eq:WPPG} admits a practical implementation for implicit policies via the two-step updates above. Crucially, optimizing the generator parameters $\theta$ does not require access to the policy log-density or its gradient, but only the action-gradient of the soft-$Q$ function. In an actor–critic implementation, this can be obtained from a differentiable critic network, and Appendix~\ref{entropy estimation} describes a procedure to estimate the soft-Q function for our implicit policy class.

It is worth contrasting WPPG with related methods. Although SAC \citep{haarnoja2018soft} uses the reparameterization trick, it still requires a policy family with a tractable log-density (e.g., Tanh–Gaussian) and does not directly extend to general pushforward policies whose densities are typically intractable. WPO \citep{pfau_wasserstein_2025}, includes a KL-based projection step that requires access to the policy log-density. In contrast, our update is derived under $\wass_2$ geometry only; making it compatible with expressive implicit stochastic policies.

\section{Convergence Analysis}

This section derives convergence guarantees for \eqref{eq:WPPG}, considering both exact (Section \ref{sec:Q:exact}) and inexact (Section \ref{sec:Q:inexact}) Q-functions. Our analysis follows the roadmap in \cite{lan2023policy}, but we provide special treatment for the Wasserstein metric in place of the Bregman divergence used therein.

\subsection{Exact Q-function}\label{sec:Q:exact}

We begin by stating the main assumptions. 
Let $\cP_2(\R^d)$ denote the space of probability measures on $\R^d$ with finite second moments, equipped with the 2-Wasserstein metric.

\begin{assumption}
\label{as:Bounded}

\textit{(Boundedness)} For every state $s$, the followings hold:
The reward $r(s,a)$ is uniformly bounded,i.e. $R_{\min}\le r(s,a)\le R_{\max}$ and the action space $\mathcal A$ is bounded by $ R_{A}$. 
\end{assumption}

\begin{assumption}
\label{as:entropy bounded}
The differential entropy $-\H^{\pi_0}(\cdot)$ of initial policy $\pi_0(\cdot|s)$ is uniformly lower bounded by $C_0$.
\end{assumption}

\begin{definition}[$T_2$ transportation-information inequality]\label{def:T2}
A probability measure $\mu \in \mathcal{P}_2(\mathbb{R}^d)$ is said to satisfy the
$T_2(\lambda)$ transportation-information inequality (for some $\lambda>0$) if,
for every $\nu \in \mathcal{P}_2(\mathbb{R}^d)$,
\begin{equation}\label{eq:T2}
\mathsf{W}_2^2(\nu,\mu)\ \le\ \frac{2}{\lambda}\,\mathrm{KL}(\nu\|\mu).
\tag{$T_2(\lambda)$}
\end{equation}
\end{definition}

\begin{assumption}[Uniform $T_2(\lambda)$ along the optimization trajectory]\label{as:T2}
There exists a constant $\lambda>0$ such that for all iterations $k\in\mathbb{N}$ and all states $s\in\mathcal S$,
the distribution $\pi_k(\cdot\mid s)$ satisfies the $T_2(\lambda)$ transportation-information inequality.
\end{assumption}

\begin{remark}
The first assumption \ref{as:Bounded} is quite common in practice. For the second assumption \ref{as:entropy bounded}, we only need to choose the initial policy to be a uniform distribution. We will verify Assumption~\ref{as:T2} in Appendix~\ref{subsectionB:T2}.

\end{remark}

Recall our WPPG scheme \eqref{eq:WPPG},
by the soft Bellman optimality conditions, there exists an optimal policy
$\pi^\star$ such that
\begin{equation*}
V_\tau^{\pi^\star}(s)\ge\ V_\tau^{\pi}(s)\qquad \text{for all } s\in\mathcal S, \quad \text{and any} \ \ \pi.
\label{eq:soft-dominance}
\end{equation*}
Hence, optimizing $V^\pi(\cdot)$ state-wise is equivalent to optimizing any strictly
positive weighted average of it. In particular, for any probability weights
$\rho\in\cP(\mathcal S)$ with full support,
\[
\pi^\star\in\argmax_{\pi} \mathbb{E}_{s\sim\rho} \left[V^\pi(s)\right]
\quad\text{s.t.}\quad \pi(\cdot\mid s)\in \cP(\mathcal A),\ \forall s\in\mathcal S.
\]
While the initial distribution $\rho$ can be chosen arbitrarily, 
we follow \citet{lan2023policy} and set $\rho = \nu^\ast$---the stationary distribution induced by the optimal policy $\pi^\ast$---to ease the proof. Notably, our \eqref{eq:WPPGs} algorithm designed to optimize the objective \eqref{defination of J} does not require access to $\nu^\ast$.
We thereby define the objective function as
\begin{equation}\label{defination of J}
J(\pi):=J_{\nu^\ast}(\pi) = \mathbb{E}_{s \sim \nu^\ast}\!\left[ V^\pi(s) \right],
\end{equation}
and our goal is to maximize $J_{\nu^\ast}(\pi)$ over all admissible policies:
\begin{equation}
\max_{\pi}\ J(\pi)
\quad \text{s.t.} \quad
\pi(\cdot \mid s) \in \cP_{\mathcal{A}},\ \forall s \in \mathcal{S}.
\end{equation}

Our main result in this subsection is as follows.

\begin{theorem}\label{thm:convergence}
Suppose Assumption \ref{as:T2} holds and set the step size $\eta_k=\eta =\frac{1}{\gamma\lambda\tau}$. 
Then for any $k \ge 0$, the iterates of \eqref{eq:WPPGs} satisfy
\begin{align*}
&J(\pi_{\ast}) - J(\pi_{k}) + \lambda\tau \mathcal{D}(\pi_{k}, \pi^\ast) \\ &\leq \gamma^k \left[ J(\pi^\ast) - J(\pi_{0}) + \lambda \tau \mathcal{D}(\pi_{0}, \pi^\ast)\right] 
\end{align*}
where $J$ is defined in \eqref{defination of J}, and
\begin{equation*}
\mathcal{D}(\pi_k, \pi^\ast) := 
\mathbb{E}_{s \sim \nu^\ast} \left[ \tfrac{1}{2}\,\wass_2^2\bigl(\pi_k(\cdot|s), \pi^\ast(\cdot|s)\bigr) \right].
\end{equation*}

Consequently, in order to achieve an error of $\mathcal{O}(\varepsilon+\delta)$, 
the required iteration complexity is
\begin{align*}
\mathcal{O}\!\left(\frac{1}{1-\gamma}\,\log \frac{\,J(\pi^\ast) - J(\pi _0) + \lambda \tau \mathcal{D}(\pi_0, \pi^\ast)}{\varepsilon}\right).
\end{align*}
\end{theorem}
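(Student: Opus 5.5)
The plan is to follow the template of \citet{lan2023policy} for regularized mirror policy descent, replacing the Bregman three-point lemma with a Wasserstein substitute. The substitute comes from the $T_2(\lambda)$ assumption combined with strong convexity of the negative entropy.

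\textbf{Step 1: a per-state three-point inequality.} Fix $s$ and write $\phi_k(\mu):= -\langle Q_\tau^{\pi_k}(s,\cdot),\mu\rangle + \tau\H(\mu) + \frac{1}{2\eta}\wass_2^2(\mu,\pi_k(\cdot\mid s))$, so that $\pi_{k+1}(\cdot\mid s)$ minimizes $\phi_k$. I will show that for every $p$ (in particular $p=\pi^\ast(\cdot\mid s)$),
\begin{align*}
&\eta\big[\langle Q_\tau^{\pi_k}(s,\cdot),p-\pi_{k+1}\rangle - \tau\H^{p}(s)+\tau\H^{\pi_{k+1}}(s)\big] \\
&\quad + \tfrac12\wass_2^2(\pi_{k+1},p) \le \tfrac12\wass_2^2(\pi_k,p) + \text{(slack)} .
\end{align*}
Two ingredients drive this. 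First, the Gibbs/variational identity for the entropic part: for any $\mu$, $\tau\KL(p\|\mu)$ equals the gap in $\langle -Q,\cdot\rangle+\tau\H$ measured against the Gibbs measure, which gives a strong-convexity bonus of $\tau\KL(p\|\pi_{k+1})$. Second, Assumption~\ref{as:T2}, which turns that bonus into $\tau\KL(p\|\pi_{k+1}) \ge \frac{\lambda\tau}{2}\wass_2^2(p,\pi_{k+1})$. For the transport term I expect to use the standard generalized-geodesic / EVI argument for JKO steps (as in Ambrosio–Gigli–Savaré): $\frac12\wass_2^2(\cdot,\pi_k)$ is $1$-convex along generalized geodesics based at $\pi_k$, which yields $\frac{1}{2\eta}[\wass_2^2(p,\pi_k)-\wass_2^2(p,\pi_{k+1})]$. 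Putting these together gives the key inequality
\begin{align*}
&\langle Q_\tau^{\pi_k},p-\pi_{k+1}\rangle-\tau\H^p+\tau\H^{\pi_{k+1}} \\
&\quad + \big(\tfrac1{\eta}+\lambda\tau\big)\tfrac12\wass_2^2(\pi_{k+1},p) \le \tfrac1{\eta}\tfrac12\wass_2^2(\pi_k,p).
\end{align*}
Applying it with $p=\pi_k(\cdot\mid s)$ shows monotone improvement: $\langle Q_\tau^{\pi_k},\pi_{k+1}\rangle-\tau\H^{\pi_{k+1}} \ge V_\tau^{\pi_k}(s)$. The performance difference lemma then gives $V_\tau^{\pi_{k+1}}\ge V_\tau^{\pi_k}$ pointwise.

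\textbf{Step 2: aggregate over $\nu^\ast$.} Take $p=\pi^\ast$ and average over $s\sim\nu^\ast$. The performance difference lemma with the stationary distribution $\nu^\ast$ gives
\[
(1-\gamma)\big(J(\pi^\ast)-J(\pi_k)\big)=\E_{\nu^\ast}\big[\langle Q_\tau^{\pi_k},\pi^\ast-\pi_k\rangle-\tau\H^{\pi^\ast}+\tau\H^{\pi_k}\big].
\]
Split $\pi^\ast-\pi_{k+1}=(\pi^\ast-\pi_k)+(\pi_k-\pi_{k+1})$. Use monotonicity to bound $\E_{\nu^\ast}[\langle Q_\tau^{\pi_k},\pi_{k+1}\rangle-\tau\H^{\pi_{k+1}}-V_\tau^{\pi_k}]$. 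This quantity is at least $\E_{\nu^\ast}[V_\tau^{\pi_{k+1}}-V_\tau^{\pi_k}]$ up to the factor $\gamma$, exactly as in Lan's Lemma: the identity $V^{\pi_{k+1}}-V^{\pi_k} = $ (one-step gain) $+\gamma P^{\pi_{k+1}}(V^{\pi_{k+1}}-V^{\pi_k})$ and nonnegativity give one-step gain $\ge (1-\gamma)(V^{\pi_{k+1}}-V^{\pi_k})$. Combining the pieces yields
\[
J(\pi^\ast)-J(\pi_{k+1}) + \tfrac{1/\eta+\lambda\tau}{1-\gamma}\mathcal D_{k+1}\le \gamma\big(J(\pi^\ast)-J(\pi_k)\big)+\tfrac{1/\eta}{1-\gamma}\mathcal D_k ,
\]
with the constants up to normalization as in Lan.

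\textbf{Step 3: choose $\eta$ and unroll.} With $1/\eta=\gamma\lambda\tau$ the two $\mathcal D$ coefficients have ratio $\gamma\lambda\tau/(\lambda\tau(1+\gamma))$. After rescaling this should reduce to the stated contraction $\Phi_{k+1}\le\gamma\Phi_k$, where $\Phi_k=J(\pi^\ast)-J(\pi_k)+\lambda\tau\mathcal D(\pi_k,\pi^\ast)$. I will adjust the bookkeeping of $1/(1-\gamma)$ factors so that the coefficient of $\mathcal D$ matches $\lambda\tau$. Iterating gives $\gamma^k\Phi_0$, and $\gamma^k\le e^{-(1-\gamma)k}$ gives the iteration complexity.

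\textbf{Main obstacle.} The hard part is Step 1, the Wasserstein three-point inequality. Unlike the Bregman case, $\wass_2^2(\cdot,\pi_k)$ is not convex along ordinary $\wass_2$ geodesics, and the entropic strong convexity is naturally measured in $\KL$ rather than $\wass_2$. I would first try the JKO/EVI inequality along generalized geodesics. The linear term $\langle Q,\cdot\rangle$ needs convexity along these curves too, which is delicate for non-concave $Q$. As a fallback I would treat $\langle Q,\cdot\rangle-\tau\H$ in the flat (mixture) geometry, where it is exactly $\tau$-strongly convex in $\KL$ via the Gibbs identity, and use $T_2(\lambda)$ for $\pi_{k+1}$ (Assumption~\ref{as:T2}) to convert this to $\wass_2^2$. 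I would then handle $\wass_2^2(\cdot,\pi_k)$ via its flat convexity (it is convex under mixtures) together with its first-order optimality condition given by the Kantorovich potential.
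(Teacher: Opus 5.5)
Step 1 as you state it is not available, and the rest of your argument leans on it. The $\frac1\eta$ part of the coefficient on $\frac12\wass_2^2(\pi_{k+1},p)$ is an EVI bonus. It requires $-\langle Q_\tau^{\pi_k}(s,\cdot),\cdot\rangle+\tau\H$ to be convex along generalized geodesics, which essentially means $Q_\tau^{\pi_k}(s,\cdot)$ must be concave. Nothing in the setting gives this, and entropy is only $0$-displacement convex, so $T_2$ cannot contribute there. Already for $\tau=0$, with $\pi_k=\delta_b$ and $p=\delta_c$, your inequality reduces to $Q(c)-\frac1{2\eta}|c-b|^2\le Q(Tb)-\frac1{2\eta}|Tb-c|^2$, where $Tb$ is the proximal point. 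This fails for a bimodal $Q$ with $c$ at the other peak. Nor can you add the $\lambda\tau$ bonus to an EVI: that bonus is a flat fact ($\KL$ along mixtures), and the two bonuses come from testing optimality along different curves.

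Your fallback is exactly the paper's lemma: the flat first-order condition, the Kantorovich-duality bound $\frac12\wass_2^2(p,\pi_k)\ge\frac12\wass_2^2(\pi_{k+1},\pi_k)+\langle\varphi,p-\pi_{k+1}\rangle$, the identity $\langle\ln\pi_{k+1},p-\pi_{k+1}\rangle=-\KL(p\|\pi_{k+1})+\H^{p}-\H^{\pi_{k+1}}$, and then $T_2$ for $\pi_{k+1}$. But it leaves $\frac12\wass_2^2(\pi_{k+1},\pi_k)$ on the left, not $\frac12\wass_2^2(\pi_{k+1},p)$. The flat Bregman gap of $\frac12\wass_2^2(\cdot,\pi_k)$ can vanish; the map is linear when $\pi_k$ is a Dirac. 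What you missed is that this weaker inequality already suffices. It yields $J(\pi^\ast)-J(\pi_{k+1})+\lambda\tau\mathcal D(\pi_{k+1},\pi^\ast)\le\gamma\big(J(\pi^\ast)-J(\pi_k)\big)+\frac1\eta\mathcal D(\pi_k,\pi^\ast)$, with no $1/(1-\gamma)$ factors. Setting $\frac1\eta=\gamma\lambda\tau$ turns this directly into $\Phi_{k+1}\le\gamma\Phi_k$, with no rescaling.

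Step 2 also has the key inequality backwards. Let $g(s):=\langle Q_\tau^{\pi_k}(s,\cdot),\pi_{k+1}(\cdot|s)-\pi_k(\cdot|s)\rangle-\tau\H^{\pi_{k+1}}(s)+\tau\H^{\pi_k}(s)$. You need the \emph{upper} bound $g(s)\le V_\tau^{\pi_{k+1}}(s)-V_\tau^{\pi_k}(s)$. The reason is that $-\E_{\nu^\ast}[g]$ must be bounded below by $J(\pi_k)-J(\pi_{k+1})$. That is what turns $(1-\gamma)\big(J(\pi^\ast)-J(\pi_k)\big)-\E_{\nu^\ast}[g]$ into $J(\pi^\ast)-J(\pi_{k+1})-\gamma\big(J(\pi^\ast)-J(\pi_k)\big)$.

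Your identity $V^{\pi_{k+1}}-V^{\pi_k}=g+\gamma P^{\pi_{k+1}}(V^{\pi_{k+1}}-V^{\pi_k})$, together with $V^{\pi_{k+1}}\ge V^{\pi_k}$, gives exactly that bound. The inequality you wrote instead, $g\ge(1-\gamma)(V^{\pi_{k+1}}-V^{\pi_k})$, is a lower bound on $g$, which does not help. It is also false in general: take a state where $\pi_{k+1}(\cdot|s)=\pi_k(\cdot|s)$ while successor states improve. The paper gets the correct bound from the performance difference lemma, $g\ge0$ at every state, and $d_s^{\pi_{k+1}}(s)\ge1-\gamma$.
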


\emph{Proof sketch.}
We sketch the key one-step recursion; The detailed proof is deferred to Appendix~\ref{app:exact}

Consider WPPG update~\eqref{eq:WPPGs}.
Let $\varphi^{\pi_{k+1}\to\pi_k}(s,\cdot)$ be an optimal Kantorovich potential for
$\big(\pi_{k+1}(\cdot|s),\pi_k(\cdot|s)\big)$ under quadratic cost.
Kantorovich duality yields the supporting-hyperplane inequality for
$F(q):=\tfrac12\W^2(q,\pi_k(\cdot|s))$:
\begin{align}
\frac12\W^2\!\big(p,\pi_k\big)
 \ge 
\frac12\W^2\!\big(&\pi_{k+1},\pi_k\big)
+\\
&\big\langle \varphi^{\pi_{k+1}\to\pi_k}(s,\cdot),\,p-\pi_{k+1}\big\rangle ,
\label{eq:sketch_w2_support}
\end{align}
for any competitor $p(\cdot|s)$ (we omit $(\cdot|s)$ inside $\W$ for readability).

Since \eqref{eq:WPPGs} is a concave maximization over $q(\cdot|s)\in\cP_{\cA}$,
its first-order optimality condition states that there exists an optimal Kantorovich
potential $\varphi^{\pi_{k+1}\to\pi_k}(s,\cdot)$ for
$\big(\pi_{k+1}(\cdot|s),\pi_k(\cdot|s)\big)$ such that, for all competitors $p(\cdot|s)$,
\begin{align*}
\Big\langle\,
Q_\tau^{\pi_k}(s,\cdot)
    -\tau\big(1+\ln \pi_{k+1}(\cdot|s)\big)
&-\frac{1}{\eta}\varphi^{\pi_{k+1}\to\pi_k}(s,\cdot),\\
&p(\cdot|s)-\pi_{k+1}(\cdot|s)
\Big\rangle
~\le~0 .
\end{align*}
Combining this optimal condition with \eqref{eq:sketch_w2_support} yields the proximal bound:
for all $p(\cdot|s)$,
\begin{align}
\eta\Big(
    \big\langle Q_\tau^{\pi_k}(s,\cdot),\,p-&\pi_{k+1}\big\rangle
    -\tau \H^{p}(s)+\tau \H^{\pi_{k+1}}(s)
\Big)
\\&+\frac12\W^2\!\big(\pi_{k+1},\pi_k\big)
\nonumber\\
&\qquad\le~
\frac12\W^2\!\big(p,\pi_k\big)
-\tau\eta\,\KL\!\big(p\;\|\;\pi_{k+1}\big).
\label{eq:sketch_prox}
\end{align}
Then by Assumption~\ref{as:T2}(which could be verified),
\begin{align}
\KL\!\big(p\;\|\;\pi_{k+1}\big)
~\ge~
\frac{\lambda}{2}\,\W^2\!\big(p,\pi_{k+1}\big),
\label{eq:sketch_T2_convert}
\end{align}
so the last term in \eqref{eq:sketch_prox} yields explicit $\W^2(p,\pi_{k+1})$ control.

Next, we relate the linear term to the value gap under $\nu^\ast$.
The entropy-regularized performance difference lemma and stationarity of $\nu^\ast$ under $\pi^\ast$
imply, for any policy $\pi$,
\begin{align}
&(1-\gamma)\,\E_{s\sim\nu^\ast}\!\big[V_\tau^{\pi^\ast}(s)-V_\tau^{\pi}(s)\big]
=\\
&\E_{s\sim\nu^\ast}\!\Big[
    \big\langle Q_\tau^{\pi}(s,\cdot),\,\pi^\ast(\cdot|s)-\pi(\cdot|s)\big\rangle
    -\tau\H^{\pi^\ast}(s)+\tau\H^{\pi}(s)
\Big].
\label{eq:sketch_steady_transform}
\end{align}
Moreover, applying the entropy-regularized performance difference lemma again with $(\pi',\pi)=(\pi_{k+1},\pi_k)$ and using
\eqref{eq:sketch_prox} with $p=\pi_k(\cdot|s)$ shows the per-state improvement integrand is nonnegative,
yielding the pointwise bound
\begin{align}
V_\tau^{\pi_{k+1}}(s)-V_\tau^{\pi_k}(s)
&~\ge~
\big\langle Q_\tau^{\pi_k}(s,\cdot),\,\pi_{k+1}(\cdot|s)-\pi_k(\cdot|s)\big\rangle
\\&-\tau\H^{\pi_{k+1}}(s)+\tau\H^{\pi_k}(s).
\label{eq:sketch_pointwise_improve}
\end{align}

Instantiate \eqref{eq:sketch_prox} at $p=\pi^\ast(\cdot|s)$, combine
\eqref{eq:sketch_T2_convert}--\eqref{eq:sketch_pointwise_improve}, and take expectation over $s\sim\nu^\ast$.
This yields the one-step recursion
\begin{align*}
J(\pi^\ast)&-J(\pi_{k+1})+\lambda\tau\,\mathcal D(\pi_{k+1},\pi^\ast)
\\~\le~
&\gamma\Big(
J(\pi^\ast)-J(\pi_k)+\tfrac{1}{\eta\gamma}\mathcal D(\pi_k,\pi^\ast)
\Big),
\end{align*}
where $J$ is defined in~\eqref{defination of J} and
$\mathcal D(\pi,\pi^\ast)=\E_{\nu^\ast}\big[\tfrac12\W^2(\pi(\cdot|s),\pi^\ast(\cdot|s))\big]$.
Choosing $\eta=\frac{1}{\gamma\lambda\tau}$ makes the quantity
$J(\pi^\ast)-J(\pi_k)+\lambda\tau\,\mathcal D(\pi_k,\pi^\ast)$
geometrically contract with rate $\gamma$, and iterating yields the theorem.

\begin{remark}
A closely related result is \citet[Theorem~5]{song_provably_2023}. 
They study the unregularized setting with finite action spaces (and a one-dimensional continuous extension), whereas we analyze the entropy-regularized problem on continuous action spaces. 
Technically, their analysis bounds the deviation between the Wasserstein update and classical policy iteration via a uniform bound on pairwise distances between discrete actions, without exploiting the Wasserstein geometry. Consequently, they require an increasing step-size schedule (i.e., decreasing $\beta$), leading to an $O(1/\varepsilon)$ step size to reach $\mathcal{O}(\varepsilon)$ accuracy. 
In contrast, we achieve the same accuracy with a constant step size independent of $\varepsilon$; see Section~\ref{subsection:Cproof theorem}.
\end{remark}

\begin{remark}
Our result is reminiscent of the linear convergence guarantee for mirror descent policy optimization in \citet{lan2023policy}, but the analyses differ in essential ways. 
While \citet{lan2023policy} relies on the KL geometry and the associated three-point (Bregman) lemma, our proof is developed in the $W_2$ geometry, where such a lemma is unavailable. 
Instead, we combine Wasserstein descent estimates with transportation-information inequalities to obtain a linear rate; see details in Section~\ref{subsection:Cproof theorem}.

More importantly, we exploit the structure of the policy trajectory $\{\pi_k\}$: a refined analysis of the corresponding Kantorovich potentials and value functions yields a uniform $T_2$ inequality along the entire trajectory, which is crucial for controlling the Wasserstein proximal term; see details in Section~\ref{subsectionB:T2}.
\end{remark}

\subsection{Inexact Q-function}\label{sec:Q:inexact}
In practice, the exact action-value function $Q^{\pi_k}$ is rarely available, 
since computing it requires either full knowledge of the environment dynamics 
or an infinite number of Monte Carlo samples. Instead, one typically constructs 
a stochastic estimator $Q^{\pi_k, \xi_k}$ from finite trajectories, 
temporal-difference updates, or function approximation. 

In this case, the \eqref{eq:WPPG} update is defined by substituting the exact value function $Q^{\pi_k}$ in \eqref{eq:w2-prox-step} with its stochastic 
estimator $Q^{\pi_k, \xi_k}$. Formally, the update rule is given by
\begin{equation}\label{eq:stochastic_wppg}
\begin{aligned}
\pi_{k+1}(\cdot\mid s)
&\in \arg\max_{q(\cdot\mid s)}
\Big\{
\langle Q^{\pi_k,\xi_k}(s,\cdot),\, q(\cdot\mid s)\rangle
\\&- \tau\, \H^{q}(s) 
- \frac{1}{2\eta_k}\,
\wass_2^2\!\big(q(\cdot\mid s), \pi_k(\cdot\mid s)\big)
\Big\}.
\end{aligned}
\end{equation}

Such an estimator inevitably introduces both variance and bias, which can accumulate across iterations and significantly affect policy updates. To ensure that our analysis remains tractable while still capturing realistic scenarios, we impose mild conditions on the stochastic approximation and estimation error.

\begin{assumption}\label{assumption:stochastic Q}
For each iteration $k \ge 0$, the stochastic estimator $Q^{\pi_k, \xi_k}$ satisfies
\begin{align}
\E_{\xi_k}\left[ Q^{\pi_k, \xi_k} \right] &= \bar{Q}^{\pi_k}, \label{eq:assump1} \\ \big\| \bar{Q}^{\pi_k} - Q^{\pi_k} \big\|_\infty &\le \epsilon_k,\label{eq:assump2}\\\E_{\xi_k}\left[ \big \| \nabla _aQ^{\pi_k, \xi_k} - \nabla_a Q^{\pi_k} \big\|_{2,\infty}^2 \right] &\le \sigma_k^2.        \label{eq:assump3}
\end{align}
Where $\|\cdot\|_\infty$ is the uniform norm over $(s,a) \in \cS \times \cA$, 
and for any function $f$, $\|\cdot\|_{2,\infty}$ is defined as
\[
\| f \|_{2,\infty} := \sup_{(s,a) \in \cS \times \cA} \| f(s,a) \|_2.
\]
\end{assumption}

This assumption is similar to \cite{lan2023policy}, except that \eqref{eq:assump3} concerns the action-gradient of the $Q$-function, which is needed in our Wasserstein policy update.
We have the following convergence result.

\begin{theorem}\label{thm:inexactconvergence}
Suppose Assumptions \ref{as:T2} and \ref{assumption:stochastic Q} hold, and   for all $k \ge 0$, $\epsilon_k\le \epsilon, \sigma_k\le\sigma$ and $Q^{\pi_k, \xi_k}$ are uniformly bounded. Then the iterates of \eqref{eq:stochastic_wppg} using step size $\eta_k=\eta =\frac{1}{\gamma\lambda\tau}$ satisfies
\begin{equation}\label{eq:main_ineq}
\begin{aligned}
\mathbb{E}_{\xi_{0:k-1}}\!\Big[
&J(\pi^\ast) - J(\pi_k) + \lambda\tau\, D(\pi_k,\pi^\ast)
\Big]
 \\
& \le \gamma^k \Big[
J(\pi^\ast) - J(\pi_0) + \lambda\tau\, D(\pi_0,\pi^\ast)
\Big]  + \mathcal{O}(\epsilon+\sigma).
\end{aligned}
\end{equation}

where $J$ is defined in \eqref{defination of J}, and
$\mathcal{D}(\pi_k, \pi^\ast) := 
\mathbb{E}_{s \sim \nu^\ast} \left[ \tfrac{1}{2}\,\wass_2^2\bigl(\pi_k(\cdot|s), \pi^\ast(\cdot|s)\bigr) \right].
$
Consequently, in order to achieve an error of $\mathcal{O}(\varepsilon+\epsilon+\sigma)$ in expectation, 
the required iteration complexity is
\[
\mathcal{O}\!\left(\frac{1}{1-\gamma}\,\log \frac{\,J(\pi^\ast) - J(\pi_0) + \lambda \tau \mathcal{D}(\pi_0, \pi^\ast)}{\varepsilon}\right).
\]
\end{theorem}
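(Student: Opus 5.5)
The plan is to rerun the one-step recursion from the proof sketch of Theorem~\ref{thm:convergence}, now with $Q^{\pi_k,\xi_k}$ in place of $Q_\tau^{\pi_k}$ inside the proximal step, and to collect the discrepancy into an additive error term that a geometric sum bounds by $\mathcal O(\epsilon+\sigma)$.

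\emph{Inexact proximal bound.} Since \eqref{eq:stochastic_wppg} is the same concave problem with the linear term $Q^{\pi_k,\xi_k}$, its first-order optimality condition, combined with the Kantorovich supporting-hyperplane inequality \eqref{eq:sketch_w2_support} and Assumption~\ref{as:T2} as in \eqref{eq:sketch_prox}--\eqref{eq:sketch_T2_convert}, gives for every competitor $p$:
\begin{align*}
\eta\Big(\langle Q^{\pi_k,\xi_k}(s,\cdot),p-\pi_{k+1}\rangle-\tau\H^{p}(s)+\tau\H^{\pi_{k+1}}(s)\Big)+\tfrac12\W^2(\pi_{k+1},\pi_k)\le \tfrac12\W^2(p,\pi_k)-\tfrac{\lambda\tau\eta}{2}\W^2(p,\pi_{k+1}).
\end{align*}
Writing $\delta_k:=Q^{\pi_k,\xi_k}-Q_\tau^{\pi_k}$, I then split the linear term into the exact part plus $\langle\delta_k,p-\pi_{k+1}\rangle$.

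\emph{Controlling the error term.} Take $p=\pi^\ast$ and note that $\pi_{k+1}$ depends on $\xi_k$ while $\pi^\ast$ and $\pi_k$ do not. I decompose
\[
\langle\delta_k,\pi^\ast-\pi_{k+1}\rangle=\langle\delta_k,\pi^\ast-\pi_k\rangle+\langle\delta_k,\pi_k-\pi_{k+1}\rangle.
\]
\begin{itemize}
\item For the first piece, conditioning on $\xi_{0:k-1}$ and using \eqref{eq:assump1}--\eqref{eq:assump2} gives a bound of $2\epsilon_k$.
\item For the second piece I use the Wasserstein geometry, since $\|\delta_k\|_\infty$ itself is not assumed small. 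With an optimal coupling $T$ from $\pi_k$ to $\pi_{k+1}$, write
\[
\langle\delta_k,\pi_{k+1}-\pi_k\rangle=\E\big[\delta_k(s,T(a))-\delta_k(s,a)\big].
\]
The mean-value theorem and Cauchy--Schwarz bound this by $\|\nabla_a\delta_k\|_{2,\infty}\W(\pi_k,\pi_{k+1})$.
\item I then split $\nabla_a\delta_k$ into the noise part, controlled by $\sigma_k$ via \eqref{eq:assump3}, and the bias part $\nabla_a(\bar Q^{\pi_k}-Q^{\pi_k})$. This is delicate, since only the sup-norm of the bias is assumed small. I would handle it either by adding a Lipschitz assumption on the bias, or by not splitting and bounding $\E\|\nabla_a\delta_k\|_{2,\infty}^2$ directly, which is what \eqref{eq:assump3} literally controls.
\item Young's inequality gives $\eta\|\nabla_a\delta_k\|\W\le \tfrac12\W^2(\pi_{k+1},\pi_k)+\tfrac{\eta^2}{2}\|\nabla_a\delta_k\|^2$, and the $\W^2$ term is absorbed by the $\tfrac12\W^2(\pi_{k+1},\pi_k)$ on the left.
\end{itemize}
This leaves an additive $\mathcal O(\eta\epsilon_k+\eta^2\sigma_k^2)$, which after dividing by $\eta$ with $\eta=1/(\gamma\lambda\tau)$ fixed is $\mathcal O(\epsilon+\sigma^2)$. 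The stated $\mathcal O(\sigma)$ should come from a Cauchy--Schwarz variant, or simply under $\sigma\le1$; that bookkeeping is routine.

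\emph{Monotone improvement step.} This step replaces \eqref{eq:sketch_pointwise_improve}. With $p=\pi_k$, the inexact bound shows that the exact improvement integrand satisfies
\[
\langle Q_\tau^{\pi_k},\pi_{k+1}-\pi_k\rangle-\tau\H^{\pi_{k+1}}+\tau\H^{\pi_k}\ge -\langle\delta_k,\pi_{k+1}-\pi_k\rangle+\tfrac1\eta\W^2(\pi_{k+1},\pi_k)\text{-terms}.
\]
The right side is bounded below by the same $\mathcal O(\epsilon+\sigma)$ after Young's inequality. Feeding this into the performance difference lemma gives
\[
V_\tau^{\pi_{k+1}}(s)-V_\tau^{\pi_k}(s)\ge(\text{integrand})-\tfrac{1}{1-\gamma}\,\mathcal O(\epsilon+\sigma).
\]
This is the step I expect to be the main obstacle. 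The per-state integrand is no longer pointwise nonnegative, so the error is amplified by the visitation distribution. Also, I need uniform boundedness of $Q^{\pi_k,\xi_k}$ for the optimality condition and Assumption~\ref{as:T2} to remain valid along the stochastic trajectory, which is why the theorem assumes it.

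\emph{Conclusion.} Combining these bounds with \eqref{eq:sketch_steady_transform} under $\nu^\ast$ and taking total expectation gives
\[
\E[\Delta_{k+1}]\le\gamma\,\E[\Delta_k]+C(\epsilon+\sigma),\qquad \Delta_k:=J(\pi^\ast)-J(\pi_k)+\lambda\tau\mathcal D(\pi_k,\pi^\ast).
\]
Unrolling yields $\gamma^k\Delta_0+C(\epsilon+\sigma)/(1-\gamma)$, which is \eqref{eq:main_ineq} with constants depending on $\gamma,\lambda,\tau$. The iteration complexity then follows as in Theorem~\ref{thm:convergence}, using $\log(1/\gamma)\ge1-\gamma$.
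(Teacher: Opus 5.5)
Your proposal is correct and follows essentially the paper's route. The shared steps are:
\begin{itemize}
\item the inexact proximal inequality at $p=\pi^\ast$ and at $p=\pi_k$;
\item the $2\epsilon_k$ bound on $\langle \bar Q^{\pi_k}-Q^{\pi_k},\pi^\ast-\pi_k\rangle$, valid because $\pi_k$ and $\pi^\ast$ are $\xi_{0:k-1}$-measurable;
\item the optimal-coupling, mean-value and Young bound on $\langle Q^{\pi_k,\xi_k}-Q^{\pi_k},\pi_{k+1}-\pi_k\rangle$, using the full gradient error in Assumption~\ref{assumption:stochastic Q}; this is your ``do not split'' option, which is exactly what the paper does;
\item the performance-difference step with $d_s^{\pi}(s)\ge 1-\gamma$, Lemma~\ref{VQtransformation}, and unrolling.
\end{itemize}

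The only difference is bookkeeping. The paper keeps $Q^{\pi_k,\xi_k}$ in both proximal inequalities and localizes to state $s$ using the pointwise nonnegativity of the \emph{stochastic} improvement integrand. It converts to $Q_\tau^{\pi_k}$ only in the final term $\langle\cdot,\pi^\ast-\pi_k\rangle$, so the gradient error is paid once, with the $1/(1-\gamma)$ amplification you anticipate. You convert immediately and pay it twice, but end with an error of the same order $\epsilon+\eta\sigma^2/(1-\gamma)$. As you correctly note, this is really $\mathcal O(\epsilon+\sigma^2)$ in the paper's derivation as well.
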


The detailed proof is deferred to Appendix~\ref{app:Inexact}.

Compared with Theorem \ref{thm:convergence}, this result involves additional bias arising from the estimation of the $Q$-function and variance from stochastic estimation of the $Q$-function.
Nonetheless, our analysis shows that
the total error does not accumulate across iterations, and
the convergence guarantee only incurs an $\mathcal{O}(\epsilon+\sigma)$ term in the final bound, 
rather than growing with the number of iterations.

\begin{figure*}[ht!]
    \centering
    \includegraphics[width=0.9\linewidth]{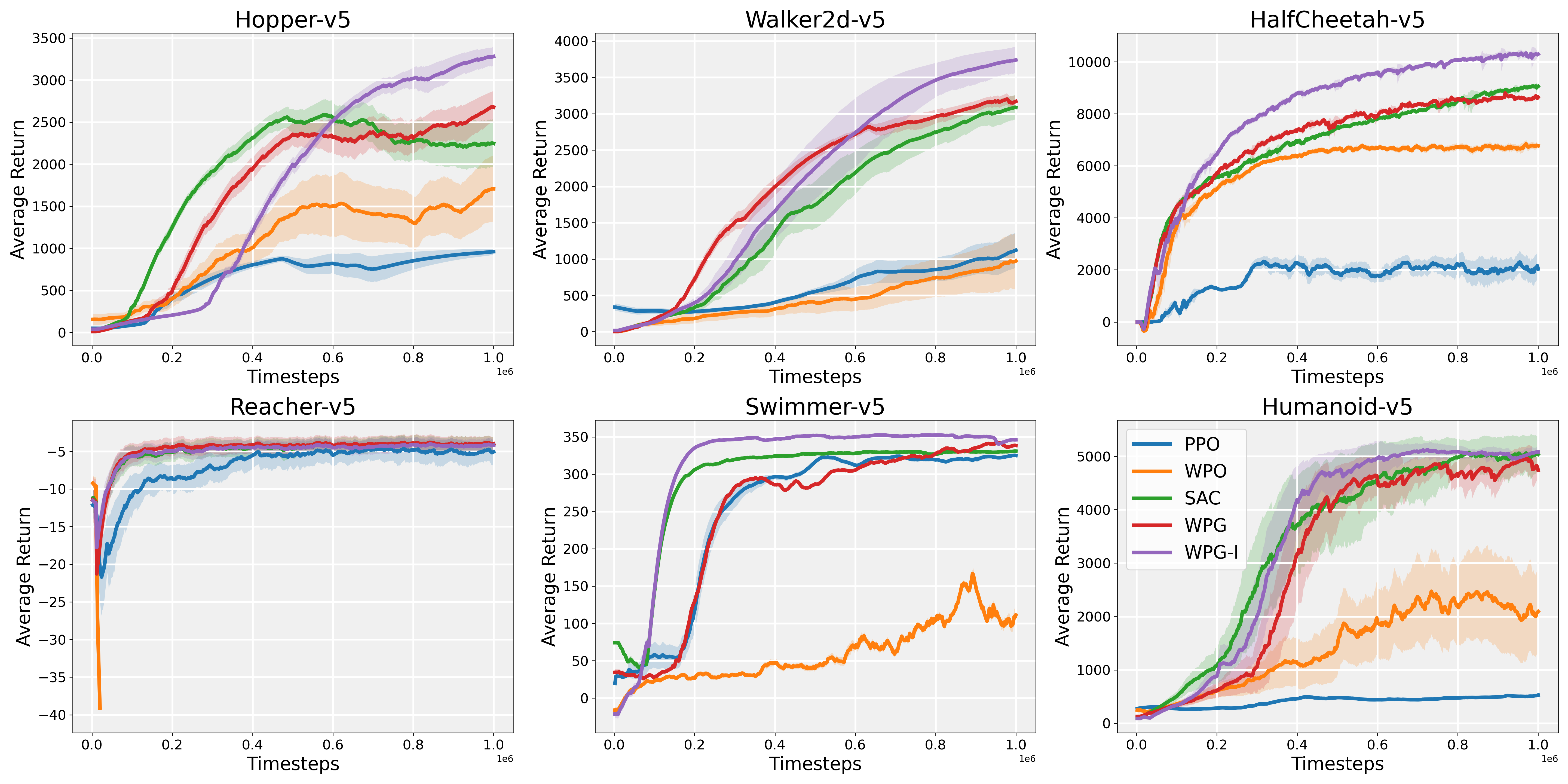}
    \caption{Training curves on MuJoCo continuous control benchmarks: Solid lines denote the mean episodic return, while shaded areas represent the 95\% confidence interval computed over 10 independent evaluation runs with different random seeds.}
    \label{fig:mujoco_grid}
\end{figure*}

\section{Experiments}

Our empirical study consists of two parts: comparative evaluation and ablation analysis. 
The comparative evaluation focuses on benchmarking our methods against representative baselines 
to assess overall performance. The ablation analysis, which is postponed to the appendix, investigates three key questions: 
(i) the effect of $\tau$ on WPPG, 
(ii) the impact of latent variable dimension on WPPG-I, 
and (iii) the role of double-$Q$ learning in WPPG.

\paragraph{Evaluation Tasks}
We evaluate our approach on a set of standard continuous control benchmarks from the MuJoCo suite \footnote{https://gymnasium.farama.org/environments/mujoco/}, including Hopper-v5, Walker2d-v5, HalfCheetah-v5, Reacher-v5, Swimmer-v5, and Humanoid-v5.These tasks cover a wide range of difficulties: from relatively low-dimensional and easy-to-learn tasks such as Swimmer and Hopper, to high-dimensional and challenging tasks such as Humanoid. 
\paragraph{Baseline Models}
We compare against three representative baselines: 
(i) PPO, a KL-proximal policy optimization method that employs clipped surrogate objectives to constrain successive policy updates and improve training stability \citep{schulman_proximal_2017};
(ii) SAC, a stochastic actor–critic algorithm formulated as an entropy-regularized policy optimization method, which augments the reward with a maximum-entropy term to encourage exploration and improve stability \citep{haarnoja2018soft}.
(iii) WPO, a Wasserstein-proximal actor–critic algorithm that replaces the KL divergence commonly used in proximal methods with the Wasserstein distance, thereby constraining successive policy updates under the geometry of optimal transport. \citep{pfau_wasserstein_2025}. 
Our proposed methods include WPPG, with a Gaussian MLP policy actor, and WPPG-I, with an implicit MLP policy actor. 
Hyperparameters for PPO are taken from the RL Zoo project\footnote{https://github.com/DLR-RM/rl-baselines3-zoo}, while those for SAC and WPO follow \citet{pfau_wasserstein_2025}. 

\paragraph{Experiment Setup}
For fair comparison, WPPG and WPPG-I adopt the double-$Q$ technique, taking the minimum of two $Q$-functions for both critic targets and action gradients, and WPO follows the original implementation provided in ACME\footnote{https://github.com/google-deepmind/acme}. To further ensure fairness, SAC is evaluated with entropy coefficient self-tuning disabled, since $\tau$ in WPPG and WPPG-I is also fixed rather than adaptively tuned while training. All off-policy methods share the same replay buffer structure. Additional implementation details for all methods and pseudo code of our algorithm are provided in the Appendix \ref{Numerical}.

\paragraph{Results and Discussion}
The learning curves for all tasks are shown in Figure \ref{fig:mujoco_grid}. 
Across the six MuJoCo benchmarks, WPPG demonstrates performance comparable to SAC, 
which can be attributed to the fact that both methods employ Gaussian MLP policies. 
This observation suggests that Wasserstein geometry can match, and in some cases even surpass, 
the effectiveness of KL-based geometry in policy optimization. 
More importantly, WPPG-I consistently outperforms all baselines, achieving higher returns across nearly all tasks. 
The success of WPPG-I further indicates that WPPG can be naturally extended to implicit policy classes: 
although we use only a simple MLP-based implicit policy here, the framework readily accommodates 
richer architectures. 
We refer readers to the Appendix \ref{Algorithm} and \ref{Implementation} for a detailed comparison between the two algorithms. 
In contrast, PPO lags behind due to slower learning and lower asymptotic performance, while WPO suffers 
from unstable convergence on challenging environments such as Humanoid and Swimmer and even fails to 
learn in Reacher. 
Overall, these results highlight that WPPG preserves the sample efficiency of off-policy actor-critic methods, 
while WPPG-I not only inherits these advantages but also demonstrates a consistent and significant margin over all baselines.

\section{Conclusion}

In this work, we proposed Wasserstein Proximal Policy Gradient (WPPG), a novel framework for policy optimization that leverages Wasserstein geometry to design proximal updates directly in distribution space. Our method eliminates the need for policy densities or score functions, making it naturally applicable to implicit policies. Theoretically, we established linear convergence guarantees under entropy regularization and mild conditions, covering both exact and approximate value function settings. To our best knowledge, this work is an early attempt to employ Wasserstein geometry for establishing global convergence guarantees. Empirically, WPPG with implicit policies demonstrates superior performance in challenging continuous-control benchmarks.

\section*{Impact Statement}
This paper presents work whose goal is to advance the field of Machine
Learning theory. There are many potential societal consequences of our work, none of which we feel must be specifically highlighted here.

\bibliography{example}
\bibliographystyle{icml2026}

\newpage
\appendix
\onecolumn

\appendix

\section{Proof of Proposition \ref{prop:implicit_transport}}\label{app:transport}

Fix a state $s$. Abbreviate $Q(a):=Q_\tau^{\pi_{k-\frac12}}(s,a)$ and $\mu:=\pi_{k-\frac12}(\cdot\mid s)$.

\begin{proof}

Write $\mu:=\pi_{k-\frac12}(\cdot\mid s)$.
Consider the proximal problem
\begin{equation}
\label{eq:state_prox_pi}
\sup_{\pi}\Big\{\langle Q,\pi\rangle-\tfrac1{2\eta}\wass_2^2(\pi,\mu)\Big\}.
\end{equation}
By the definition of $\wass_2$,
\[
W_2^2(\pi,\mu)=\inf_{\gamma\in\Gamma(\pi,\mu)}\int \|a-b\|^2\,\gamma(da,db).
\]
Hence, merging the negative inf with outer sup of \eqref{eq:state_prox_pi} and using the fact that $\langle Q,\pi\rangle=\int Q(a)\,\gamma(da,db)$ for any
$\gamma\in\Gamma(\pi,\mu)$, we can rewrite \eqref{eq:state_prox_pi} as
\begin{equation}
\label{eq:gamma_fixed_second}
\sup_{\gamma:\ \gamma_2=\mu}\ \int\Big(Q(a)-\tfrac1{2\eta}\|a-b\|^2\Big)\,\gamma(da,db),
\end{equation}
where $\gamma_B$ denotes the second marginal of $\gamma$ (so the second marginal is fixed to be $\mu$).

Disintegrate $\gamma(da,db)=\gamma(da\mid b)\,\mu(db)$ to obtain
\[
\sup_{\gamma(\cdot\mid b)}\ \int\left[\int\Big(Q(a)-\tfrac1{2\eta}\|a-b\|^2\Big)\,\gamma(da\mid b)\right]\mu(db).
\]
For each fixed $b$, the inner term is maximized by a Dirac mass at any maximizer of the integrand:
\[
T_s(b)\in\arg\max_{a\in\cA}\left\{Q(a)-\tfrac1{2\eta}\|a-b\|^2\right\},
\]
which is assumed to exist; otherwise we can consider an $\epsilon$-optimal selection.
Then an optimal coupling for \eqref{eq:gamma_fixed_second} is $\gamma^\star=(T_s,\mathrm{Id})_\#\mu$, and the corresponding optimal policy is
\begin{equation}
\label{eq:pi_pushforward}
\pi_{k+\frac12}(\cdot\mid s)= (T_s)_\#\mu.
\end{equation}
Moreover, the optimal value admits the envelope representation
\begin{equation}
\label{eq:envelope_b}
\sup_{\pi}\Big\{\langle Q,\pi\rangle-\tfrac1{2\eta}W_2^2(\pi,\mu)\Big\}
=
\int \sup_{a}\left\{Q(a)-\tfrac1{2\eta}\|a-b\|^2\right\}\,\mu(db).
\end{equation}

Finally, use the implicit representation $\mu=(g_{k-\frac12}(s,\cdot))_\#\nu$:
if $Z\sim\nu$ and $B=g_{k-\frac12}(s,Z)$ then $B\sim\mu$, so \eqref{eq:envelope_b} becomes
\begin{equation}
\label{eq:envelope_z}
\mathbb E_{Z\sim\nu}\left[\sup_{a}\left\{Q(a)-\tfrac1{2\eta}\|a-g_{k-\frac12}(s,Z)\|^2\right\}\right].
\end{equation}
By interchangeability, \eqref{eq:envelope_z} equals
\[
\sup_{g(s,\cdot)}\ \mathbb E_{Z\sim\nu}\left[
Q(g(s,Z))-\tfrac1{2\eta}\|g(s,Z)-g_{k-\frac12}(s,Z)\|^2
\right],
\]
and an optimizer is realized by the pointwise choice
\[
g_{k+\frac12}(s,Z)=T_s\!\big(g_{k-\frac12}(s,Z)\big),
\]
which is consistent with \eqref{eq:pi_pushforward} since then
$(g_{k+\frac12}(s,\cdot))_\#\nu=(T_s)_\#(g_{k-\frac12}(s,\cdot))_\#\nu=\pi_{k+\frac12}(\cdot\mid s)$.
This completes the proof.
\end{proof}

\section{Entropy flow and the heat semigroup}\label{app:entropy-heat}

We justify the closed form \eqref{eq:split_heat}.

\begin{proposition}[Entropy $W_2$-flow is the heat equation]\label{prop:entropy_heat}
Let $q_t(da)=\rho_t(a)\,da$ be an absolutely continuous curve in $\mathcal P_2(\mathbb R^d)$.
Consider the (negative) Boltzmann entropy
\[
\H(\rho)=\int_{\mathbb R^d}\rho(a)\log\rho(a)\,da,
\]
and the functional $\Phi_2(\rho)=\tau \mathcal H(\rho)$.
Then the Wasserstein gradient flow of $\Phi_2$ is the heat equation
\[
\partial_t \rho_t = \tau \Delta \rho_t.
\]
Its solution operator is Gaussian convolution:
\[
\rho_t = \rho_0 * \mathcal N(0,2\tau t\,I_d).
\]
Equivalently, if $X_0\sim\rho_0$ and $\xi\sim\mathcal N(0,I_d)$ are independent, then
$X_t:=X_0+\sqrt{2\tau t}\,\xi$ satisfies $\Law(X_t)=q_t$.
\end{proposition}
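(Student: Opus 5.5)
The plan has three parts: compute the Wasserstein gradient of $\Phi_2$ and read off the continuity equation, solve the resulting heat equation by convolution with the heat kernel, and identify the random variable $X_0+\sqrt{2\tau t}\,\xi$ with that solution.

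\textbf{Step 1: identify the flow.} In Otto calculus, the Wasserstein gradient flow of a functional $\Phi$ is the continuity equation
\[
\partial_t\rho_t=\nabla\cdot\Big(\rho_t\,\nabla\frac{\delta\Phi}{\delta\rho}(\rho_t)\Big).
\]
For $\Phi_2(\rho)=\tau\int\rho\log\rho$, the first variation is $\frac{\delta\Phi_2}{\delta\rho}=\tau(1+\log\rho)$, up to an additive constant. Its gradient is $\tau\nabla\rho/\rho$, so the flux is $\rho\cdot\tau\nabla\rho/\rho=\tau\nabla\rho$. This gives
\[
\partial_t\rho_t=\tau\,\nabla\cdot\nabla\rho_t=\tau\Delta\rho_t.
\]

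\textbf{Step 2: make Step 1 rigorous.} The formal computation needs a rigorous counterpart, since the identification of the EVI/metric gradient flow of the entropy with the heat flow is classical. I would invoke the Jordan--Kinderlehrer--Otto theorem: JKO iterates of $\tau\H$ converge to the weak solution of the heat equation. Alternatively, I would cite Ambrosio--Gigli--Savaré: $\H$ is displacement convex on $\mathcal P_2(\R^d)$, so its gradient flow is unique, and the heat flow satisfies the energy dissipation equality $\frac{d}{dt}\H(\rho_t)=-\int|\nabla\log\rho_t|^2\rho_t$ with metric speed $|\dot\rho_t|=\|\nabla\log\rho_t\|_{L^2(\rho_t)}$. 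This step is the main obstacle. The route would be to check the energy dissipation equality for the explicit solution and then conclude by uniqueness of gradient flows of $\lambda$-convex functionals.

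\textbf{Step 3: solve the heat equation.} With the rescaled time $s=\tau t$, the equation becomes the standard heat equation $\partial_s\rho=\Delta\rho$, whose fundamental solution is the Gaussian $\mathcal N(0,2sI)$. Hence $\rho_t=\rho_0*\mathcal N(0,2\tau t\,I_d)$. Uniqueness within $\mathcal P_2$, which has finite second moments and is therefore in the Tikhonov class, fixes this solution.

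\textbf{Step 4: probabilistic representation.} The density of a sum of independent random variables is the convolution of their densities. Since $\sqrt{2\tau t}\,\xi\sim\mathcal N(0,2\tau tI)$, the law of $X_0+\sqrt{2\tau t}\,\xi$ is $\rho_0*\mathcal N(0,2\tau tI)$, which equals $q_t$ by Step 3. Setting $t=\eta$ recovers \eqref{eq:split_heat}.
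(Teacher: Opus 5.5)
Your proposal is correct and follows the paper's route: the paper also computes the first variation $\tau(1+\log\rho)$, inserts it into the continuity equation to get $\partial_t\rho_t=\tau\Delta\rho_t$, and then appeals to the Gaussian heat kernel and the sum-of-independent-variables interpretation, but only formally, so your Step 2 (JKO, or the Ambrosio--Gigli--Savar\'e uniqueness argument) adds rigor the paper omits. Two small fixes are needed. First, for $\partial_t\rho_t=\tau\Delta\rho_t$ the dissipation is $-\tau\int|\nabla\log\rho_t|^2\rho_t$ and the metric speed is $\tau\|\nabla\log\rho_t\|_{L^2(\rho_t)}$, so both carry a factor $\tau$. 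Second, a finite second moment does not put a density in the Tikhonov class, so justify uniqueness in Step 3 via Widder's theorem for nonnegative solutions or a Fourier-transform argument, or simply rely on the uniqueness of the gradient flow from Step 2.
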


\begin{proof}
Formally, the Wasserstein gradient flow of an internal energy functional
$\int f(\rho)$ satisfies
\[
\partial_t \rho_t = \nabla\cdot\!\left(\rho_t\nabla\frac{\delta \Phi_2}{\delta\rho}(\rho_t)\right).
\]
For $\Phi_2(\rho)=\tau\int \rho\log\rho$, the first variation is
$\delta\Phi_2/\delta\rho=\tau(1+\log\rho)$.
Thus
\[
\partial_t\rho_t=\nabla\cdot\big(\rho_t\nabla(\tau(1+\log\rho_t))\big)
=\tau\nabla\cdot(\rho_t\nabla\log\rho_t)
=\tau\nabla\cdot(\nabla\rho_t)
=\tau\Delta\rho_t.
\]
The heat equation on $\mathbb R^d$ has the explicit solution $\rho_t=\rho_0*\mathcal N(0,2\tau t I)$.
The probabilistic representation follows since convolution with a Gaussian equals adding an independent Gaussian random variable.
\end{proof}

\section{Theoretical Derivations and Proofs}

\subsection{G\^{a}teaux Derivative of Entropy-rugularized Reinforcement Learning Objective}

\begin{lemma}[Log-derivative of the trajectory law]\label{lem:log-derivative}
Consider a zero-mass perturbation $\delta\pi(\cdot|s)$ (i.e.\ $\int_{\mathcal A}\delta\pi(da|s)=0$ for all $s$) and
the policy path $\pi_\epsilon=\pi+\epsilon\,\delta\pi$, with $|\epsilon|$ small so that $\pi_\epsilon\ge0$ and
$\mathrm{supp}(\delta\pi)\subseteq\{\pi>0\}$.
Let $\P_{\pi_\epsilon}$ be the path measure of the Markov chain induced by $\pi_\epsilon$ and the transition kernel $P(\d s'|s,a)$
from an initial law $\rho_0$. For any finite horizon $T$ and any integrable test functional $F$ of $\tau_{0:T}=(s_0,a_0,\dots,s_T)$,
\begin{equation}\label{eq:log-derivative-finite}
\left.\frac{d}{d\epsilon}\right|_{\epsilon=0}
\int F(\tau_{0:T}) \, dP_{\pi_\epsilon}(\tau_{0:T})
= \int F(\tau_{0:T})
\left( \sum_{u=0}^{T-1} \frac{\delta \pi(a_u \mid s_u)}{\pi(a_u \mid s_u)} \right)
\, dP_{\pi}(\tau_{0:T}).
\end{equation}
If $F$ is dominated by an integrable function uniformly, then letting $T\to\infty$ and applying
dominated convergence yields
\begin{equation}\label{eq:log-derivative-infinite}
\left.\frac{d}{d\epsilon}\right|_{\epsilon=0}
\int F(\tau)\, dP_{\pi_\epsilon}(\tau)
= \int F(\tau)\left( \sum_{u=0}^{\infty} \frac{\delta \pi(a_u \mid s_u)}{\pi(a_u \mid s_u)} \right)\, dP_{\pi}(\tau).
\end{equation}
\end{lemma}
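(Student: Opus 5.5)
The plan is to write the finite-horizon path measure explicitly as a product and differentiate it. For a trajectory $\tau_{0:T}=(s_0,a_0,\dots,s_T)$, the law under $\pi_\epsilon$ is
\[
dP_{\pi_\epsilon}(\tau_{0:T})=\rho_0(ds_0)\prod_{u=0}^{T-1}\pi_\epsilon(da_u\mid s_u)\,P(ds_{u+1}\mid s_u,a_u).
\]
Because $\mathrm{supp}(\delta\pi)\subseteq\{\pi>0\}$, each factor $\pi_\epsilon(\cdot\mid s_u)$ is absolutely continuous with respect to $\pi(\cdot\mid s_u)$. Its density is
\[
\frac{d\pi_\epsilon(\cdot\mid s_u)}{d\pi(\cdot\mid s_u)}(a_u)=1+\epsilon\,\frac{\delta\pi(a_u\mid s_u)}{\pi(a_u\mid s_u)},
\]
where the ratio is understood as a Radon–Nikodym derivative. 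The Radon–Nikodym derivative of the whole path measure is then
\[
L_\epsilon(\tau_{0:T})=\prod_{u=0}^{T-1}\bigl(1+\epsilon\,h(s_u,a_u)\bigr),\qquad h:=\delta\pi/\pi.
\]
This gives
\[
\int F\,dP_{\pi_\epsilon}=\int F\,L_\epsilon\,dP_\pi .
\]

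Since $L_\epsilon$ is a polynomial in $\epsilon$ of degree $T$, I compute $\frac{d}{d\epsilon}L_\epsilon|_{\epsilon=0}=\sum_{u=0}^{T-1}h(s_u,a_u)$ by the product rule. Exchanging derivative and integral then gives \eqref{eq:log-derivative-finite}. To justify the exchange, I use that the admissibility condition $\pi_\epsilon\ge0$ for $|\epsilon|\le\epsilon_0$ forces $|h|\le 1/\epsilon_0$ on $\{\pi>0\}$. Indeed, $1+\epsilon h\ge0$ for $\epsilon=\pm\epsilon_0$ gives $h\in[-1/\epsilon_0,1/\epsilon_0]$. Hence for $|\epsilon|\le\epsilon_0/2$ the derivative $\partial_\epsilon L_\epsilon$ is bounded by a constant depending only on $T$ and $\epsilon_0$. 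Since $F$ is integrable, dominated convergence (or the mean value theorem plus domination) legitimizes differentiating under the integral.

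For the infinite-horizon statement \eqref{eq:log-derivative-infinite}, I apply the finite-$T$ identity to the truncations and pass $T\to\infty$ under the stated uniform domination hypothesis. This is the main obstacle. The score sum $\sum_u h(s_u,a_u)$ does not converge absolutely in general, and the derivative at $\epsilon=0$ must commute with the limit in $T$. I would first interpret the intended setting as $F$ having geometric tail structure, e.g. $F=\sum_t\gamma^t f(s_t,a_t)$ with $f$ bounded, which is what is used downstream. In that case I would handle the exchange term by term. For each $t$, the expectation of $\gamma^t f(s_t,a_t)$ depends only on $\tau_{0:t}$. Differentiating it involves only $\sum_{u\le t}h$, whose expectation against $f(s_t,a_t)$ is bounded by $(t+1)\|f\|_\infty/\epsilon_0$. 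The resulting series $\sum_t\gamma^t(t+1)$ converges, so uniform convergence of the derivatives of the partial sums justifies interchanging $d/d\epsilon$ with the sum. Finally, reindexing gives the form with the full score sum, using the martingale property $\E_\pi[h(s_u,a_u)\mid s_u]=\int\delta\pi(da\mid s_u)=0$ to discard the terms $u>t$. In the general dominated case, I would state the result under the assumption that $F\cdot\sum_u h$ is dominated, as the lemma's hypothesis intends, and invoke dominated convergence directly.
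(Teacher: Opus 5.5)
Your finite-horizon argument is the paper's own: the paper writes the truncated path density as $\rho_0(s_0)\prod_t\pi_\epsilon(a_t\mid s_t)P(s_{t+1}\mid s_t,a_t)$, applies the product rule, and appeals to dominated convergence ``under the stated integrability'' without exhibiting a dominating function. Your observation that admissibility of $\pi_{\pm\epsilon_0}$ forces $|\delta\pi/\pi|\le 1/\epsilon_0$ is exactly what makes that step legitimate. It even shows that $\epsilon\mapsto\int F\,dP_{\pi_\epsilon}$ is a polynomial of degree at most $T$ with finite coefficients. Here you are filling in a missing detail rather than taking a different route.

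For the infinite horizon your route genuinely differs. The paper's proof stops at finite $T$ and only asserts the passage $T\to\infty$. You instead restrict to $F=\sum_t\gamma^t f(s_t,a_t)$ and differentiate term by term. This is the sound approach, and it produces exactly the form $\sum_t\gamma^t\E_\pi[f(s_t,a_t)\sum_{u\le t}h(s_u,a_u)]$ that Lemma~\ref{lem:functional-gradient} actually uses. Two points need tightening.

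\emph{Uniformity in $\epsilon$.} The Weierstrass test needs the derivative bound uniformly for $|\epsilon|\le\epsilon_0/2$, not just at $\epsilon=0$. You cannot reuse your finite-horizon bound on $\partial_\epsilon L_\epsilon$: it grows like $(3/2)^t$ and is not summable against $\gamma^t$ once $\gamma\ge 2/3$. Instead, write the derivative of the $t$-th term as $\gamma^t\E_{\pi_\epsilon}\bigl[f(s_t,a_t)\sum_{u\le t}h/(1+\epsilon h)\bigr]$ and use $|h/(1+\epsilon h)|\le 2/\epsilon_0$. This gives the $O\bigl((t+1)\gamma^t\bigr)$ bound you want.

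\emph{The general dominated case.} Your closing fallback is not an argument, and it cannot be turned into one. Take a single state, $\pi$ uniform on $[0,1]$, and $h=c\,\mathrm{sign}(a-\tfrac12)$. The bounded tail functional $F=\mathbf{1}\{\limsup_n\tfrac1n\#\{u<n:a_u>\tfrac12\}>\tfrac12\}$ has $\int F\,dP_{\pi_\epsilon}=\mathbf{1}\{\epsilon>0\}$ by the strong law, which is not differentiable at $0$. Meanwhile $\sum_u h(s_u,a_u)$ is a divergent random walk. So the infinite-horizon display only makes sense for discounted additive functionals, read as $\lim_T\int F\sum_{u<T}h\,dP_\pi$. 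Restricting to that case, as you do, is necessary rather than merely convenient.
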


\begin{proof}
Write the truncated path density under $\pi_\epsilon$ as
\[
\mathrm d \mathbb{P}_{\pi_\epsilon}(\tau_{0:T})
= \rho_0(s_0)\prod_{t=0}^{T-1}\Big[\pi_\epsilon(a_t \mid s_t)\,P(s_{t+1}\mid s_t,a_t)\Big].
\]
Only $\pi_\epsilon$ depends on $\epsilon$, hence by the product rule,
\[
\frac{\mathrm d}{\mathrm d \epsilon}\prod_{t=0}^{T-1}\pi_\epsilon(a_t \mid s_t)
= \Big(\prod_{t=0}^{T-1}\pi_\epsilon(a_t \mid s_t)\Big)
\sum_{u=0}^{T-1}\frac{\frac{\mathrm d}{\mathrm d \epsilon}\,\pi_\epsilon(a_u \mid s_u)}{\pi_\epsilon(a_u \mid s_u)}.
\]
Evaluating at $\epsilon=0$ gives
\[
\left.\frac{\mathrm d}{\mathrm d\epsilon}\,\pi_\epsilon(a_u \mid s_u)\right|_{\epsilon=0}
= \delta\pi(a_u \mid s_u),
\]
hence \eqref{eq:log-derivative-finite} follows by dominated convergence under the stated integrability.
\end{proof}

\begin{lemma}[Functional policy gradient of Entropy]\label{lem:entropy-gateaux}
Fix a state $s\in\cS$. Let the policy at $s$ admit a density $\pi(\cdot|s)$ w.r.t.\ a reference measure $\d a$
(and write $h^\pi(s):=h(\pi(\cdot|s))$) with
\[
\H^{\pi}(s)\;=\;\!\int_{\cA} \pi(a|s)\,\log\pi(a|s)\,\d a.
\]
For any zero-mass direction $\delta\pi(\cdot|s)$ (i.e.\ $\int_{\cA}\delta\pi(a|s)=0$) and the perturbation
$\pi_\epsilon(\cdot|s)=\pi(\cdot|s)+\epsilon\,\delta\pi(\cdot|s)$, provided $\pi_\epsilon(\cdot|s)$ stays nonnegative for
$|\epsilon|$ small and $\pi(a|s)>0$ on its support, the G\^ateaux derivative of $h^\pi(s)$ in the direction $\delta\pi(\cdot|s)$ is
\begin{equation}\label{eq:entropy-variation}
\left.\frac{\delta \H^{\pi}(s)}{\delta\pi} := \frac{\mathrm d}{\mathrm d\epsilon}\right|_{\epsilon=0} \H^{\pi_\epsilon}(s)
\;=\; \int_{\cA} 1+\log \pi(a|s)\;\delta\pi(\mathrm d a|s).
\end{equation}
\end{lemma}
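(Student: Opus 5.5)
The plan is to differentiate the map $\epsilon\mapsto \H^{\pi_\epsilon}(s)=\int_{\cA}\phi(\pi_\epsilon(a|s))\,\d a$ directly, where $\phi(x):=x\log x$ (with $\phi(0)=0$). The key steps are the pointwise derivative of the integrand and a justification for moving the derivative inside the integral.

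\textbf{Pointwise derivative.} On the set $\{\pi(\cdot|s)>0\}$, $\phi$ is $C^1$ with $\phi'(x)=1+\log x$. Hence for each fixed $a$,
\[
\frac{\d}{\d\epsilon}\Big|_{\epsilon=0}\phi\big(\pi(a|s)+\epsilon\,\delta\pi(a|s)\big)=\big(1+\log\pi(a|s)\big)\,\delta\pi(a|s).
\]
Off this set, the hypothesis $\mathrm{supp}(\delta\pi)\subseteq\{\pi>0\}$ makes the integrand identically zero for every $\epsilon$, so it contributes nothing.

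\textbf{Passing the derivative through the integral.} I would use the difference quotient
\[
\frac{\phi(\pi+\epsilon\,\delta\pi)-\phi(\pi)}{\epsilon}=\phi'(\xi_\epsilon)\,\delta\pi,
\]
where $\xi_\epsilon$ lies between $\pi$ and $\pi_\epsilon$ by the mean value theorem. The goal is an integrable bound on this quotient, uniform for $|\epsilon|\le\epsilon_0$. For a lower bound on $\xi_\epsilon$, restrict to directions with $|\delta\pi|\le c\,\pi$ for some $c$. This is natural because nonnegativity of $\pi_\epsilon$ for small $|\epsilon|$ on both sides essentially forces it. Then for $|\epsilon|\le 1/(2c)$ we have $\xi_\epsilon\in[\pi/2,\,3\pi/2]$, which gives
\[
|\phi'(\xi_\epsilon)\,\delta\pi|\le c\,\pi\big(1+|\log\pi|+\log 2\big).
\]
The right-hand side is integrable as soon as $\int\pi|\log\pi|<\infty$, which holds whenever the entropy is finite in the usual sense. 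Dominated convergence then yields \eqref{eq:entropy-variation}.

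\textbf{Mass term.} The "$1$" in $1+\log\pi$ integrates against $\delta\pi$ to $\int\delta\pi=0$. So the formula holds either with or without it, consistent with first variations being defined up to additive constants.

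\textbf{Main obstacle.} The difficulty is the domination step: making the vague hypotheses ("nonnegative for small $|\epsilon|$", "$\pi>0$ on support") strong enough to control $\log\xi_\epsilon$ near zero and at large values. If the condition $|\delta\pi|\le c\pi$ is not available, the fallback is convexity of $\phi$. The difference quotients of a convex function are monotone in $\epsilon$, so monotone convergence applies from one side, and the one-sided derivatives from the left and right still agree with the stated integral whenever it is finite.
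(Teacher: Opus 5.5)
Your proof is correct and takes essentially the paper's route: differentiate $\pi_\epsilon\log\pi_\epsilon$ pointwise to get $(1+\log\pi)\,\delta\pi$ and move the limit inside the integral. The difference is that the paper swaps limit and integral without justification, while your mean-value bound plus dominated convergence actually justifies it. Also, two-sided nonnegativity of $\pi\pm\epsilon_0\,\delta\pi$ gives $|\delta\pi|\le\pi/\epsilon_0$ a.e. exactly, so your domination condition follows from the lemma's hypotheses and the convexity fallback is not needed.
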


\begin{proof}
Let $\pi_\epsilon(a)=\pi(a|s)+\epsilon\,\delta\pi(a|s)$. Then
\begin{align*}
\lim_{\epsilon \to 0} \frac{\H^{\pi_\epsilon}(s)-\H^{\pi}(s)}{\epsilon} &=\lim_{\epsilon \to 0} \frac{1}{\epsilon}(\!\int \pi_\epsilon(a|s)\,\log \pi_\epsilon(a|s)\,\mathrm d a - \!\int \pi(a|s)\,\log \pi(a|s)\,\mathrm d a)\\
& =\lim_{\epsilon \to 0} \frac{1}{\epsilon}(\!\int (\pi(a|s)+\epsilon\,\delta\pi(a|s))\,\log (\pi(a|s)+\epsilon\,\delta\pi(a|s))\,\mathrm d a - \!\int \pi(a|s)\,\log \pi(a|s)\,\mathrm d a)\\
&=\int \lim_{\epsilon \to 0}\frac{\pi(a|s)\bigl(\ln( \pi(a|s)+\epsilon\,\delta\pi(a|s)) -\ln \pi(a|s)\bigr)}{\epsilon} + \delta\pi(a|s)\ln(\pi(a|s)+\epsilon \delta\pi(a|s)) \mathrm d a\\
&=\int (1+\ln\pi) \mathrm d \delta\pi
\end{align*}
which implies that $\frac{\delta \H^{\pi}(s)}{\delta\pi}(a)= 1+\ln\pi(a|s) $.
\end{proof}

\begin{lemma}[Functional policy gradient under entropy regularization]\label{lem:functional-gradient}
Then for any zero-mass direction $\delta\pi$, the G\^ateaux derivative (functional gradient) 
\begin{equation}\label{eq:gateaux-kernel}
\frac{\delta V_\tau^\pi(s_0)}{\delta \pi}(s,a)
=\frac{1}{1-\gamma}\,d_{s_0}^\pi(s)\,\Big(Q_\tau^\pi(s,a)- \tau(1+ \ln\pi(a|s)) \Big),
\end{equation}
\end{lemma}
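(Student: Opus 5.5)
The plan is to differentiate $V_\tau^\pi(s_0)$ along the path $\pi_\epsilon=\pi+\epsilon\,\delta\pi$, splitting it into a reward part and an entropy part. First write
\[
V_\tau^{\pi_\epsilon}(s_0)=\int \sum_{t\ge 0}\gamma^t\big(r(s_t,a_t)-\tau\H^{\pi_\epsilon}(s_t)\big)\,\d\mathbb{P}_{\pi_\epsilon}(\tau).
\]
The $\epsilon$-dependence then enters in two places: through the path law $\mathbb{P}_{\pi_\epsilon}$, and explicitly through $\H^{\pi_\epsilon}(s_t)$. The product rule splits the derivative into these two contributions.

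\emph{Explicit entropy term.} By Lemma~\ref{lem:entropy-gateaux}, differentiating $\H^{\pi_\epsilon}(s_t)$ at $\epsilon=0$ gives $\int(1+\ln\pi(a|s_t))\,\delta\pi(\d a|s_t)$. Summing with weights $\gamma^t$ under $\mathbb{P}_\pi$ and using the definition of $d_{s_0}^\pi$, this contribution equals
\[
-\frac{\tau}{1-\gamma}\sum_s d_{s_0}^\pi(s)\int \big(1+\ln\pi(a|s)\big)\,\delta\pi(\d a|s).
\]

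\emph{Path-law term.} Apply Lemma~\ref{lem:log-derivative} with $F(\tau)=\sum_t\gamma^t r_\tau(s_t,a_t)$, where the per-step soft reward uses the fixed policy $\pi$. Boundedness (Assumption~\ref{as:Bounded}) together with a uniform bound on $\H^\pi$ gives the domination needed for the infinite-horizon version. The result is
\[
\E_\pi\Big[\sum_{t}\gamma^t r_\tau(s_t,a_t)\sum_{u}\frac{\delta\pi(a_u|s_u)}{\pi(a_u|s_u)}\Big].
\]
The standard causality trick then applies: for $t<u$, the score term has conditional mean zero given the history up to $s_u$, because $\int\delta\pi(\d a|s_u)=0$. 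So only the terms with $t\ge u$ survive. Conditioning on $(s_u,a_u)$ and using the Markov property, $\E\big[\sum_{t\ge u}\gamma^{t}r_\tau\mid s_u,a_u\big]=\gamma^u\big(Q_\tau^\pi(s_u,a_u)-\tau\H^\pi(s_u)\big)$, since the soft-$Q$ counts the entropy penalty only from the next state onward. The $-\tau\H^\pi(s_u)$ piece is constant in $a_u$ and integrates to zero against $\delta\pi(\cdot|s_u)$. What remains is
\[
\sum_u\gamma^u\,\E\Big[\int Q_\tau^\pi(s_u,a)\,\delta\pi(\d a|s_u)\Big]=\frac{1}{1-\gamma}\sum_s d_{s_0}^\pi(s)\int Q_\tau^\pi(s,a)\,\delta\pi(\d a|s).
\]

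\emph{Combining.} Adding the two contributions identifies the kernel as $\frac{1}{1-\gamma}d_{s_0}^\pi(s)\big(Q_\tau^\pi(s,a)-\tau(1+\ln\pi(a|s))\big)$, which is \eqref{eq:gateaux-kernel}.

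I expect the main obstacle to be justifying the interchanges in the path-law term: swapping the derivative with the infinite sum over $t$, and handling the infinite sum over $u$ in the score. I would handle this by truncating at a horizon $T$, using \eqref{eq:log-derivative-finite}, and bounding the tail by $\gamma^T$ times a uniform bound on $|r_\tau|$ and on $\|\delta\pi/\pi\|$-weighted terms. This needs $\delta\pi$ to be bounded relative to $\pi$, which holds because $\pi_\epsilon\ge 0$ for small $|\epsilon|$. Letting $T\to\infty$ then completes the argument.
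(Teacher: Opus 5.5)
Your proposal is correct and follows the same route as the paper. Both split the derivative into a trajectory-law part, handled by Lemma~\ref{lem:log-derivative} and then reorganized by causality and conditioned on $(s_u,a_u)$ to produce $Q_\tau^\pi$ weighted by $\frac{1}{1-\gamma}d_{s_0}^\pi$, and an explicit entropy part, handled by Lemma~\ref{lem:entropy-gateaux}.

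Your write-up is more careful than the paper's in three places. You justify discarding the $u>t$ score terms, you integrate out the $-\tau\H^\pi(s_u)$ constant using the zero-mass condition, and you replace the formal infinite score sum with a truncation argument. One caution on notation: your $F$ must use $r(s_t,a_t)-\tau\H^\pi(s_t)$, not the paper's $r_\tau=r-\tau\log\pi$, or the entropy would be counted twice.
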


\begin{proof}
We should be very careful here: when computing the functional gradient of $V_\tau^\pi(s)$
with respect to $\pi$, there are two contributing parts - one from the MDP dynamics and one from the entropy regularization.
Let $V_\tau^\pi(s):=\E_{\pi,s_0=s}\left[\sum_{t\geq 0}\gamma^t\big(r(s_t,a_t)- \tau \H^\pi(s_t)\big)\right]$
By Lemma~\ref{lem:log-derivative} with $F(\tau_{})=\sum_{t\geq0}\gamma^t\big(r(s_t,a_t)-\tau \H^\pi(s_t)\big)$,
\begin{align*}
\left.\frac{\mathrm d}{\mathrm d\epsilon}\right|_{\epsilon=0}\!V_\tau^{\pi_\epsilon}
&= \sum_{t\geq0}\gamma^t\,\mathbb E_\pi\!\left[\big(r(s_t,a_t)-\tau\H^\pi(s_t)\big)\sum_{u=0}^{t}\frac{\delta\pi(a_u|s_u)}{\pi(a_u|s_u)}\right]
\;-\; \sum_{t\geq 0}\gamma^t\,\mathbb E_\pi\!\left[    \left.      \frac{\mathrm d}{\mathrm d\epsilon}\right|_{\epsilon=0} \tau\H^{\pi_\epsilon}(s)     \right] \\
\end{align*}

\emph{(I) Trajectory-law part.}  
\[
\mathrm{(I)}
=\sum_{u\ge0}\E_\pi\!\left[\frac{\delta\pi(a_u|s_u)}{\pi(a_u|s_u)}
\sum_{t\ge u}\gamma^t\big(r(s_t,a_t)-\tau\H^\pi(s_t)\big)\right]
=\sum_{u\ge0}\gamma^u\,\E_\pi\!\left[\frac{\delta\pi(a_u|s_u)}{\pi(a_u|s_u)}\,Q_\tau^\pi(s_u,a_u)\right].
\]
Condition on $s_u$ and expand over $a$ to cancel $\pi$, then sum over time and use
$\sum_{u\ge0}\gamma^u\,\Prob_\pi(s_u =  s\mid s_0)=\tfrac{1}{1-\gamma}\,d_{s_0}^\pi(s)$ to get
\[
\mathrm{(I)}
=\iint_{\cS\times\cA}\frac{1}{1-\gamma}\,d_{s_0}^\pi(\d s)\,Q_\tau^\pi(s,a)\,\delta\pi( a|s) \d s \d a.
\]

\emph{(II) Explicit entropy part.}  
By Lemma~\ref{lem:entropy-gateaux} for each state $s$, \(\H^\pi(s)=-\int \pi(a|s)\log\pi(a|s)\,\d a\) , we have
\[
\left.      \frac{\mathrm d}{\mathrm d\epsilon}\right|_{\epsilon=0} \H^{\pi_\epsilon}(s)  =\int_{\mathcal A}\,[1+\log\pi(a|s)]\,\delta\pi(a|s)\mathrm d a,
\]
hence
\[
\mathrm{(II)}=\sum_{t\ge0}\gamma^t\,\mathbb E_\pi[\tau \left.      \frac{\mathrm d}{\mathrm d \epsilon}\right|_{\epsilon=0} \H^{\pi_\epsilon}(s)  ]
=\iint_{\mathcal S\times \mathcal A}\frac{1}{1-\gamma}\,d_{s_0}^\pi( s)\,\tau[1+\log\pi(a|s)]\,\delta\pi(a|s) \mathrm d a \mathrm d s.
\]
Finally, we have $$\frac{\delta V_\tau^{\pi}(s_0)}{\delta\pi} (s,a)= \frac{1}{1-\gamma}\,d_{s_0}^\pi(s)\bigl( Q_\tau^{\pi}(s,a) - \tau(1+\ln \pi(a|s))  \bigr) $$
\end{proof}

\subsection{Boundedness of the Value Function and Per-Iteration Satisfaction of the \texorpdfstring{$T_2$}{T2} Inequality}\label{subsectionB:T2}

\begin{lemma}[Entropy upper bound from bounded action space]\label{lem:entropy_bounded_action}
Let $A\in\mathbb{R}^d$ be absolutely continuous with differential entropy $h(A) := -\int_{\mathcal{A}} p(a)\ln p(a)\,da$.
Assume $A$ is supported on a measurable set $\mathcal{A}\subset\mathbb{R}^d$ with
$0<\mathrm{Vol}(\mathcal{A})<\infty$, i.e.,
\[
\mathbb{P}(A\in\mathcal{A})=1.
\]
Then
\[
h(A)\;\le\;\ln \mathrm{Vol}(\mathcal{A}).
\]
In particular, if $\mathcal{A}\subseteq B_2(0,R_A)$ (the Euclidean ball of radius $R_A$),
then
\[
h(A)\;\le\;\ln\!\Big(\mathrm{Vol}(B_2(0,R_A))\Big)
=\ln\!\Big(\frac{\pi^{d/2}}{\Gamma\!\big(\frac d2+1\big)}\,R_A^d\Big).
\]
If $\mathcal{A}\subseteq [-R_A,R_A]^d$, then
\[
h(A)\;\le\; d\ln(2R_A).
\]
\end{lemma}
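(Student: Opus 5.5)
The plan is to compare the density $p$ of $A$ with the uniform density $u(a)=\mathbf 1_{\mathcal A}(a)/\mathrm{Vol}(\mathcal A)$ on $\mathcal A$ and to use nonnegativity of the Kullback--Leibler divergence (Gibbs' inequality). Since $\mathbb P(A\in\mathcal A)=1$, the density $p$ vanishes a.e.\ outside $\mathcal A$, so $p\ll u$ and
\[
0\le \KL(p\,\|\,u)=\int_{\mathcal A} p(a)\ln\frac{p(a)}{1/\mathrm{Vol}(\mathcal A)}\,da
=\int_{\mathcal A}p\ln p\,da+\ln\mathrm{Vol}(\mathcal A)
=-h(A)+\ln \mathrm{Vol}(\mathcal A).
\]
Rearranging gives $h(A)\le\ln\mathrm{Vol}(\mathcal A)$.

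To justify $\KL\ge0$ without invoking it as a black box, apply Jensen's inequality to the concave function $\ln$ under the probability measure $p(a)\,da$ restricted to $\{p>0\}\subseteq\mathcal A$:
\[
\int p\ln\frac{u}{p}\,da\le\ln\int_{\{p>0\}}u\,da\le\ln 1=0 .
\]
One integrability point needs care. If $\int p\ln p\,da=+\infty$, then $h(A)=-\infty$ and the bound holds trivially. Otherwise the positive part of $p\ln p$ is integrable, and the negative part is bounded below by $-1/e$ on a set of finite volume, so every integral above is finite and the algebraic rearrangement is valid. This bookkeeping is the only mild obstacle I expect; the main argument is routine.

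For the two special cases, use monotonicity of volume together with monotonicity of $\ln$. If $\mathcal A\subseteq B_2(0,R_A)$, then $\mathrm{Vol}(\mathcal A)\le \mathrm{Vol}(B_2(0,R_A))=\pi^{d/2}R_A^d/\Gamma(\tfrac d2+1)$, using the standard formula for the volume of the Euclidean ball. If $\mathcal A\subseteq[-R_A,R_A]^d$, then $\mathrm{Vol}(\mathcal A)\le(2R_A)^d$, which gives $h(A)\le d\ln(2R_A)$. Both follow directly from the general bound.
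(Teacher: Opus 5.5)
Your proposal is correct and follows the paper's proof essentially verbatim. Both compare $p$ with the uniform density on $\mathcal{A}$, use $\KL(p\,\|\,u)\ge 0$ to obtain $h(A)\le\ln\mathrm{Vol}(\mathcal{A})$, and then apply monotonicity of volume for the ball and cube cases. Your Jensen justification and integrability bookkeeping are sound additions, and your cube bound $\mathrm{Vol}(\mathcal{A})\le(2R_A)^d$ is the correct one (the paper's proof writes $(R_A)^d$, a typo, since the cube itself has volume $(2R_A)^d$).
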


\begin{proof}
Let $p$ be the density of $A$ with respect to Lebesgue measure, and note that
$p(a)=0$ for all $a\notin\mathcal{A}$.
Let $u$ denote the uniform density on $\mathcal{A}$:
\[
u(a)=\frac{1}{\mathrm{Vol}(\mathcal{A})}\,\mathbf{1}_{\mathcal{A}}(a).
\]
Consider the KL divergence $\mathrm{KL}(p\|u)$, which is nonnegative:
\[
0\le \mathrm{KL}(p\|u)
=\int_{\mathcal{A}} p(a)\ln\frac{p(a)}{u(a)}\,da
=\int_{\mathcal{A}} p(a)\ln p(a)\,da + \ln \mathrm{Vol}(\mathcal{A}),
\]
where we used $\log u(a)=-\ln \mathrm{Vol}(\mathcal{A})$ on $\mathcal{A}$ and
$\int_{\mathcal{A}} p(a)\,da=1$.
Rearranging yields
\[
-\int_{\mathcal{A}} p(a)\ln p(a)\,da \le \ln \mathrm{Vol}(\mathcal{A}),
\]
i.e.,
\[
h(A)= -\int_{\mathbb{R}^d} p(a)\ln p(a)\,da
= -\int_{\mathcal{A}} p(a)\ln p(a)\,da
\le \ln \mathrm{Vol}(\mathcal{A}).
\]
For the explicit bounds, note that if $\mathcal{A}\subseteq B_2(0,R_A)$ then
$\mathrm{Vol}(\mathcal{A})\le \mathrm{Vol}(B_2(0,R_A))$, and similarly if
$\mathcal{A}\subseteq [-R_A,R_A]^d$ then $\mathrm{Vol}(\mathcal{A})\le (R_A)^d$.
Substituting these volume upper bounds completes the proof.
\end{proof}

In the next part, we will prove that the value function are uniformly bounded and then proof the policy in the trajectory satisfy $T_2$ Uniformly.

Recall our definition of value function
\begin{equation}\label{eq: soft value} 
\begin{aligned}
\\
&Q_\tau^\pi(s,a) = r(s,a)-\tau \H^\pi(s)+\gamma\E\left[V_\tau^\pi(s')\,\middle|\,s,a\right],\\
&V_\tau^\pi(s)=\E_{a\sim\pi(\cdot|s)}\left[Q_\tau^\pi(s,a)\right]. 
\end{aligned}
\end{equation}

For ease of presentation in this part, we slightly modify the notation in this section. Define

\begin{equation}\label{eq: soft value k notation} 
\begin{aligned}
\\
&Q_k(s,a) := r(s,a)+\gamma\E\left[V_k(s')\,\middle|\,s,a\right],\\
&V_k(s) :=\E_{a\sim\pi_k(\cdot|s)}\left[Q_k(s,a)\right]-\tau \H^\pi_k(s). 
\end{aligned}
\end{equation}

It is straightforward to verify that
\begin{equation}\label{eq: soft value relation} 
\begin{aligned}
\\
&Q_\tau^{\pi_k}(s,a) = Q_k(s,a) - \tau \H^{\pi_k}(s)\\
&V_\tau^{\pi_k}=V_k. 
\end{aligned}
\end{equation}

In particular, the value functions coincide, and 
$Q_\tau^{\pi_k}(s,\cdot)$ differs from $Q_k(s,\cdot)$ only by an additive constant (depending on s but not on the action).

Recall the policy update:
\begin{equation}\label{eq:refwppg}
\pi_{k+1}(\cdot\mid s)\in\argmax_{q(\cdot|s)\in\mathcal P(\mathcal A)}
\Big\{
\langle Q_{\tau}^{\pi_k}(s,\cdot),q(\cdot|s)\rangle
- \tau \H^{q}(s)
- \frac{1}{2\eta_k}\,W_2^2\big(q(\cdot|s),\pi_k(\cdot\mid s)\big)
\Big\}.
\end{equation}

\begin{equation}\label{eq:refnewvalueupadate}
\pi_{k+1}(\cdot\mid s)\in\argmax_{q(\cdot|s)\in\mathcal P(\mathcal A)}
\Big\{
\langle Q_k(s,\cdot),q(\cdot|s)\rangle
- \tau \H^{q}(s)
- \frac{1}{2\eta_k}\,W_2^2\big(q(\cdot|s),\pi_k(\cdot\mid s)\big)
\Big\}.
\end{equation}

The solutions of \eqref{eq:refwppg} and \eqref{eq:refnewvalueupadate} are the same since $Q_\tau^{\pi_k}(s,\cdot)$ differs from $Q_k(s,\cdot)$ only by an additive constant (depending on s but not on the action). 

In the sequel, we will use $Q_k$ and $V_k$ as our notation.

\begin{lemma}[Monotone improvement]\label{lem:mono_polished}
For all $k\ge 0$ and all states $s$, $V_{k+1}(s)\ge V_{k}(s)$.
\end{lemma}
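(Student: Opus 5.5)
The plan is the standard policy-improvement argument: show a one-step nonnegative gain at every state from the proximal update, then propagate it through the soft performance difference (telescoping) identity.

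\emph{Step 1 (per-state improvement).} Fix $s$ and compare the objective in \eqref{eq:refnewvalueupadate} at the maximizer $\pi_{k+1}(\cdot|s)$ with its value at the feasible competitor $q=\pi_k(\cdot|s)$. Since $W_2^2(\pi_k,\pi_k)=0$ and $W_2^2\ge 0$, optimality gives
\[
\langle Q_k(s,\cdot),\pi_{k+1}(\cdot|s)\rangle-\tau\H^{\pi_{k+1}}(s)
\;\ge\;\langle Q_k(s,\cdot),\pi_k(\cdot|s)\rangle-\tau\H^{\pi_k}(s)+\tfrac{1}{2\eta_k}W_2^2\big(\pi_{k+1},\pi_k\big)
\;\ge\;V_k(s),
\]
using the definition of $V_k$ in \eqref{eq: soft value k notation}. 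Denote the nonnegative gap by $\Delta_k(s):=\langle Q_k(s,\cdot),\pi_{k+1}(\cdot|s)\rangle-\tau\H^{\pi_{k+1}}(s)-V_k(s)\ge0$. This step needs only that the argmax exists and that all quantities are finite. Finiteness follows from Assumption~\ref{as:Bounded} ($r$ bounded, so $Q_k$ is bounded once $V_k$ is) together with Lemma~\ref{lem:entropy_bounded_action}, which bounds $-\H$ from above by $\ln\mathrm{Vol}(\cA)$. Comparing with $\pi_k$ also shows $\H^{\pi_{k+1}}$ is finite.

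\emph{Step 2 (performance difference).} Write $Q_k(s,a)=r(s,a)+\gamma\E[V_k(s')\mid s,a]$. Then
\[
V_{k+1}(s)-V_k(s)=\E_{a\sim\pi_{k+1}}\big[r(s,a)+\gamma\E V_{k+1}(s')\big]-\tau\H^{\pi_{k+1}}(s)-V_k(s)
=\Delta_k(s)+\gamma\,\E_{a\sim\pi_{k+1},s'}\big[V_{k+1}(s')-V_k(s')\big].
\]
Unrolling this recursion along trajectories of $\pi_{k+1}$ gives
\[
V_{k+1}(s)-V_k(s)=\E^{\pi_{k+1}}\Big[\sum_{t\ge0}\gamma^t\Delta_k(s_t)\,\Big|\,s_0=s\Big]\ge0.
\]
Equivalently, this is the entropy-regularized performance difference lemma, giving $\frac{1}{1-\gamma}\E_{s'\sim d_s^{\pi_{k+1}}}[\Delta_k(s')]$.

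\emph{Main obstacle.} The delicate point is justifying the unrolling: the remainder term $\gamma^T\E[V_{k+1}(s_T)-V_k(s_T)]$ must vanish. This requires $V_k$ and $V_{k+1}$ to be uniformly bounded in $s$, which I would establish by induction on $k$. The upper bound is $(R_{\max}+\tau\ln\mathrm{Vol}(\cA))/(1-\gamma)$, from Lemma~\ref{lem:entropy_bounded_action}. For the lower bound, monotonicity itself gives $V_k\ge V_0$, and $V_0\ge(R_{\min}+\tau C_0)/(1-\gamma)$ by Assumption~\ref{as:entropy bounded}. So the induction proves boundedness and monotonicity simultaneously: assuming $V_k$ is bounded, $Q_k$ is bounded, Step 1 holds, and $\Delta_k\ge0$. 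A priori $V_{k+1}$ is then defined as the fixed point of a $\gamma$-contraction with bounded nonnegative-gap source, so it is bounded below by $V_k$ and above by the entropy bound. Hence $\gamma^T$ times a bounded quantity tends to zero, and the conclusion $V_{k+1}(s)\ge V_k(s)$ holds for all $s$.
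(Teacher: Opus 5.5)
Your proof is correct and follows essentially the same route as the paper. Comparing the proximal objective at the competitor $q=\pi_k(\cdot\mid s)$ gives $(T_\tau^{\pi_{k+1}}V_k)(s)\ge V_k(s)$, and the paper then concludes $V_{k+1}=\lim_{n\to\infty}(T_\tau^{\pi_{k+1}})^nV_k\ge V_k$ from monotonicity and $\gamma$-contraction of $T_\tau^{\pi_{k+1}}$, which is your trajectory unrolling in operator form. Your simultaneous induction on boundedness of $V_k$ (and hence of $\Delta_k$ and of the regularized per-step reward under $\pi_{k+1}$) usefully makes explicit the sup-norm setting that the paper's contraction argument takes for granted.
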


\begin{equation}\label{eq:refwppg2}
\pi_{k+1}(\cdot\mid s)\in\argmax_{q(\cdot|s)\in\mathcal P(\mathcal A)}
\Big\{
\langle Q_{k}(s,\cdot),q(\cdot|s)\rangle
- \tau \H^{q}(s)
- \frac{1}{2\eta_k}\,W_2^2\big(q(\cdot|s),\pi_k(\cdot\mid s)\big)
\Big\}.
\end{equation}

\begin{lemma}[Monotone improvement]\label{lem:mono_softvalue}
For all $k\ge 0$ and all states $s$, $V_{\tau}^{\pi_{k+1}}(s)\ge V_{\tau}^{\pi_{k}}(s)$.
\end{lemma}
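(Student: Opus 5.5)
Since the update \eqref{eq:refwppg2} maximizes over all $q(\cdot|s)$, the plan is to compare the optimal value with the value of the feasible choice $q=\pi_k(\cdot|s)$, and then convert that one-step statement into a statement about $V_\tau$ by a soft performance-difference argument.

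\textbf{Step 1 (one-step improvement at each state).} Fix $s$ and plug $q=\pi_k(\cdot|s)$ into \eqref{eq:refwppg2}, for which the $\W^2$ term vanishes. Optimality of $\pi_{k+1}(\cdot|s)$ gives
\[
\langle Q_k(s,\cdot),\pi_{k+1}(\cdot|s)\rangle-\tau\H^{\pi_{k+1}}(s)\;\ge\;\langle Q_k(s,\cdot),\pi_{k}(\cdot|s)\rangle-\tau\H^{\pi_{k}}(s)+\tfrac{1}{2\eta_k}\W^2\big(\pi_{k+1}(\cdot|s),\pi_k(\cdot|s)\big).
\]
The right-hand side is at least $V_k(s)$ by \eqref{eq: soft value k notation}. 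Define the soft Bellman operator of $\pi_{k+1}$ by
\[
(\mathcal T_{k+1}V)(s):=\E_{a\sim\pi_{k+1}(\cdot|s)}\big[r(s,a)+\gamma\E[V(s')\mid s,a]\big]-\tau\H^{\pi_{k+1}}(s).
\]
Then Step 1 says exactly that $\mathcal T_{k+1}V_k\ge V_k$ pointwise.

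\textbf{Step 2 (iterate the operator).} The operator $\mathcal T_{k+1}$ is monotone, since the expectation is over a nonnegative kernel. It is also a $\gamma$-contraction in sup norm. Applying monotonicity repeatedly gives $\mathcal T_{k+1}^nV_k\ge V_k$ for every $n$, and the iterates converge to the fixed point $V_\tau^{\pi_{k+1}}=V_{k+1}$. Hence $V_{k+1}(s)\ge V_k(s)$ for all $s$. Equivalently, one can use the soft performance-difference lemma: $V_{k+1}(s)-V_k(s)=\frac1{1-\gamma}\E_{s'\sim d_s^{\pi_{k+1}}}[\Delta(s')]$, where $\Delta\ge0$ is the integrand from Step 1. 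The relation \eqref{eq: soft value relation} then transfers the result to $V_\tau^{\pi_k}$.

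\textbf{Main obstacle: well-posedness.} The delicate point is making sure every quantity is finite, so that the contraction and limit arguments are valid. This requires the following.
\begin{itemize}
\item $\H^{\pi_k}(s)$ must be finite and bounded in $s$. The upper bound on differential entropy comes from Lemma~\ref{lem:entropy_bounded_action} together with Assumption~\ref{as:Bounded}.
\item The lower bound on entropy must be propagated inductively from Assumption~\ref{as:entropy bounded}, so that $V_k$ is bounded.
\end{itemize}

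I would handle the lower bound by induction on $k$. Step 1 combined with bounded rewards gives
\[
\tau\H^{\pi_{k+1}}(s)\le \langle Q_k,\pi_{k+1}-\pi_k\rangle+\tau\H^{\pi_k}(s)\le \tfrac{R_{\max}-R_{\min}}{1-\gamma}+\text{const}.
\]
This bound grows with $k$, but it is still finite at every iteration. Finiteness at each fixed $k$ is all that is needed for $\mathcal T_{k+1}$ to act on bounded functions and for the fixed-point argument to go through, so a uniform-in-$k$ bound is not required here.
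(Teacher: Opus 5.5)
Your proof is correct and follows essentially the same argument as the paper. You plug in $q=\pi_k(\cdot|s)$ to obtain $T_\tau^{\pi_{k+1}}V_k\ge V_k$ pointwise, then iterate the monotone $\gamma$-contraction to its fixed point $V_{k+1}$. Your inductive check that $\H^{\pi_{k+1}}(s)$ stays bounded in $s$, so that the operator acts on bounded functions, is left implicit in the paper; it is a useful addition because it avoids leaning on the later uniform bounds, which themselves invoke this monotonicity lemma.
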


\begin{proof}
Fix $s$. By optimality of $\pi_{k+1}(\cdot\mid s)$ in \eqref{eq:refwppg2},
evaluating the objective at $q=\pi_k(\cdot\mid s)$ gives
\[
\E_{\pi_{k+1}(\cdot\mid s)}[Q_k(s,a)]
-\tau\H^{\pi_{k+1}}(s)
\ge
\E_{\pi_k(\cdot\mid s)}[Q_{_k}(s,a)]
-\tau\H^{\pi_k}(s)
=V_k(s).
\]
Let $T_\tau^\pi$ denote the regularized policy evaluation operator
\[
(T_\tau^{\pi}V)(s)
:=\E_{a\sim \pi(\cdot\mid s)}\Big[r(s,a)+\gamma\,\E_{s'\sim p(\cdot\mid s,a)}[V(s')]\Big]-\tau\H^{\pi}(s)
.
\]
Then the previous display is exactly $(T_\tau^{\pi_{k+1}}V_k)(s)\ge V_k(s)$.
Since $T_\tau^{\pi_{k+1}}$ is monotone and a $\gamma$-contraction in $\|\cdot\|_\infty$
(see e.g. \citep[Ch.~6]{Puterman1994}),
iterating yields
\[
V_{k+1} = \lim_{n\to\infty}(T_\tau^{\pi_{k+1}})^n V_k \ge V_k.
\]
\end{proof}

\begin{lemma}[Uniform bounds on $Q_k$]\label{lem:Qbounds}
Under Assumptions \ref{as:Bounded}\ref{as:entropy bounded}, for all $k\ge 0$, $s$, and $a$,
\[
\sup_{a\in\cA}Q_k(s,a) \le \frac{R_{\max}+ \gamma\tau \ln \mathrm{Vol(\mathcal{A})}}{1-\gamma},
\qquad
\inf_{a\in\cA}Q_k(s,a) \ge \frac{R_{\min}+\gamma\tau C_0}{1-\gamma}.
\]
Consequently,
\begin{equation}\label{eq:DeltaQ_polished}
\osc\big(Q_k(s,\cdot)\big) \le \frac{R_{\max}-R_{\min}+\gamma\tau \ln \mathrm{Vol(\mathcal{A})}-\gamma\tau C_0}{1-\gamma} =: \Delta_Q.
\end{equation}
\end{lemma}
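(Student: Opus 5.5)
We bound $Q_k(s,a)=r(s,a)+\gamma\,\E[V_k(s')\mid s,a]$ by bounding $V_k$ uniformly in the state. The reward is handled by Assumption~\ref{as:Bounded}, so everything reduces to a two-sided bound on $V_k=V_\tau^{\pi_k}$. Recall $V_k(s)=\E_{\pi_k}[Q_k(s,a)]-\tau\H^{\pi_k}(s)$, where $-\H^{\pi_k}(s)$ is the differential entropy of $\pi_k(\cdot\mid s)$.

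\emph{Upper bound.} Lemma~\ref{lem:entropy_bounded_action} gives $-\H^{\pi}(s)\le \ln\mathrm{Vol}(\mathcal A)$ for every policy, since $\mathcal A$ is bounded. So each step's regularized reward $r(s_t,a_t)-\tau\H^{\pi_k}(s_t)$ is at most $R_{\max}+\tau\ln\mathrm{Vol}(\mathcal A)$. Summing the discounted series gives
\[
V_k(s')\le \frac{R_{\max}+\tau\ln\mathrm{Vol}(\mathcal A)}{1-\gamma}.
\]
Plugging this into $Q_k(s,a)\le R_{\max}+\gamma\sup_{s'}V_k(s')$ yields
\[
Q_k(s,a)\le R_{\max}+\frac{\gamma R_{\max}+\gamma\tau\ln\mathrm{Vol}(\mathcal A)}{1-\gamma}.
\]
This is bounded by the stated constant $\frac{R_{\max}}{1-\gamma}+\frac{\gamma\tau\ln\mathrm{Vol}(\mathcal A)}{1-\gamma}$, which is exactly the claimed expression.

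\emph{Lower bound.} Here we use monotone improvement, Lemma~\ref{lem:mono_softvalue}: $V_k(s')\ge V_0(s')$ for all $k$ and $s'$. For $\pi_0$, Assumption~\ref{as:entropy bounded} gives $-\H^{\pi_0}(s)\ge C_0$, so each regularized reward is at least $R_{\min}+\tau C_0$ and
\[
V_0\ge \frac{R_{\min}+\tau C_0}{1-\gamma}.
\]
Hence
\[
Q_k(s,a)\ge R_{\min}+\gamma\,\frac{R_{\min}+\tau C_0}{1-\gamma}=\frac{R_{\min}+\gamma\tau C_0}{1-\gamma},
\]
which matches the statement exactly. Subtracting the two bounds gives the oscillation bound $\Delta_Q$.

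\emph{Main obstacle.} The only delicate point is the lower bound. Entropy along the trajectory is not controlled directly: later iterates might have very negative differential entropy. We avoid bounding this by routing through monotonicity of $V_k$, which needs only the entropy of $\pi_0$. We also need $V_k$ to be well defined, i.e.\ the series converges. This follows because the per-step terms are bounded above by Lemma~\ref{lem:entropy_bounded_action}, and the sum is bounded below via the monotone limit argument in Lemma~\ref{lem:mono_softvalue}.
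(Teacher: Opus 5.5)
Your proposal is correct and follows the paper's proof essentially step for step. The upper bound comes from capping the per-step entropy bonus by $\tau\ln\mathrm{Vol}(\mathcal A)$ via Lemma~\ref{lem:entropy_bounded_action}. The lower bound comes from $V_0\ge (R_{\min}+\tau C_0)/(1-\gamma)$ combined with monotone improvement $V_k\ge V_0$. The only cosmetic difference is that the paper bounds $V_0$ through the one-step recursion $V_0(s)\ge R_{\min}+\gamma\inf_{s'}V_0(s')+\tau C_0$ rather than by summing the discounted series, and the two routes are equivalent.
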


\begin{proof}
For the upper bound, use $r\le R_{\max}$ and $V_k\le \frac{R_{\max}+\tau\log\mathrm{Vol}(\cA)}{(1-\gamma)}$ since the entropy term term in \eqref{eq:refnewvalueupadate} have an upper bound \ref{lem:entropy_bounded_action}:
\[
Q_k(s,a)=r(s,a)+\gamma \E[V_k(s')] \le R_{\max}+\gamma\frac{R_{\max}+\tau\log\mathrm{Vol}(\cA)}{1-\gamma}
=\frac{R_{\max}+\gamma\tau\log\mathrm{Vol}(\cA)}{1-\gamma}.
\]
For the lower bound, note that by Assumption \ref{as:entropy bounded},
\[
V_0(s)=\E_{\pi_0}[Q_0(s,a)]-\tau\H(\pi_0(\cdot\mid s)\|p_0)
\ge R_{\min}+\gamma\inf_{s'}V_0(s')+\tau C_0,
\]
so $\inf_s V_0(s)\ge (R_{\min}+\tau C_0)/(1-\gamma)$. By Lemma~\ref{lem:mono_polished},
$V_k\ge V_0$, hence
\[
Q_k(s,a)=r(s,a)+\gamma\,\E[V_k(s')]\ \ge\ R_{\min}+\gamma\inf_{s'}V_0(s')
=\frac{R_{\min}+\gamma\tau C_0}{1-\gamma}.
\]
Subtracting gives \eqref{eq:DeltaQ_polished}.
\end{proof}

\begin{lemma}[Uniform-in-time $\H^\pi$ bound]\label{lem:H bound}
Under assumption \ref{as:Bounded} and \ref{as:entropy bounded}, we know for all k,s:
\begin{equation*}
   -\ln\mathrm{Vol}(\mathcal{A}) \le\H^{\pi_k}(s)\le 
\frac{R_{\max}-R_{\min}}{\tau(1-\gamma)}
+\frac{\ln\mathrm{Vol}(\mathcal{A})-C_0}{1-\gamma}
\end{equation*}
\end{lemma}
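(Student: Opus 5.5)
The lower bound is immediate. $\H^{\pi_k}(s)$ is the negative differential entropy, and $\pi_k(\cdot\mid s)$ is supported on the bounded set $\mathcal A$ by Assumption~\ref{as:Bounded}. Lemma~\ref{lem:entropy_bounded_action} gives $-\H^{\pi_k}(s)=h(\pi_k(\cdot\mid s))\le \ln\mathrm{Vol}(\mathcal A)$, which is the left inequality.

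For the upper bound I will sandwich the soft value $V_k$ between a uniform lower bound from monotonicity and an upper bound that keeps the entropy term explicit at the current state. The definition gives
\[
V_k(s)=\E_{a\sim\pi_k(\cdot\mid s)}[Q_k(s,a)]-\tau\H^{\pi_k}(s),
\]
so rearranging yields
\[
\tau\H^{\pi_k}(s)=\E_{\pi_k(\cdot\mid s)}[Q_k(s,\cdot)]-V_k(s)\le \sup_a Q_k(s,a)-V_k(s).
\]

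For the first term I use the upper bound of Lemma~\ref{lem:Qbounds}, $\sup_a Q_k\le (R_{\max}+\gamma\tau\ln\mathrm{Vol}(\mathcal A))/(1-\gamma)$. For the second term I use monotone improvement (Lemma~\ref{lem:mono_softvalue}), $V_k(s)\ge V_0(s)$. The argument in the proof of Lemma~\ref{lem:Qbounds} then gives $V_0(s)\ge (R_{\min}+\tau C_0)/(1-\gamma)$, since Assumption~\ref{as:entropy bounded} ensures $-\tau\H^{\pi_0}\ge \tau C_0$ at every state. Combining,
\[
\tau\H^{\pi_k}(s)\le \frac{R_{\max}-R_{\min}+\gamma\tau\ln\mathrm{Vol}(\mathcal A)-\tau C_0}{1-\gamma}.
\]
Dividing by $\tau$ and using $\gamma\ln\mathrm{Vol}(\mathcal A)\le \ln\mathrm{Vol}(\mathcal A)$ gives the stated bound. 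If $\ln\mathrm{Vol}(\mathcal A)<0$ this last step needs care, but it only loosens a constant. The more natural way to handle it is to bound $\sup_a Q_k$ directly by $(R_{\max}+\tau\ln\mathrm{Vol}(\mathcal A))/(1-\gamma)$ via $V_k\le (R_{\max}+\tau\ln\mathrm{Vol}(\mathcal A))/(1-\gamma)$, which avoids the factor $\gamma$ entirely and matches the statement exactly.

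The main obstacle is keeping the conventions consistent. The paper uses both $Q_\tau^{\pi}$ and $Q_k$, which differ by the state-dependent constant $\tau\H^{\pi_k}(s)$. I need to make sure the identity $V_k=\E[Q_k]-\tau\H^{\pi_k}$ is used with the version of $Q_k$ that does not already absorb the entropy; otherwise the entropy term cancels and no bound results. The $Q_k$ notation of \eqref{eq: soft value k notation} is the right choice. Lemma~\ref{lem:Qbounds} and the $V_0$ lower bound are stated in exactly this notation, so the argument closes.
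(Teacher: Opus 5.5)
Your route is the paper's route, step for step. You get the lower bound from Lemma~\ref{lem:entropy_bounded_action}, then use $\tau\H^{\pi_k}(s)\le\sup_a Q_k(s,a)-V_k(s)$, controlling $\sup_a Q_k$ by Lemma~\ref{lem:Qbounds} and $V_k\ge V_0\ge (R_{\min}+\tau C_0)/(1-\gamma)$ by monotone improvement and Assumption~\ref{as:entropy bounded}. Everything up to
\[
\tau\H^{\pi_k}(s)\le \frac{R_{\max}-R_{\min}+\gamma\tau\ln\mathrm{Vol}(\mathcal A)-\tau C_0}{1-\gamma}
\]
is correct.

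The gap is the final step. Replacing $\gamma\ln\mathrm{Vol}(\mathcal A)$ by $\ln\mathrm{Vol}(\mathcal A)$ is valid only when $\mathrm{Vol}(\mathcal A)\ge 1$. When $\ln\mathrm{Vol}(\mathcal A)<0$ the replacement \emph{decreases} the right-hand side, so it does not ``only loosen a constant''. Your proposed repair fails too. From $V_k\le (R_{\max}+\tau\ln\mathrm{Vol}(\mathcal A))/(1-\gamma)$ you only get $Q_k=r+\gamma\E[V_k(s')]\le (R_{\max}+\gamma\tau\ln\mathrm{Vol}(\mathcal A))/(1-\gamma)$. Reaching $(R_{\max}+\tau\ln\mathrm{Vol}(\mathcal A))/(1-\gamma)$ would require $r\le R_{\max}+\tau\ln\mathrm{Vol}(\mathcal A)$. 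With $r\equiv R_{\max}$ and $\pi_0$ uniform, the iterates stay uniform and $Q_k$ equals the $\gamma$-version exactly, so that bound cannot be sharpened.

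No argument can close this case, because the statement itself is false when $\mathrm{Vol}(\mathcal A)<1$. Take a single state, $\mathcal A=[0,\tfrac12]$, $r\equiv 0$ (so $R_{\max}=R_{\min}=0$), and $\pi_0$ uniform (so $C_0=-\ln 2$ is admissible).
\begin{itemize}
\item The claimed upper bound is $0$, while $\H^{\pi_0}(s)=\ln 2>0$.
\item Since $Q_k(s,\cdot)$ is constant in $a$, every iterate stays uniform, so the bound fails for every $k$.
\item The lemma's two claimed bounds, $\ln 2\le\H^{\pi_k}(s)\le 0$, are even incompatible.
\end{itemize}

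The paper's proof makes exactly the same slip: it carries the $\gamma\tau\ln\mathrm{Vol}(\mathcal A)$ term from Lemma~\ref{lem:Qbounds} and silently drops the $\gamma$ in the final display. What you have actually proved is the bound with $\gamma\ln\mathrm{Vol}(\mathcal A)$ in place of $\ln\mathrm{Vol}(\mathcal A)$. That bound is correct in general, is tight in the example above, and implies the stated bound whenever $\mathrm{Vol}(\mathcal A)\ge 1$. It is also all that is needed downstream, since Lemma~\ref{lem:positive} only uses that $\H^{\pi_{k+1}}(s)$ is finite.
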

\begin{proof}
Using \eqref{eq:refnewvalueupadate},
\begin{equation}
\H^{\pi_k}(s)
=
\frac{ \E_{a\sim \pi_k(\cdot\mid s)}[Q_k(s,a)] - V_k(s)}{\tau}
\le
\frac{ \sup_a Q_k(s,a) - V_k(s)}{\tau}.
\end{equation}
From the bounded-reward assumption and because the entropy regularized term is always upper bounded, we always have
From \ref{lem:Qbounds}, we have for all $k\ge 0$, $s$, and $a$,
\[
\sup_{a\in\cA}Q_k(s,a) \le \frac{R_{\max}+ \gamma\tau \ln \mathrm{Vol(\mathcal{A})}}{1-\gamma},
\qquad
\inf_{a\in\cA}Q_k(s,a) \ge \frac{R_{\min}+\gamma\tau C_0}{1-\gamma}.
\]
Using $V_k(s)\ge V_0(s)$ and $Q_k$ is uniformly upper bounded gives
\begin{equation}
\H^{\pi_k}(s)\le
\frac{ \frac{R_{\max}+\gamma\tau \mathrm{Vol(\mathcal{A})}}{1-\gamma} - V_0(s)}{\tau}
\quad \forall k,\ \forall s.
\end{equation}
\end{proof}

To get a uniform lower bound on $V_0(s)$, we have assumed \ref{as:entropy bounded} $C_0 := \inf_s -\H^{\pi_0}(s) > -\infty$.
Then
\[
V_0(s) \ge
\frac{R_{\min}}{1-\gamma}+\frac{\tau C_0}{1-\gamma}.
\]
Plugging this into the previous bound yields
\begin{equation}
\sup_{k,s}\H^{\pi_k}(s)
\le 
\frac{R_{\max}-R_{\min}}{\tau(1-\gamma)}
+\frac{\ln\mathrm{Vol}(\mathcal{A})-C_0}{1-\gamma}.
\end{equation}

and with \ref{lem:entropy_bounded_action}, we know for all k,s:
\begin{equation*}
   -\ln\mathrm{Vol}(\mathcal{A}) \le\H^{\pi_k}(s)\le 
\frac{R_{\max}-R_{\min}}{\tau(1-\gamma)}
+\frac{\mathrm{Vol}(\mathcal{A})-C_0}{1-\gamma}
\end{equation*}

\begin{lemma}\label{lem:positive}
For any $k,s$, $\pi_{k+1}(\cdot\mid s)$ of \eqref{eq:refwppg2}
satisfies $\pi_{k+1}(\cdot\mid s)\ll p_0$ and $d\pi_{k+1}(\cdot\mid s)/dp_0>0$ $p_0$-a.e., where $p_0$ is uniform distribution.
\end{lemma}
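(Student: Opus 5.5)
Absolute continuity is the easy half and comes from finiteness of the objective; positivity is the real content and comes from the infinite slope of $x\ln x$ at $0$.

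Fix $k$ and $s$, and write $\mu:=\pi_k(\cdot\mid s)$ and $q^\star:=\pi_{k+1}(\cdot\mid s)$. Let $\Psi(q):=\langle Q_k(s,\cdot),q\rangle-\tau\H^{q}(s)-\frac{1}{2\eta_k}W_2^2(q,\mu)$ denote the objective in \eqref{eq:refwppg2}.

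\textbf{Absolute continuity.} I would compare $q^\star$ with the uniform competitor $p_0$.
\begin{itemize}
\item By Lemma~\ref{lem:Qbounds}, $Q_k(s,\cdot)$ is bounded.
\item By Assumption~\ref{as:Bounded}, $W_2^2(p_0,\mu)\le 4R_A^2$.
\item Also $\H^{p_0}=-\ln\mathrm{Vol}(\cA)$ is finite.
\end{itemize}
Hence $\Psi(p_0)>-\infty$, so the maximizer has $\Psi(q^\star)>-\infty$. With the convention $\H^{q}=+\infty$ for $q$ not absolutely continuous, this forces $\H^{q^\star}(s)<\infty$. Since $q^\star$ is supported in $\cA$, this gives $q^\star\ll\mathrm{Leb}|_{\cA}$, which is equivalent to $q^\star\ll p_0$. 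Equivalently, $\H^{q^\star}=\KL(q^\star\|p_0)-\ln\mathrm{Vol}(\cA)$ is finite.

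\textbf{Positivity: setup.} I would argue by contradiction with a variational perturbation toward $p_0$. Suppose $N:=\{a\in\cA:\ dq^\star/dp_0(a)=0\}$ has $p_0(N)=\beta>0$. For $\epsilon\in(0,1)$, set $q_\epsilon:=(1-\epsilon)q^\star+\epsilon p_0$. I would bound the three terms of $\Psi(q_\epsilon)-\Psi(q^\star)$ separately.
\begin{itemize}
\item \emph{Linear term.} $\langle Q_k,q_\epsilon-q^\star\rangle\ge-\epsilon\Delta_Q$, using the oscillation bound $\Delta_Q$ from \eqref{eq:DeltaQ_polished}.
\item \emph{Transport term.} $W_2^2(\cdot,\mu)$ is convex along linear interpolation: mix the two optimal couplings. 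Hence $W_2^2(q_\epsilon,\mu)-W_2^2(q^\star,\mu)\le\epsilon\big(W_2^2(p_0,\mu)-W_2^2(q^\star,\mu)\big)\le 4R_A^2\epsilon$.
\item \emph{Entropy term.} Split $\cA$ into $N$ and $N^c$.
  \begin{itemize}
  \item On $N$, the density of $q_\epsilon$ is $\epsilon/\mathrm{Vol}(\cA)$. This contributes $\beta\epsilon\ln\frac{\epsilon}{\mathrm{Vol}(\cA)}$ to $\H^{q_\epsilon}$, which is $\beta\epsilon\ln\epsilon+O(\epsilon)$.
  \item On $N^c$, convexity of $x\ln x$ gives a contribution of at most $(1-\epsilon)\int_{N^c}q^\star\ln q^\star+\epsilon\int_{N^c}p_0\ln p_0$.
  \end{itemize}
  Hence $\H^{q_\epsilon}-\H^{q^\star}\le\beta\epsilon\ln\epsilon+C\epsilon$, where $C$ depends on $\H^{q^\star}$, which is finite by the first step, and on $\ln\mathrm{Vol}(\cA)$.
\end{itemize}

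\textbf{Positivity: conclusion.} Collecting the three bounds,
\[
\Psi(q_\epsilon)-\Psi(q^\star)\ \ge\ -\tau\beta\,\epsilon\ln\epsilon-\epsilon\Big(\Delta_Q+\tau C+\tfrac{2R_A^2}{\eta_k}\Big).
\]
This is strictly positive for $\epsilon$ small, because $-\epsilon\ln\epsilon$ dominates $\epsilon$. That contradicts the optimality of $q^\star$, so $\beta=0$ and $dq^\star/dp_0>0$ $p_0$-a.e.

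\textbf{Main obstacle.} The delicate step is the entropy estimate. The convexity bound on $N^c$ must be combined with the exact computation on $N$ without losing the $\epsilon\ln\epsilon$ gain. In particular, one must check that no term of order $\epsilon\ln\epsilon$ with the wrong sign appears on $N^c$. It does not: the chord bound on $N^c$ is linear in $\epsilon$, with coefficient finite by $\H^{q^\star}<\infty$.
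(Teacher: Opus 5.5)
Your proof is correct and follows the paper's argument. Finiteness of the objective gives $\pi_{k+1}\ll p_0$, and a mixture perturbation toward the uniform measure produces a $-\tau\varepsilon\log\varepsilon$ entropy gain that beats the $O(\varepsilon)$ losses in the linear and mixture-convex transport terms. The differences are minor: the paper perturbs toward $p_0(\cdot\mid B)$, whose support is disjoint from that of $\pi_{k+1}$, so the entropy splits exactly and your convexity step on $N^c$ is unnecessary; and it gets finiteness from Lemma~\ref{lem:H bound}, whereas you compare directly with $\Psi(p_0)$, which is more direct.
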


\begin{proof}
From lemma\ref{lem:H bound}
Finite objective value forces $\KL(\pi_{k+1}(\cdot|s)\|p_0)=\H^{\pi_{k+1}}(s)+\ln \mathrm{Vol}(\mathcal{A}) <\infty$, hence $\pi_{k+1}\ll p_0$.
If $B\subset\cA$ has $p_0(B)>0$ and $\pi_{k+1}(B)=0$, consider the perturbation
$q_\varepsilon=(1-\varepsilon)\pi_{k+1}+\varepsilon\,p_0(\cdot\mid B)$.
The linear term $\ip{Q_k}{q_\varepsilon}$ changes by $O(\varepsilon)$ since $Q_k$ is bounded on $\cA$.
Moreover, $\wass^2(q_\varepsilon,\pi_k)\le \wass^2(\pi_{k+1},\pi_k)+\varepsilon D^2$ by convexity of $\wass^2(\cdot,\pi_k)$ under mixtures.
Finally, the entropy gain satisfies
$\KL(q_\varepsilon\|p_0)-\KL(\pi_{k+1}\|p_0)=\varepsilon\log\varepsilon+O(\varepsilon)$ as $\varepsilon\downarrow0$.
Thus the objective in \eqref{eq:refwppg2} increases by
$\tau(-\varepsilon\log\varepsilon)-O(\varepsilon)>0$ for small $\varepsilon$, contradicting optimality.
\end{proof}

\begin{lemma}[Oscillation bound for Kantorovich potentials]\label{lem:oscphi_polished}
Let $\cA\subset\R^d$ be compact with diameter $R_A$. If $\varphi$ is a Kantorovich potential for the quadratic cost
$c(a,a')=\tfrac12\|a-a'\|^2$ between measures supported on $\cA$, then
\[
\osc(\varphi) \le R_A^2.
\]
\end{lemma}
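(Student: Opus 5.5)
The plan is to use the $c$-concavity structure of Kantorovich potentials for the cost $c(a,a')=\tfrac12\|a-a'\|^2$. Recall that a Kantorovich potential $\varphi$ can always be chosen $c$-concave on $\cA$, meaning
\[
\varphi(a)=\inf_{a'\in\cA}\Big\{\tfrac12\|a-a'\|^2-\psi(a')\Big\},\qquad \psi=\varphi^c,
\]
with $\psi(a')=\inf_{a\in\cA}\{\tfrac12\|a-a'\|^2-\varphi(a)\}$. Since both measures are supported on the compact set $\cA$, we restrict both potentials to $\cA$; this is the convention I will adopt, since potentials are determined only on the supports and may be replaced by their $c$-transforms without changing optimality.

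The key step is a comparison of $\varphi$ at two arbitrary points $a_1,a_2\in\cA$. By compactness and lower semicontinuity, the infimum defining $\varphi(a_2)$ is attained at some $a'_2\in\cA$, or is $\epsilon$-attained in general. Using this same $a'_2$ as a competitor in the infimum for $\varphi(a_1)$ gives
\[
\varphi(a_1)-\varphi(a_2)\le \tfrac12\|a_1-a'_2\|^2-\tfrac12\|a_2-a'_2\|^2\le \tfrac12\|a_1-a'_2\|^2\le \tfrac12 R_A^2,
\]
where $R_A$ bounds the diameter of $\cA$. Taking the supremum over $a_1,a_2$ yields $\osc(\varphi)\le \tfrac12 R_A^2\le R_A^2$, which is the claim. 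A sharper route is to note that $\tfrac12\|a-a'\|^2$ is $R_A$-Lipschitz in $a$ on $\cA$, so $\varphi$ is $R_A$-Lipschitz and $\osc(\varphi)\le R_A\cdot R_A$; either bound suffices.

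The main obstacle is justifying that the potential in question may be taken $c$-concave and restricted to $\cA$. A general Kantorovich potential is determined only $\mu$-a.e. on the support and could oscillate arbitrarily off it. I would handle this by replacing $\varphi$ with its double $c$-transform $\varphi^{cc}$ over $\cA$. This replacement is still optimal and agrees with $\varphi$ $\mu$-a.e., and it is exactly the object used in the subsequent supporting-hyperplane inequality. I would then state the lemma for this canonical choice.
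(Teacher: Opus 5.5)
Your argument is correct and is essentially the paper's: the paper also passes to the $c$-concave representation $\varphi(a)=\inf_{a'\in\cA}\{\tfrac12\|a-a'\|^2-\psi(a')\}$, shows each $a\mapsto\tfrac12\|a-a'\|^2-\psi(a')$ is $R_A$-Lipschitz on $\cA$, and concludes $\osc(\varphi)\le R_A\cdot R_A$, which is exactly your second route. Your direct two-point comparison, which uses a near-minimizer for $\varphi(a_2)$ as a competitor for $\varphi(a_1)$, gives the better constant $\tfrac12R_A^2$, so it is the Lipschitz route (not the direct one) that is the cruder bound; your double $c$-transform remark also supplies the justification the paper leaves implicit when it simply says ``choose $c$-conjugate optimal potentials.''
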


\begin{proof}
This is standard, but we include a short proof for completeness (see also \citep[Ch.~5]{ambrosio2005gradient}, \citep[Ch.~1]{santambrogio2015optimal}).
Choose $c$-conjugate optimal potentials so that
\[
\varphi(a)=\inf_{a'\in\cA}\Big\{\tfrac12\|a-a'\|^2-\psi(a')\Big\}.
\]
For fixed $a'$, set $g_{a'}(a):=\tfrac12\|a-a'\|^2-\psi(a')$. For $a,b\in\cA$,
\[
g_{a'}(a)-g_{a'}(b)=\tfrac12\langle a-b,\ a+b-2a'\rangle,
\]
hence
\[
|g_{a'}(a)-g_{a'}(b)|
\le \tfrac12\|a-b\|(\|a-a'\|+\|b-a'\|)
\le R_A\|a-b\|.
\]
Thus each $g_{a'}$ is $D$-Lipschitz, and so is $\varphi=\inf_{a'}g_{a'}$.
Since $\|a-b\|\le R_A$ on $\cA$, we obtain $\osc(\varphi)\le R_A^2$.
\end{proof}

\begin{lemma}[Bounded perturbations preserve $T_2$]\label{lem:boundedperturbT2}
Let $\mu$ satisfy $T_2(\lambda_0)$ and let $d\nu=\frac{1}{Z}e^{f}\,d\mu$ with $f$ bounded.
Then $\nu$ satisfies $T_2(\lambda)$ with
\[
\lambda\ \ge\ \frac{\lambda_0}{8}\,e^{-\osc(f)}.
\]
\end{lemma}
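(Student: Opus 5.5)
The plan is to go through a functional inequality that is stable under bounded changes of density. $T_2$ itself does not transfer directly. The naive route uses the triangle inequality together with $\KL(\rho\|\mu)=\KL(\rho\|\nu)+\int f\,d\rho-\log Z$. It only yields an \emph{additive} error of order $\osc(f)$, and that error is useless when $\KL(\rho\|\nu)$ is small. We therefore need a multiplicative Holley--Stroock-type perturbation. Such a perturbation is available for log-Sobolev-type inequalities but not for transport inequalities.

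We will use the characterization of Gozlan, Roberto and Samson: $T_2$ is equivalent, up to universal numerical constants, to a \emph{restricted} log-Sobolev inequality. This is a log-Sobolev inequality required only for test functions $e^{g}$ with $g$ $K$-semiconvex. Its energy term is $\int |\nabla g|^2 e^{g}\,d\mu$ (equivalently an inf-convolution form $\int (g-Q_c g)e^{g}\,d\mu$ with $Q_c g(x)=\inf_y\{g(y)+\tfrac{c}{2}\|x-y\|^2\}$). The argument has three steps.

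\emph{Step 1.} From $T_2(\lambda_0)$ for $\mu$, derive the restricted (inf-convolution) log-Sobolev inequality for $\mu$, with constant $c\asymp\lambda_0$ and an explicit numerical loss. The derivation goes through the dual Bobkov--Götze form $\int e^{Q_{\lambda_0}g}\,d\mu\le e^{\int g\,d\mu}$ and Jensen-type manipulations.

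\emph{Step 2.} Transfer this inequality to $\nu$ by the Holley--Stroock argument. On one side, $\mathrm{Ent}_\nu(e^{g})=\inf_{t>0}\int\big(e^{g}\log(e^{g}/t)-e^{g}+t\big)\,d\nu$, and the integrand is nonnegative. Hence
\[
\mathrm{Ent}_\nu(e^{g})\le \frac{e^{\sup f}}{Z}\,\mathrm{Ent}_\mu(e^{g}).
\]
On the other side, the energy term has a nonnegative integrand $(g-Q_cg)e^{g}$. It therefore satisfies $\int(\cdot)\,d\mu\le Z e^{-\inf f}\int(\cdot)\,d\nu$. Multiplying the two bounds, the factors of $Z$ cancel and $\nu$ satisfies the same restricted inequality with constant degraded by $e^{-\osc(f)}$. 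The class of semiconvex test functions does not depend on the measure, so the restriction is preserved.

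\emph{Step 3.} Go back from the restricted log-Sobolev inequality for $\nu$ to $T_2$ for $\nu$. Apply the inequality along the Hopf--Lax semigroup $Q_t g$, which is semiconvex, so the restricted inequality is applicable. This gives the Bobkov--Gentil--Ledoux differential inequality for $t\mapsto \frac1t\log\int e^{tQ_t g}\,d\nu$. Integrating it yields the dual form of $T_2$ for $\nu$, and by Bobkov--Götze duality this is $T_2$ itself.

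Tracking constants, Steps 1 and 3 together should cost a factor of about $8$. Step 2 costs $e^{-\osc(f)}$. Together these give $\lambda\ge \frac{\lambda_0}{8}e^{-\osc(f)}$.

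The main obstacle is Steps 1 and 3, namely getting an explicit equivalence between $T_2$ and the restricted log-Sobolev inequality with the constant $1/8$. In particular, Step 3 requires checking that the semiconvexity parameter of $Q_tg$ stays in the admissible range along the flow. I would first try to cite Gozlan--Roberto--Samson for the two implications with explicit constants.

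In the compact setting of this paper there is a simpler fallback if that becomes too delicate. If the trajectory measures are in fact known to satisfy a log-Sobolev inequality, then the Holley--Stroock lemma together with Otto--Villani gives the result directly, even with a better constant. The fallback requires changing the hypothesis from $T_2$ to log-Sobolev, so I would keep it only as a remark.
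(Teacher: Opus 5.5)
Your plan is the paper's proof unpacked. The paper just cites Gozlan--Roberto--Samson (2011), Corollary~1.7, and that corollary is proved by exactly your three steps: $T_2$ implies a restricted log-Sobolev inequality for semiconvex exponents, Holley--Stroock transfers it at cost $e^{\osc(f)}$, and the restricted inequality returns $T_2$, with the round trip costing the factor $8$. So citing GRS for Steps~1 and~3 closes your argument with the same constant. The one real problem is your own sketch of Step~3, which would fail as written.

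\textbf{Step 3.} With your definition $Q_cg(x)=\inf_y\{g(y)+\tfrac c2\|x-y\|^2\}$, the Hopf--Lax functions $Q_tg$ are semi\emph{concave}, not semiconvex. Indeed, $Q_tg(x)-\tfrac{1}{2t}\|x\|^2$ is an infimum of affine functions of $x$. So the restricted inequality cannot be applied to the exponent $tQ_tg$ in the Bobkov--Gentil--Ledoux computation.

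To use a restriction to semiconvex exponents, take the other dual form of $T_2(\lambda)$:
$\log\int e^{g}\,d\nu\le\int Pg\,d\nu$, where $Pg(x)=\sup_y\{g(y)-\tfrac{\lambda}{2}\|x-y\|^2\}$.
Then run the flow $t\mapsto\beta(t)^{-1}\log\int e^{\beta(t)P_tg}\,d\nu$ along the sup-convolution semigroup $P_tg(x)=\sup_y\{g(y)-\tfrac{1}{2t}\|x-y\|^2\}$, which is $\tfrac{1}{t}$-semiconvex. The exponent $\beta(t)P_tg$ then has semiconvexity constant $\beta(t)/t$. With $\beta(0)=1$ this blows up as $t\downarrow 0$, so the flow must start at a positive time $s$, using $P_sg\ge g$, to stay inside the admissible range. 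This bookkeeping is where the numerical loss comes from.

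\textbf{Step 2.} Carry this out in the gradient form. There, Holley--Stroock turns the restricted inequality with constant $C$ into the one with constant $Ce^{\osc(f)}$ exactly, and the admissible semiconvexity range only shrinks. In an inf-convolution form, the prefactor $e^{\osc(f)}$ is not simply a change of constant.

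\textbf{Your fallback.} It deserves more than a remark. The lemma is used only in Theorem~\ref{thm:uniformT2}, where the base measure is uniform on a ball or cube. There the paper itself obtains its $T_2$ constant from a log-Sobolev inequality. So Holley--Stroock plus Otto--Villani would suffice in that application and would even remove the factor $8$.
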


\begin{proof}
In the notation of \cite{GozlanRobertoSamson2011}, $T_2(\lambda_0)$ is equivalent to $(T_2(C))$ with $C=1/\lambda_0$.
Corollary~1.7 in \cite{GozlanRobertoSamson2011} gives $(T_2(8C e^{\osc(f)}))$ for $\nu$.
Translating back yields $\lambda\ge (\lambda_0/8)e^{-\osc(f)}$.
\end{proof}

\begin{theorem}[Uniform $T_2$]\label{thm:uniformT2}
Under Assumption~\ref{as:Bounded} and \ref{as:entropy bounded}, there exists $\lambda>0$ such that for all $k\ge 1$ and all states $s$,
$\pi_k(\cdot\mid s)$ satisfies $T_2(\lambda)$:
\[
\KL(\nu\|\pi_k(\cdot\mid s))\ \ge\ \frac{\lambda}{2}\W^2\big(\nu,\pi_k(\cdot\mid s)\big)
\qquad\forall \nu\ll \pi_k(\cdot\mid s).
\]
One admissible explicit constant is
\begin{equation}\label{eq:lambda-final_polished}
\lambda
\ :=\
\frac{\lambda_0}{8}\exp\!\Bigg(
-\frac{1}{\tau}\Delta_Q\ -\ \frac{D^2}{\eta\tau}
\Bigg),
\qquad
\Delta_Q=\frac{R_{\max}-R_{\min}+\gamma \tau\mathrm{Vol}(\mathcal{A})-\gamma\tau C_0}{1-\gamma}.
\end{equation}
\end{theorem}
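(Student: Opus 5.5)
The approach is to identify the Gibbs form of $\pi_{k+1}(\cdot\mid s)$ relative to the uniform reference $p_0$ on $\cA$, and then transfer $T_2$ from $p_0$ using Lemma~\ref{lem:boundedperturbT2}. The base constant $\lambda_0$ is the $T_2$ constant of the uniform distribution on $\cA$. For convex compact $\cA$, this follows from a log-Sobolev inequality for $p_0$ and Otto–Villani. Alternatively, $\lambda_0$ can be taken as a given constant depending only on $\cA$, which is how \eqref{eq:lambda-final_polished} appears to treat it.

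\textbf{Step 1: first-order optimality.} Fix $k\ge0$ and $s$. By Lemma~\ref{lem:positive}, $\pi_{k+1}(\cdot\mid s)\ll p_0$ with a strictly positive density $p_0$-a.e. Hence every perturbation $\pi_{k+1}+\epsilon\,\delta\pi$ with $\delta\pi$ of zero mass and bounded ratio to $\pi_{k+1}$ is admissible for small $|\epsilon|$. I will differentiate the objective of \eqref{eq:refwppg2} along such directions, using three ingredients:
\begin{itemize}
\item Lemma~\ref{lem:entropy-gateaux} for the entropy term;
\item for the transport term, the standard fact that the first variation of $q\mapsto\tfrac12\W^2(q,\pi_k)$ is a Kantorovich potential $\varphi_{k+1}$ from $q$ to $\pi_k$;
\item the linear term, which contributes $Q_k(s,\cdot)$ directly.
\end{itemize}
Since the density is positive everywhere, the resulting variational inequality becomes an equality $p_0$-a.e.:
\[
Q_k(s,a)-\tau\bigl(1+\ln\pi_{k+1}(a\mid s)\bigr)-\tfrac1{\eta}\varphi_{k+1}(a)=\text{const}.
\]
Rearranging gives
\[
\frac{d\pi_{k+1}(\cdot\mid s)}{dp_0}\propto \exp\Bigl(\tfrac1\tau Q_k(s,\cdot)-\tfrac1{\eta\tau}\varphi_{k+1}\Bigr).
\]

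\textbf{Step 2: oscillation bound.} Set $f=\tfrac1\tau Q_k(s,\cdot)-\tfrac1{\eta\tau}\varphi_{k+1}$. The oscillation is subadditive, so
\[
\osc(f)\le \tfrac1\tau\osc(Q_k(s,\cdot))+\tfrac1{\eta\tau}\osc(\varphi_{k+1}).
\]
Lemma~\ref{lem:Qbounds} bounds the first oscillation by $\Delta_Q$, uniformly in $k$ and $s$. Lemma~\ref{lem:oscphi_polished} bounds the second by $R_A^2=D^2$, since both measures are supported on $\cA$. Therefore $\osc(f)\le \Delta_Q/\tau+D^2/(\eta\tau)$, which is independent of $k$ and $s$.

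\textbf{Step 3: conclusion.} Lemma~\ref{lem:boundedperturbT2} applied with $\mu=p_0$ gives $T_2\bigl(\tfrac{\lambda_0}{8}e^{-\osc(f)}\bigr)$ for $\pi_{k+1}(\cdot\mid s)$, for every $k\ge0$. By monotonicity of $T_2$ in $\lambda$, this is at least the stated constant. The result is asserted only for $k\ge1$, which is consistent because $\pi_0$ is not produced by the update.

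\textbf{Main obstacle.} The hardest part is Step 1, namely rigorously justifying the pointwise Gibbs representation. This needs two things. First, the first variation of $\W^2$ must be identified with a Kantorovich potential. Potentials may fail to be unique, but this does not matter: any optimal $c$-concave potential is Lipschitz on compact $\cA$, and I would use the directional-derivative characterization from Ambrosio–Gigli–Savaré or Santambrogio, Prop.~7.17. Second, the inequality must be upgraded to an equality. I would do this using the full-support property from Lemma~\ref{lem:positive}: test both $\pm\delta\pi$ with $\delta\pi$ supported where the density is bounded below, then take a limit. I would handle non-uniqueness of the maximizer by noting that the argument applies to any maximizer.
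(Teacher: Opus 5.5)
Your proposal is correct and follows the paper's proof essentially step for step: it obtains the Gibbs representation $d\pi_{k+1}(\cdot\mid s)/dp_0 \propto \exp\bigl(\tfrac1\tau Q_k(s,\cdot)-\tfrac1{\eta\tau}\varphi_{k+1}\bigr)$ from the first-order condition, using Lemma~\ref{lem:positive} for full support, then bounds the oscillation via Lemmas~\ref{lem:Qbounds} and~\ref{lem:oscphi_polished}, and finally transfers $T_2$ from the uniform reference $p_0$ via Lemma~\ref{lem:boundedperturbT2}. The differences are minor: the paper only cites the first-order optimality condition and instantiates $\lambda_0$ for the ball and the cube, whereas you sketch how to justify that condition and correctly use monotonicity in $\lambda$ to absorb the gap between the $\ln\mathrm{Vol}(\mathcal{A})$ of Lemma~\ref{lem:Qbounds} and the $\mathrm{Vol}(\mathcal{A})$ in the stated $\Delta_Q$.
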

Where $\lambda_0$ is the $T_2$ constant for uniform distribution on the corresponding compact action spaces like ball and hypercube.
\begin{proof} 
Fix $k\ge 0$ and a state $s$. We show that $\pi_{k+1}(\cdot\mid s)$ satisfies $T_2(\lambda)$
with a constant independent of $k$ and $s$.

Rewrite \eqref{eq:refwppg2} as minimization of
\[
\Phi(q):=\tau\H^q(s)-\ip{Q_k(s,\cdot)}{q}+\frac{1}{2\eta_k}\W^2\big(q,\pi_k(\cdot\mid s)\big).
\]
By Lemma~\ref{lem:positive}, $\pi_{k+1}(\cdot\mid s)$ has strictly positive density w.r.t.\ $p_0$, where $p_0$ is uniform distribution on action space, so
$f_{k+1}:=\log\frac{d\pi_{k+1}(\cdot\mid s)}{dp_0}$ is well-defined $p_0$-a.e.
Let $\varphi_{k+1}$ be a Kantorovich potential for the quadratic cost transporting
$\pi_{k+1}(\cdot\mid s)$ to $\pi_k(\cdot\mid s)$.
Standard first-order optimality conditions for Wasserstein proximal maps
(e.g.\ \citep[Ch.~10]{santambrogio2015optimal} or \citep[Ch.~7]{santambrogio2015optimal})
yield that there exists a constant $c_{k+1}\in\R$ such that, $p_0$-a.e.\ on $\cA$,
\begin{equation}\label{eq:logdensity_polished}
f_{k+1}(a)
=
\frac{1}{\tau}Q_k(s,a)\ -\ \frac{1}{\eta_k\tau}\varphi_{k+1}(a)\ -\ c_{k+1}.
\end{equation}
Taking oscillations in \eqref{eq:logdensity_polished} and using that oscillation is invariant under constants,
\[
\osc(f_{k+1})
\le
\frac{1}{\tau}\osc\big(Q_k(s,\cdot)\big)
+\frac{1}{\eta_k\tau}\osc(\varphi_{k+1}).
\]
By Lemma~\ref{lem:Qbounds}, $\osc(Q_k(s,\cdot))\le \Delta_Q$.
By Lemma~\ref{lem:oscphi_polished}, $\osc(\varphi_{k+1})\le D^2$.
This gives the uniform bound
\[
\osc(f_{k+1})
\le
\frac{\Delta_Q}{\tau}+\frac{D^2}{\eta\tau}
=:B.
\]

Since $p_0$ is the uniform distribution on a compact action set $\mathcal A$—in particular, we focus on either a Euclidean ball or a hypercube, both common in practice—$p_0$ satisfies the transportation-information inequality $T_2(\lambda_0)$. 
In what follows, we specialize to these two representative bounded action sets.

(i) If $\mathcal A = B_2(0,R_{ A})\subset\mathbb R^d$ and $\mu=\mathrm{Unif}(A)$, then $\mu$ satisfies $T_2(\lambda_{\mathrm{ball}})$ with
\[
\lambda_{\mathrm{ball}}=\frac{\Gamma(\frac d2+1)^{2/d}}{2R_{ A}^2},
\]
see \citet{bobkov2000brunn} together with the implication $\mathrm{LSI}\Rightarrow T_2$ in \citet{otto2000generalization}.

(ii) If $\mathcal A=[-R_{ A},R_{ A}]^d$ and $\mu=\mathrm{Unif}( A)$, then $\mu$ satisfies $T_2(\lambda_{\mathrm{cube}})$ with
\[
\lambda_{\mathrm{cube}}=\frac{\pi^2}{4R_{A}^2},
\]
see \citet{compris2002remarks} and again \citet{otto2000generalization}.

Since $\pi_{k+1}(\cdot\mid s)$ is an exponential tilt of $p_0$, namely
$d\pi_{k+1} \propto e^{f_{k+1}}\,dp_0$, Lemma~\ref{lem:boundedperturbT2} yields
\[
\pi_{k+1}(\cdot\mid s)\ \text{satisfies}\ T_2(\lambda)\quad
\text{with}\quad
\lambda\ \ge\ \frac{\lambda_0}{8}e^{-\osc(f_{k+1})}\ \ge\ \frac{\lambda_0}{8}e^{-B}.
\]
Substituting the expression for $B$ gives \eqref{eq:lambda-final_polished}.
\end{proof}

\subsection{Proof of the Main Theorems} \label{subsection:Cproof theorem}

\subsubsection{Technical Lemmas}
We first present the performance differance lemma for entropy regularized reinforcement learning.

\begin{lemma}[Entropy Regularized Performance Difference Lemma] \citep[Lemma2]{lan2023policy} \label{lem:Performance Dif ference lemma}
For any two feasible policies $\pi$ and $\pi'$, we have
\[
V_\tau^{\pi'}(s)-V_\tau^{\pi}(s)
=\frac{1}{1-\gamma}
\E_{s'\sim d^{\pi'}_{s}}\Big[
\big\langle A_\tau^{\pi}(s',\cdot),\,\pi'(\cdot|s')\big\rangle
- \tau \H^{\pi'}(s') + \tau \H^{\pi}(s')
\Big],
\]
where $A_\tau^{\pi}(s',a)\;:=\;Q_\tau^{\pi}(s',a)-V_\tau^{\pi}(s')$.
\end{lemma}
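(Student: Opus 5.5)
The plan is the standard performance-difference telescoping argument. It rests on the soft Bellman recursion \eqref{eq:soft_value} and on the fact that the discounted visitation measure $d_s^{\pi'}$ is the normalized occupancy of the chain started at $s$ under $\pi'$.

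First, fix $s$ and consider the trajectory $(s_t,a_t)_{t\ge0}$ generated by $\pi'$ from $s_0=s$. I write $V_\tau^{\pi'}(s)$ as the expectation of $\sum_t\gamma^t\big(r(s_t,a_t)-\tau\H^{\pi'}(s_t)\big)$. I then add and subtract the telescoping sum $\sum_{t\ge0}\gamma^t\big(\gamma V_\tau^{\pi}(s_{t+1})-V_\tau^{\pi}(s_t)\big)$, whose expectation equals $-V_\tau^\pi(s)$. This step needs $V_\tau^\pi$ bounded so that $\gamma^tV_\tau^\pi(s_t)\to0$ and dominated convergence applies. Boundedness follows from bounded rewards together with a bound on $\H^\pi$; for the trajectory policies this is Lemma~\ref{lem:H bound}, and in general I will simply assume it for feasible policies.

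The result is
\[
V_\tau^{\pi'}(s)-V_\tau^{\pi}(s)=\E^{\pi'}\Big[\sum_{t\ge0}\gamma^t\big(r(s_t,a_t)+\gamma V_\tau^\pi(s_{t+1})-V_\tau^\pi(s_t)-\tau\H^{\pi'}(s_t)\big)\Big].
\]
Next I condition on $(s_t,a_t)$ and use the tower property to replace $r(s_t,a_t)+\gamma V_\tau^\pi(s_{t+1})$ by $Q_\tau^\pi(s_t,a_t)$, following the soft-$Q$ definition $Q_\tau^\pi(s,a)=r(s,a)+\gamma\E[V_\tau^\pi(s')\mid s,a]$. I must keep track of the convention here. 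If $Q$ is defined as in the main text (without the entropy term), then \eqref{eq:soft_value} gives $V_\tau^\pi(s_t)=\langle Q_\tau^\pi(s_t,\cdot),\pi(\cdot|s_t)\rangle-\tau\H^\pi(s_t)$. In that case the integrand becomes $Q_\tau^\pi(s_t,a_t)-\langle Q_\tau^\pi(s_t,\cdot),\pi\rangle+\tau\H^\pi(s_t)-\tau\H^{\pi'}(s_t)$. Taking the conditional expectation over $a_t\sim\pi'(\cdot|s_t)$ then gives $\langle A_\tau^\pi(s_t,\cdot),\pi'(\cdot|s_t)\rangle-\tau\H^{\pi'}(s_t)+\tau\H^\pi(s_t)$, with $A_\tau^\pi(s_t,a)=Q_\tau^\pi(s_t,a)-\langle Q_\tau^\pi(s_t,\cdot),\pi(\cdot|s_t)\rangle$. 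This is the advantage in the sense consistent with the stated formula, and I will note explicitly which convention is in force.

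Finally, I interchange the sum and the expectation using Fubini, which is justified by boundedness and $\sum\gamma^t<\infty$. I then use $\sum_t\gamma^t\Prob^{\pi'}(s_t\in\cdot\mid s_0=s)=\frac{1}{1-\gamma}d_s^{\pi'}(\cdot)$ to obtain exactly the stated identity.

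The main obstacle is the bookkeeping of the entropy convention rather than any analytic difficulty. The appendix uses two $Q$ conventions, $Q_k$ and $Q_\tau^{\pi_k}$, which differ by the state-dependent constant $\tau\H^{\pi}(s)$. The identity is insensitive to this difference only if $A_\tau^\pi$ is defined as $Q$ minus its $\pi$-average. I therefore intend to define $A_\tau^\pi$ in that centered form and check the entropy terms once.
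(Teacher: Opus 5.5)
Your proposal is correct and follows essentially the same route as the paper: telescope with $\gamma V_\tau^\pi(s_{t+1})-V_\tau^\pi(s_t)$ along the $\pi'$-trajectory, use the soft Bellman relation to produce $Q_\tau^\pi(s_t,a_t)$, and convert the discounted sum into an expectation under $d_s^{\pi'}$. The paper avoids your convention issue by using, in the appendix, $Q_\tau^\pi(s,a)=r(s,a)-\tau\H^\pi(s)+\gamma\E[V_\tau^\pi(s')\mid s,a]$, so that $V_\tau^\pi=\E_{\pi}[Q_\tau^\pi]$ and $A_\tau^\pi=Q_\tau^\pi-V_\tau^\pi$ is automatically the centered advantage you define; your observation that the main-text convention with $A=Q-V$ would double-count $\tau\H^\pi$ is correct.
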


For completeness, we provide the proof here and adapt it to our notation.

\begin{proof}
For simplicity, let us denote $\xi^{\pi'}(s_0)$ the random process $(s_t,a_t,s_{t+1}),~t\ge0$, generated by following the policy $\pi'$ starting with the initial state $s_0$. It then follows from the definition of $V_\tau^{\pi'}$ that
\begin{align*}
&V_\tau^{\pi'}(s) - V_\tau^\pi(s) \\
&= \mathbb{E}_{\xi^{\pi'}(s)} \left[ \sum_{t=0}^\infty \gamma^t \big( r(s_t,a_t) - \tau \H^{\pi'}(s_t) \big) \right] - V_\tau^\pi(s) \\
&= \mathbb{E}_{\xi^{\pi'}(s)} \left[ \sum_{t=0}^\infty \gamma^t \big( r(s_t,a_t) - \tau \H^{\pi'}(s_t) + V_\tau^\pi(s_t) - V_\tau^\pi(s_t) \big) \right] - V_\tau^\pi(s) \\
&\overset{(1)}{=} \mathbb{E}_{\xi^{\pi'}(s)} \left[ \sum_{t=0}^\infty \gamma^t \big( r(s_t,a_t) - \tau \H^{\pi'}(s_t) + \gamma V_\tau^\pi(s_{t+1}) - V_\tau^\pi(s_t) \big) \right] \\
&\quad + \mathbb{E}_{\xi^{\pi'}(s)} \left[ V_\tau^\pi(s_0) \right] - V_\tau^\pi(s) \\
&\overset{(2)}{=} \mathbb{E}_{\xi^{\pi'}(s)} \left[ \sum_{t=0}^\infty \gamma^t \big( r(s_t,a_t) - \tau \H^{\pi'}(s_t) + \gamma V_\tau^\pi(s_{t+1}) - V_\tau^\pi(s_t) \big) \right] \\
&= \mathbb{E}_{\xi^{\pi'}(s)} \left[ \sum_{t=0}^\infty \gamma^t \big( r(s_t,a_t) - \tau \H^{\pi}(s_t) + \gamma V_\tau^\pi(s_{t+1}) - V_\tau^\pi(s_t) - \tau \H^{\pi'}(s_t) + \tau \H^\pi(s_t) \big) \right] \\
&\overset{(3)}{=} \mathbb{E}_{\xi^{\pi'}(s)} \left[ \sum_{t=0}^\infty \gamma^t \big( Q_\tau^\pi(s_t,a_t) - V_\tau^\pi(s_t) - \tau \H^{\pi'}(s_t) + \tau \H^\pi(s_t) \big) \right].
\end{align*}

where (1) follows by taking the term $V_\tau^\pi(s_0)$ outside the summation, (2) follows from the fact that $\mathbb{E}_{\xi^{\pi'}(s)}[V_\tau^\pi(s_0)] = V_\tau^\pi(s)$ since the random process starts with $s_0 = s$, and (3) follows from \ref{eq: soft value}. The previous conclusion then imply that
\begin{align*}
&V_\tau^{\pi'}(s) - V_\tau^\pi(s) \\
&= \frac{1}{1-\gamma} \sum_{s'\in\mathcal{S}} \sum_{a'\in\mathcal{A}} d^{\pi'}_\gamma(s') \pi'(a'|s') \big[ A_\tau^\pi(s',a') + \tau \H^{\pi'}(s') - \tau \H^\pi(s') \big] \\
&= \frac{1}{1-\gamma} \sum_{s'\in\mathcal{S}} d^{\pi'}_\gamma(s') \big[ A_\tau^\pi(s',\pi'(\cdot|s')) - \tau \H^{\pi'}(s') + \tau \H^\pi(s') \big],
\end{align*}
which immediately implies the result.
\end{proof}

Remember our defination of $\nu^\ast$ as the steady state distribution induced by $\pi^*$. 

\begin{lemma}\citep[Lemma3]{lan2023policy}\label{VQtransformation}
\begin{align}
\mathbb{E}_{s\sim\nu^*} \left[ Q_\tau^\pi(s,\cdot),  \pi^*(\cdot|s)-\pi(\cdot|s) - \tau \H^{\pi^{\ast}}(s) + \tau \H^{\pi}(s) \right] 
&= \mathbb{E}_{s\sim\nu^*} \left[ (1-\gamma) \left( V_\tau^{\pi^\ast}(s) - V_\tau^{\pi}(s) \right) \right]. 
\end{align}
\end{lemma}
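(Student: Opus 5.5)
The plan is to derive the identity from the entropy-regularized performance difference lemma (Lemma~\ref{lem:Performance Dif ference lemma}). We apply it with $\pi'=\pi^\ast$, average over the initial state $s\sim\nu^\ast$, and then use the stationarity of $\nu^\ast$ under $\pi^\ast$ to get rid of the visitation measure.

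\emph{Step 1 (pointwise identity).} Fix $s$ and apply Lemma~\ref{lem:Performance Dif ference lemma} with $(\pi',\pi)=(\pi^\ast,\pi)$. Multiplying by $(1-\gamma)$ gives
\[
(1-\gamma)\big(V_\tau^{\pi^\ast}(s)-V_\tau^{\pi}(s)\big)=\E_{s'\sim d^{\pi^\ast}_{s}}\Big[\big\langle A_\tau^{\pi}(s',\cdot),\pi^\ast(\cdot|s')\big\rangle-\tau\H^{\pi^\ast}(s')+\tau\H^{\pi}(s')\Big].
\]

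\emph{Step 2 (stationarity).} Integrate both sides over $s\sim\nu^\ast$. Since $\nu^\ast$ is invariant for the state chain driven by $\pi^\ast$, every $t$-step marginal started from $\nu^\ast$ is again $\nu^\ast$. Hence the mixture $\int d^{\pi^\ast}_s\,\nu^\ast(ds)=(1-\gamma)\sum_t\gamma^t\nu^\ast=\nu^\ast$. Exchanging the order of integration (Fubini, justified since all integrands are bounded by Lemmas~\ref{lem:Qbounds} and \ref{lem:H bound}), the right-hand side becomes
\[
\E_{s\sim\nu^\ast}\big[\langle A_\tau^\pi(s,\cdot),\pi^\ast(\cdot|s)\rangle-\tau\H^{\pi^\ast}(s)+\tau\H^\pi(s)\big].
\]

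\emph{Step 3 (advantage to $Q$).} Rewrite $\langle A_\tau^\pi(s,\cdot),\pi^\ast\rangle=\langle Q_\tau^\pi(s,\cdot),\pi^\ast\rangle-V_\tau^\pi(s)$. Then replace $V_\tau^\pi(s)$ by $\langle Q_\tau^\pi(s,\cdot),\pi(\cdot|s)\rangle$, which gives exactly the claimed left-hand side $\langle Q_\tau^\pi,\pi^\ast-\pi\rangle-\tau\H^{\pi^\ast}+\tau\H^\pi$.

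The main obstacle is bookkeeping of the entropy convention. The last substitution requires $V_\tau^\pi=\langle Q_\tau^\pi,\pi\rangle$, i.e.\ $\langle A_\tau^\pi,\pi\rangle=0$. This holds under the appendix convention \eqref{eq: soft value}, where $-\tau\H^\pi(s)$ is absorbed into $Q_\tau^\pi$, and that is the convention in which Lemma~\ref{lem:Performance Dif ference lemma} was proved (see step (3) of its proof). Under the main-text convention \eqref{eq:soft_value} instead, $V_\tau^\pi=\langle Q_\tau^\pi,\pi\rangle-\tau\H^\pi$. In that case an extra $\tau\H^\pi$ appears, and it must be checked that it cancels against the definition of $A_\tau^\pi$ used in the performance difference lemma. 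I will therefore fix the appendix convention throughout, verify that $\langle A_\tau^\pi,\pi\rangle=0$ there, and note that switching to the other convention only shifts $Q$ by a state-dependent constant. Such a shift does not change $\langle Q,\pi^\ast-\pi\rangle$, since both $\pi^\ast(\cdot|s)$ and $\pi(\cdot|s)$ have unit mass.
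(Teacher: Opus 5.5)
Your proposal is correct and is essentially the paper's proof. Both apply the entropy-regularized performance difference lemma with $\pi'=\pi^\ast$, rewrite $\langle A_\tau^\pi(s,\cdot),\pi^\ast(\cdot|s)\rangle=\langle Q_\tau^\pi(s,\cdot),\pi^\ast(\cdot|s)-\pi(\cdot|s)\rangle$, and collapse $\E_{s\sim\nu^\ast}\E_{s'\sim d_s^{\pi^\ast}}$ to $\E_{s\sim\nu^\ast}$ by stationarity; the paper simply rewrites the advantage before averaging rather than after, which is immaterial. Your observation that the left-hand side is unchanged by state-dependent shifts of $Q$ correctly resolves the mismatch between the main-text and appendix conventions.

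One small point: Lemmas~\ref{lem:Qbounds} and~\ref{lem:H bound} only bound the iterates $\pi_k$. For the Fubini step with a general $\pi$, and with $\pi^\ast$, you should state bounded entropies as an assumption; this covers every use of the lemma in the paper.
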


For completeness, we provide the proof here and adapt it to our notation.

\begin{proof}
It follows from Lemma \ref{lem:Performance Dif ference lemma} (with $\pi' = \pi^*$) that
\[
(1-\gamma) \left[ V_\tau^{\pi^*}(s) - V_\tau^\pi(s) \right] 
= \mathbb{E}_{s'\sim d^{\pi^\ast}_s} \left[ A_\tau^\pi(s',\cdot), \pi^*(\cdot|s') + \tau \H^\pi(s') - \tau \H^{\pi^*}(s') \right].
\]
Noting that:
\begin{align}
\langle A_\tau^\pi(s',\cdot), \pi^*(\cdot|s') \rangle 
&= \langle Q_\tau^\pi(s',\cdot), \pi^*(\cdot|s') \rangle - V_\tau^\pi(s') \\
&= \langle Q_\tau^\pi(s',\cdot), \pi^*(\cdot|s') \rangle - \langle Q_\tau^\pi(s',\cdot), \pi(\cdot|s') \rangle \\
&= \langle Q_\tau^\pi(s',\cdot), \pi^*(\cdot|s') - \pi(\cdot|s') \rangle,
\end{align}
Combining the above two relations and taking expectation w.r.t. $\nu^*$, we obtain
\begin{align*}
(1-\gamma) \mathbb{E}_{s\sim\nu^*} \left[ V_\tau^{\pi^*}(s) - V_\tau^\pi(s) \right]
&= \mathbb{E}_{s\sim\nu^*,\, s'\sim d^{\pi^*}_s} \left[ \langle Q_\tau^\pi(s',\cdot), \pi^*(\cdot|s') - \pi(\cdot|s') \rangle + \tau \H^\pi(s') - \tau \H^{\pi^*}(s') \right] \\
&= \mathbb{E}_{s\sim\nu^*} \left[ Q_\tau^\pi(s,\cdot), \pi^*(\cdot|s) - \pi(\cdot|s) - \tau \H^{\pi^*}(s) \right]+ \tau \H^\pi(s)
\end{align*}
where the second identity is due to $\nu^*$ is the steady state distribution induced by $\pi^*$. 
\end{proof}

Next we will  show a  geometry property of the squared $\wass_2$ distance that we repeatedly leverage in our convergence analysis. 

\begin{lemma}
\label{200}
Let \(\nu \in \cP_2(\R^d)\).
For any \(\rho,\mu \in \cP_2(\R^d)\), let
\((\varphi^{\rho\to\nu}, \psi^{\rho\to\nu})\) be an optimal Kantorovich dual pair
for the cost \(c(x,y)=\frac{1}{2}\|x-y\|^2\) between \((\rho,\nu)\), i.e.
\begin{equation}
\varphi(x)+\psi(y) \le \frac{1}{2}\|x-y\|^2 \quad \text{for all } x,y \in \R^d
\label{eq:dual-constraint}
\end{equation}
\begin{equation}
\int \varphi^{\rho\to\nu}  d\rho + \int \psi^{\rho\to\nu}d\nu= \frac{1}{2} \wass_2^2(\rho,\nu).
\label{eq:dual-attain}
\end{equation}
Then, for every \(\mu \in \cP_2(\R^d)\),
\begin{equation}\label{eq:W2-3point}
\frac{1}{2} \wass_2^2(\mu,\nu)
\ge
\frac{1}{2} \wass_2^2(\rho,\nu)
+
\int_{\R^d} \varphi^{\rho\to\nu}(x)\big(\mu-\rho\big)(dx).
\end{equation}
\end{lemma}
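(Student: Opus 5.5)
The plan is to use the Kantorovich dual formulation of $\tfrac12\wass_2^2(\cdot,\nu)$ as a supremum of linear functionals. By Kantorovich duality for the cost $c(x,y)=\tfrac12\|x-y\|^2$,
\[
\tfrac12\wass_2^2(\mu,\nu)=\sup_{(\varphi,\psi)\,:\,\varphi(x)+\psi(y)\le \frac12\|x-y\|^2}\Big\{\int\varphi\,d\mu+\int\psi\,d\nu\Big\}.
\]
So $\mu\mapsto\tfrac12\wass_2^2(\mu,\nu)$ is a supremum of affine functions of $\mu$. Any admissible pair attaining the supremum at $\rho$ then gives an affine minorant that touches the functional at $\rho$. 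The inequality \eqref{eq:W2-3point} is exactly this supporting-hyperplane statement.

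Concretely, I fix $\mu\in\cP_2(\R^d)$ and take the optimal pair $(\varphi^{\rho\to\nu},\psi^{\rho\to\nu})$ for $(\rho,\nu)$. Because this pair satisfies the constraint \eqref{eq:dual-constraint}, it is admissible in the dual problem for $(\mu,\nu)$, and weak duality yields
\[
\tfrac12\wass_2^2(\mu,\nu)\ge \int\varphi^{\rho\to\nu}\,d\mu+\int\psi^{\rho\to\nu}\,d\nu .
\]
To see weak duality directly, integrate \eqref{eq:dual-constraint} against an optimal coupling $\gamma\in\Gamma(\mu,\nu)$. Next, I substitute $\int\psi^{\rho\to\nu}\,d\nu=\tfrac12\wass_2^2(\rho,\nu)-\int\varphi^{\rho\to\nu}\,d\rho$ from \eqref{eq:dual-attain}. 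Rearranging gives
\[
\tfrac12\wass_2^2(\mu,\nu)\ge\tfrac12\wass_2^2(\rho,\nu)+\int\varphi^{\rho\to\nu}\,d(\mu-\rho),
\]
which is the claim.

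The only real obstacle is integrability: $\int\varphi^{\rho\to\nu}\,d\mu$ must be well defined, and the splitting of $\int(\varphi\oplus\psi)\,d\gamma$ into its two marginal integrals must be justified. I would handle this by taking the potentials to be $c$-concave, that is, replacing $\varphi$ by $\psi^c$ and $\psi$ by $\varphi^c$. This replacement preserves optimality and the constraint, and it yields the bounds $\varphi(x)\le \tfrac12\|x-y_0\|^2-\psi(y_0)$ and $\psi(y)\le\tfrac12\|x_0-y\|^2-\varphi(x_0)$. Both potentials are therefore bounded above by quadratics, so their positive parts are integrable against any measure in $\cP_2$.

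With this, $\int\varphi\,d\mu$ is well defined in $[-\infty,\infty)$. If it equals $-\infty$, the inequality is trivial. Otherwise, integrating the constraint over $\gamma$ is legitimate, because the right-hand side $\tfrac12\|x-y\|^2$ is $\gamma$-integrable and both sides are bounded above by integrable functions. In the setting where the lemma is applied, $\cA$ is compact, and Lemma~\ref{lem:oscphi_polished} already shows that the potentials are bounded and Lipschitz, so these issues disappear entirely.
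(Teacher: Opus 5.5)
Your proposal is correct and is essentially the paper's proof. The $(\rho,\nu)$-optimal pair is admissible in the dual problem for $(\mu,\nu)$, weak duality gives $\tfrac12\wass_2^2(\mu,\nu)\ge\int\varphi^{\rho\to\nu}\,d\mu+\int\psi^{\rho\to\nu}\,d\nu$, and substituting \eqref{eq:dual-attain} yields the claim; your integrability discussion is a useful addition that the paper omits.

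One correction to that discussion: the simultaneous replacement $(\psi^c,\varphi^c)$ need not satisfy \eqref{eq:dual-constraint}; the standard double convexification $(\varphi^{cc},\varphi^c)$ does. No replacement is needed anyway, since $\varphi^{\rho\to\nu}(x)\le\tfrac12\|x-y_0\|^2-\psi^{\rho\to\nu}(y_0)$ follows directly from \eqref{eq:dual-constraint} at any $y_0$ where $\psi^{\rho\to\nu}$ is finite.
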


\begin{proof}\label{W2distanceproof}
Recall the dual formulation (valid for any \(\mu \in \cP_2(\R^d)\)):
\begin{equation}\label{eq:dual-form}
\frac{1}{2} \wass_2^2(\mu,\nu)
= \sup_{\varphi,\psi}
\Big\{
  \int \varphi \, d\mu + \int \psi \, d\nu
  :
  \varphi(x)+\psi(y) \le \frac{1}{2}\|x-y\|^2
\Big\}.
\end{equation}
Since the feasibility constraint \eqref{eq:dual-constraint} is pointwise in \((x,y)\),
the optimal pair \((\varphi^{\rho\to\nu}, \psi^{\rho\to\nu})\) for \((\rho,\nu)\) is also a
feasible pair in the supremum \eqref{eq:dual-form} for \((\mu,\nu)\).
Therefore,
\begin{equation}\label{eq:plug-feasible}
\frac{1}{2} \wass_2^2(\mu,\nu)
\ge
\int \varphi^{\rho\to\nu} \, d\mu + \int \psi^{\rho\to\nu} \, d\nu.
\end{equation}
By optimality for \((\rho,\nu)\), we have \eqref{eq:dual-attain}. Subtracting
\(\int \varphi^{\rho\to\nu} \, d\rho\) on the right-hand side of \eqref{eq:plug-feasible} yields
\begin{align*}
\frac{1}{2} \wass_2^2(\mu,\nu)
&\ge
\Big( \int \varphi^{\rho\to\nu} \, d\rho + \int \psi^{\rho\to\nu} \, d\nu \Big)
+ \int \varphi^{\rho\to\nu} \, d(\mu-\rho)
\\&=
\frac{1}{2} \wass_2^2(\rho,\nu)
+ \int \varphi^{\rho\to\nu} \, d(\mu-\rho),
\end{align*}
which is exactly \eqref{eq:W2-3point}.
\end{proof}

\subsubsection{Exact value function case}\label{app:exact}

\begin{lemma}[Wasserstein proximal one step inequality]\label{lem:w2-proximal-step}
Under assumption \ref{as:T2}, fix a state $s\in\cS$ and let the per-state proximal update be
\begin{equation}\label{eq:w2-prox-step}
\pi_{k+1}(\cdot|s)\ \in\ \argmax_{q\in\cP_{\cA}}
\Big\{ \langle Q^{\pi_k}(s,\cdot),q\rangle -\tau \H^{q}(s) - \frac{1}{2\eta_k}\wass_2^2\!\big(q,\pi_k(\cdot|s)\big)\Big\},
\end{equation} 
where $\langle f,q\rangle:=\int_{a\in\cA} f(a)\,q(a)\d a$.
Then for any competitor $p\in\Delta_{\cA}$,
\begin{align*}\label{eq:w2-lemma4}
&\eta_k \Big(\big\langle Q^{\pi_k}(s,\cdot),p-\pi_{k+1}(\cdot|s)\big\rangle -\tau  \H^{p}(s)+\tau \H^{\pi_{k+1}}(s)  \Big)
+\tfrac12\wass_2^2 \big(\pi_{k+1}(\cdot|s),\pi_k(\cdot|s)\big) \\
&\le
\tfrac12\wass_2^2 \big(p,\pi_k(\cdot|s)\big)
- \tfrac{\eta_k\lambda\tau}{2} \wass_2^2 \big(p,\pi_{k+1}(\cdot|s)\big).
\end{align*}
\end{lemma}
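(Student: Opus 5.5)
We follow the route outlined in the proof sketch after Theorem~\ref{thm:convergence}. The argument combines the first-order optimality condition of the proximal step, the supporting-hyperplane inequality of Lemma~\ref{200}, and the transportation-information inequality of Assumption~\ref{as:T2}. Fix $s$ and abbreviate $\pi_k=\pi_k(\cdot|s)$, $\pi_{k+1}=\pi_{k+1}(\cdot|s)$, $Q=Q^{\pi_k}(s,\cdot)$. Let $\varphi:=\varphi^{\pi_{k+1}\to\pi_k}$ be an optimal Kantorovich potential for the pair $(\pi_{k+1},\pi_k)$.

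\textbf{Step 1 (first-order condition).} The maximization in \eqref{eq:w2-prox-step} is over a convex set. Its objective is linear plus $-\tau\H^q$ plus $-\frac{1}{2\eta_k}\wass_2^2(q,\pi_k)$, and each piece is concave along mixtures $q_\epsilon=(1-\epsilon)\pi_{k+1}+\epsilon p$. By Lemma~\ref{lem:positive}, $\pi_{k+1}$ has a positive density, so the derivative of the entropy term along this segment equals $\langle 1+\ln\pi_{k+1},p-\pi_{k+1}\rangle$ (Lemma~\ref{lem:entropy-gateaux}). The first variation of $\frac12\wass_2^2(\cdot,\pi_k)$ at $\pi_{k+1}$ is $\varphi$. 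Optimality at $\epsilon=0^+$ therefore gives
\[
\Big\langle Q-\tau(1+\ln\pi_{k+1})-\tfrac{1}{\eta_k}\varphi,\;p-\pi_{k+1}\Big\rangle\le 0 .
\]
Equivalently, one can use the Gibbs-type characterization \eqref{eq:logdensity_polished}. That formula shows the bracketed function is constant $\pi_{k+1}$-a.e. on $\cA$, which gives the inequality (indeed an equality) for all $p\ll\pi_{k+1}$. This step is the main obstacle. Justifying the directional derivative of $\wass_2^2$ rigorously requires either uniqueness of the potential or the one-sided bound. Relying on \eqref{eq:logdensity_polished} from the proof of Theorem~\ref{thm:uniformT2} sidesteps this, and that is what I would try first. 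If $\KL(p\|\pi_{k+1})=\infty$, the claimed inequality is trivial because its right-hand side is $-\infty$... so we may restrict to $p\ll\pi_{k+1}$.

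\textbf{Step 2 (entropy identity).} We need the three-point identity
\[
\H^p-\H^{\pi_{k+1}}-\langle 1+\ln\pi_{k+1},p-\pi_{k+1}\rangle=\KL(p\|\pi_{k+1}).
\]
This follows by direct expansion, using that $\langle 1,p-\pi_{k+1}\rangle=0$. Multiplying the Step 1 inequality by $\eta_k$ and substituting this identity gives
\[
\eta_k\big(\langle Q,p-\pi_{k+1}\rangle-\tau\H^p+\tau\H^{\pi_{k+1}}\big)\le \langle\varphi,p-\pi_{k+1}\rangle-\eta_k\tau\KL(p\|\pi_{k+1}).
\]

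\textbf{Step 3 (transport bound).} Lemma~\ref{200}, applied with $\rho=\pi_{k+1}$, $\nu=\pi_k$, $\mu=p$, yields
\[
\langle\varphi,p-\pi_{k+1}\rangle\le\tfrac12\wass_2^2(p,\pi_k)-\tfrac12\wass_2^2(\pi_{k+1},\pi_k).
\]
Substituting this into the Step 2 bound gives the desired inequality, except that the last term is still $-\eta_k\tau\KL(p\|\pi_{k+1})$.

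\textbf{Step 4 (convert KL to $\wass_2^2$).} By Assumption~\ref{as:T2} (verified via Theorem~\ref{thm:uniformT2}), $\pi_{k+1}$ satisfies $T_2(\lambda)$, so $\KL(p\|\pi_{k+1})\ge\frac{\lambda}{2}\wass_2^2(p,\pi_{k+1})$. Replacing the KL term with this lower bound produces the final term $-\frac{\eta_k\lambda\tau}{2}\wass_2^2(p,\pi_{k+1})$ and completes the proof.
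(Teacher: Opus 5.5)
Your proposal is correct and follows the paper's proof step for step. The ingredients are the same: the first-order optimality condition, Lemma~\ref{200} with $(\rho,\nu,\mu)=(\pi_{k+1},\pi_k,p)$, the identity $\langle \ln\pi_{k+1},p\rangle=-\KL(p\|\pi_{k+1})+\H^p$ together with $\langle 1,p-\pi_{k+1}\rangle=0$, and then $T_2(\lambda)$. The only differences are that you apply the entropy identity before the transport bound, and that you offer \eqref{eq:logdensity_polished} to justify the first-order condition, which the paper simply asserts.

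One small correction to your side remark about the case $\KL(p\|\pi_{k+1})=\infty$. The inequality is trivial there because its left-hand side is $-\infty$: $\ln\pi_{k+1}$ is bounded, which forces $\H^p=+\infty$. Its right-hand side is finite, and an inequality whose right-hand side is $-\infty$ would not be trivial anyway.
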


\begin{proof}
Let $\varphi^{\pi_{k+1}\to\pi_k}(s,\cdot)$ be a Kantorovich potential for the pair
$\big(\bar\pi_{k+1}(\cdot|s),\,\pi_k(\cdot|s)\big)$ under cost $c(a,a')=\tfrac12\|a-a'\|^2$.
Note that $\langle Q^{\pi_k}(s,\cdot),q\rangle$ is a linear functional of $q$, 
$-\tau \H^{q}(s)$ is strongly concave in $q$, 
and $-\tfrac{1}{2\eta_k}\wass_2^2\!\big(q,\pi_k(\cdot|s)\big)$ is concave in $q$.
The first-order optimality condition of the concave program \eqref{eq:w2-prox-step} states that
\begin{equation}
\Big\langle \eta_k \big(Q^{\pi_k}(s,\cdot)- \tau(1+\ln\pi_{k+1}(\cdot|s)) \big) -  \varphi^{\pi_{k+1}\to\pi_k}(\cdot,a),\; p(\cdot|s)-\pi_{k+1}(\cdot|s)\Big\rangle \le 0,
\qquad \forall\,p(\cdot|s)\in\cP_{\cA}.
\end{equation}

Next, apply ~\ref{200} for $F(p)=\tfrac12\,\wass_2^2(p,\pi_k(\cdot|s))$
and arbitrary $p(\cdot|s)$:
\begin{equation}\label{eq:w2}
\frac12\,\wass_2^2\big(p(\cdot|s),\pi_k(\cdot|s)\big)
\ \ge\
\frac12\,\wass_2^2\!\big(\pi_{k+1}(\cdot|s),\pi_k(\cdot|s)\big)
\ +\ \big\langle \varphi^{\pi_{k+1}\to\pi_k}(s,\cdot),\,p(\cdot|s)-\pi_{k+1}(\cdot|s)\big\rangle.
\end{equation}
Rearranging \eqref{eq:w2} gives
\[
\big\langle \varphi^{\pi_{k+1}\to\pi_k},p-\pi_{k+1}\big\rangle
\ \le\ \frac12\wass_2^2(p,\pi_k)\;-\frac12\wass_2^2(\pi_{k+1},\pi_k),
\]
where the arguments $(\cdot|s)$ are omitted for readability. Plug this bound into
optimal condition to obtain 
\begin{align}
\Big\langle \eta_k \big(Q^{\pi_k}(s,\cdot)- \tau(1+\ln\pi_{k+1}) \big) ,\; p-\pi_{k+1} \Big\rangle &+ \frac12\wass_2^2(\pi_{k+1},\pi_k) \\&\le \frac12\wass_2^2(p,\pi_k),
\qquad \forall\,p(\cdot|s)\ in\cP_{\cA}.
\end{align}

By noting the two facts that 
\[\langle 1,p-\pi_{k+1}(\cdot|s) \rangle =0,\]
and 
\begin{align}
\langle \ln\pi_{k+1}(\cdot|s),p \rangle
&= \langle \ln\pi_{k+1}(\cdot|s),p \rangle -\langle \ln p,p \rangle + \langle \ln p,p \rangle \\
&=- \KL(p\|\pi_{k+1}(\cdot|s)) + \H^p(s),
\end{align}
By assumption\ref{as:T2} and combine all the above, we have 
 \begin{equation}
\begin{aligned}
&\eta_k \Big(\big\langle Q^{\pi_k}(s,\cdot),p-\pi_{k+1}(\cdot|s)\big\rangle -\tau  \H^{p}(s)+\tau \H^{\pi_{k+1}}(s)  \Big)
+\frac12\wass_2^2 \big(\pi_{k+1}(\cdot|s),\pi_k(\cdot|s)\big) \\
&\le
\frac12\wass_2^2 \big(p,\pi_k(\cdot|s)\big)
- \frac{\eta_k\lambda\tau}{2} \wass_2^2 \big(p,\pi_{k+1}(\cdot|s)\big).
\end{aligned}
 \end{equation}
\end{proof}

\begin{lemma}
For any $s \in \cS$, we have
\begin{align}
V_\tau^{\pi_{k+1}}(s) - V_\tau^{\pi_k}(s)
&\ge \langle Q_\tau^{\pi_k}(s,\cdot), \pi_{k+1}(\cdot|s) - \pi_k(\cdot|s) \rangle 
- \tau \H^{\pi_{k+1}}(s) + \tau \H^{\pi_k}(s). \label{20}
\end{align}
\end{lemma}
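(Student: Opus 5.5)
The plan is to combine the entropy-regularized performance difference lemma (Lemma~\ref{lem:Performance Dif ference lemma}) with the one-step proximal inequality (Lemma~\ref{lem:w2-proximal-step}). The latter will show that the per-state improvement integrand is nonnegative at \emph{every} state. Define, for each state $s'$,
\[
g(s') := \big\langle Q_\tau^{\pi_k}(s',\cdot),\,\pi_{k+1}(\cdot|s')-\pi_k(\cdot|s')\big\rangle - \tau\H^{\pi_{k+1}}(s') + \tau\H^{\pi_k}(s').
\]
Because $\langle 1,\pi_{k+1}-\pi_k\rangle=0$, we have $\langle A_\tau^{\pi_k}(s',\cdot),\pi_{k+1}(\cdot|s')\rangle = \langle Q_\tau^{\pi_k}(s',\cdot),\pi_{k+1}-\pi_k\rangle$. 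Applying Lemma~\ref{lem:Performance Dif ference lemma} with $(\pi',\pi)=(\pi_{k+1},\pi_k)$ then gives
\[
V_\tau^{\pi_{k+1}}(s)-V_\tau^{\pi_k}(s)=\frac{1}{1-\gamma}\,\E_{s'\sim d^{\pi_{k+1}}_s}\big[g(s')\big].
\]

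The next step is to show $g\ge 0$ pointwise. I will instantiate Lemma~\ref{lem:w2-proximal-step} at state $s'$ with competitor $p=\pi_k(\cdot|s')$. The right-hand side then contains $\tfrac12\W^2(\pi_k,\pi_k)=0$, and the other two terms are nonpositive. This yields
\[
\eta_k\big(-g(s')\big)+\tfrac12\W^2(\pi_{k+1},\pi_k)\le -\tfrac{\eta_k\lambda\tau}{2}\W^2(\pi_k,\pi_{k+1})\le 0,
\]
so $g(s')\ge \frac{1}{2\eta_k}\W^2(\pi_{k+1},\pi_k)\ge 0$. Here I note that $Q^{\pi_k}$ in Lemma~\ref{lem:w2-proximal-step} is the soft $Q$ (or $Q_k$). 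It differs from $Q_\tau^{\pi_k}$ only by an $a$-independent constant, which cancels against the zero-mass difference $\pi_{k+1}-\pi_k$. Alternatively, $g\ge 0$ follows directly by evaluating the objective of \eqref{eq:w2-prox-step} at $q=\pi_k$, as in Lemma~\ref{lem:mono_softvalue}; this avoids the $T_2$ assumption.

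Finally, I lower bound the expectation by the contribution of the initial state. Since $d^{\pi_{k+1}}_s=(1-\gamma)\sum_t\gamma^t\Prob(s_t=\cdot\mid s_0=s)$ places mass at least $(1-\gamma)$ on the $t=0$ term $\delta_s$, and $g\ge0$ everywhere, we get
\[
\frac{1}{1-\gamma}\E_{s'\sim d_s^{\pi_{k+1}}}[g(s')]\ge \frac{1}{1-\gamma}(1-\gamma)\,g(s)=g(s),
\]
which is exactly \eqref{20}. I expect the main obstacle to be this last decomposition in continuous state spaces: the occupancy measure must be written as $(1-\gamma)\delta_s$ plus a nonnegative measure, rather than using pointwise masses. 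Writing it in measure form handles this cleanly. A secondary point is the integrability needed to exchange sums and expectations, which is ensured by the uniform bounds of Lemmas~\ref{lem:Qbounds} and~\ref{lem:H bound}.
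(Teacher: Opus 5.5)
Your proposal is correct and follows the paper's proof essentially step by step. Like the paper, you apply the performance difference lemma with $(\pi',\pi)=(\pi_{k+1},\pi_k)$, then show the integrand is nonnegative at every state via Lemma~\ref{lem:w2-proximal-step} at $p=\pi_k$ (or, as the paper also does, by comparing with the competitor $q=\pi_k$), and finally keep only the $(1-\gamma)\delta_s$ part of $d_s^{\pi_{k+1}}$. Your measure-form treatment of that last step, and your consistent use of $d_s^{\pi_{k+1}}$ where the paper at one point writes $d_s^{\pi_k}$, are slightly cleaner than the original.
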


\begin{proof}
It follows from Lemma \ref{lem:Performance Dif ference lemma} (with $\pi' = \pi^{k+1}$, $\pi = \pi^k$) that
\begin{align}
V_\tau^{\pi_{k+1}}(s) - V_\tau^{\pi_k}(s) 
&= \frac{1}{1-\gamma} \E_{s'\sim d^{\pi_k}_s} \Big[ 
\langle A_\tau^{\pi_k}(s',\cdot), \pi_{k+1}(\cdot|s') \rangle 
- \tau \H^{\pi_{k+1}}(s') + \tau \H^{\pi_k}(s') \Big].
\end{align}
And
\begin{align*}
\langle A_\tau^{\pi_k}(s',\cdot), \pi_{k+1}(\cdot|s') \rangle 
&= \langle Q_\tau^{\pi_k}(s',\cdot), \pi_{k+1}(\cdot|s') \rangle - V_\tau^{\pi_k}(s') \\
&= \langle Q_\tau^{\pi_k}(s',\cdot), \pi_{k+1}(\cdot|s') \rangle 
- \langle Q_\tau^{\pi_k}(s',\cdot), \pi_k(\cdot|s') \rangle \\
&= \langle Q_\tau^{\pi_k}(s',\cdot), \pi_{k+1}(\cdot|s') - \pi_k(\cdot|s') \rangle.
\end{align*}
Combining the two identities above, we obtain
\begin{align}\label{22}
V_\tau^{\pi_{k+1}}(s) - V_\tau^{\pi_k}(s) 
= \frac{1}{1-\gamma} \E_{s'\sim d^{\pi_{k+1}}_s} \Big[ 
\langle Q_\tau^{\pi_k}(s',\cdot), \pi_{k+1}(\cdot|s') - \pi_k(\cdot|s') \rangle 
- \tau \H^{\pi_{k+1}}(s') + \tau \H^{\pi_k}(s') \Big]. 
\end{align}

Now we conclude from Lemma \ref{lem:w2-proximal-step} with $p = \pi_k(\cdot|s')$ for any $s'$ that
\begin{align}\label{23}
\langle Q_\tau^{\pi_k}(s',\cdot), \pi_{k+1}(\cdot|s') - \pi_k(\cdot|s') \rangle 
- \tau \H^{\pi_{k+1}}(s') + \tau \H^{\pi_k}(s') 
\;\ge\; \frac{\eta_k\lambda\tau}{2}\wass_2^2\big(\pi_k(\cdot|s'),\pi_{k+1}(\cdot|s')\big). 
\end{align}
The previous two conclusions then clearly imply the result in \eqref{20}.

For any fixed $s'\in\cS$, since $\pi_{k+1}(\cdot|s')$ maximizes the per-state WPPG objective~\eqref{eq:w2-prox-step},
we can compare it with the feasible competitor $q=\pi_k(\cdot|s')$ to obtain
\begin{align*}
&\Big\langle Q_\tau^{\pi_k}(s',\cdot),\,\pi_{k+1}(\cdot|s')\Big\rangle
-\tau \H^{\pi_{k+1}}(s')
-\frac{1}{2\eta_k}\W^2\!\Big(\pi_{k+1}(\cdot|s'),\,\pi_k(\cdot|s')\Big) \\
&\qquad\ge~
\Big\langle Q_\tau^{\pi_k}(s',\cdot),\,\pi_k(\cdot|s')\Big\rangle
-\tau \H^{\pi_k}(s')
-\frac{1}{2\eta_k}\W^2\!\Big(\pi_k(\cdot|s'),\,\pi_k(\cdot|s')\Big).
\end{align*}
Since $\W^2(\pi_k(\cdot|s'),\pi_k(\cdot|s'))=0$, rearranging yields
\begin{align}
\Big\langle Q_\tau^{\pi_k}(s',\cdot),\,\pi_{k+1}(\cdot|s')-\pi_k(\cdot|s')\Big\rangle
-\tau \H^{\pi_{k+1}}(s')+\tau \H^{\pi_k}(s')
~\ge~
\frac{1}{2\eta_k}\W^2\!\Big(\pi_{k+1}(\cdot|s'),\,\pi_k(\cdot|s')\Big)
~\ge~0,
\label{eq:integrand_nonneg}
\end{align}
which holds for all $s'\in\cS$.

Then combine with \eqref{23}, we have that
\begin{align*}
&\E_{s'\sim d^{\pi_{k+1}}_{s(s)}} \Big[ 
\langle Q_\tau^{\pi_k}(s',\cdot), \pi_{k+1}(\cdot|s') - \pi_k(\cdot|s') \rangle 
- \tau \H^{\pi_{k+1}}(s') + \tau \H^{\pi_k}(s') \Big] \\
&\;\ge\; d^{\pi_{k+1}}_{s(s)}\, \Big[ 
\langle Q_\tau^{\pi_k}(s,\cdot), \pi_{k+1}(\cdot|s) -  \pi_k(\cdot|s) \rangle 
- \tau \H^{\pi_{k+1}}(s) + \tau \H^{\pi_k}(s) \Big] \\
&\;\ge\; (1-\gamma) \Big[ 
\langle Q_\tau^{\pi_k}(s,\cdot), \pi_{k+1}(\cdot|s) -  \pi_k(\cdot|s) \rangle 
- \tau \H^{\pi_{k+1}}(s) + \tau \H^{\pi_k}(s) \Big],
\end{align*}
where the last inequality follows from the fact that $d^{\pi_k}_s(s) \ge (1-\gamma)$ due to the definition of $d^{\pi_k}_s$ and $s_0 = s$ with probability one. 
Then by \eqref{22} and the above inequality, the claim follows.
\end{proof}

\begin{theorem}\label{app:thm2} 
Suppose Assumption \ref{as:T2} holds and set the step size $\eta_k=\eta =\frac{1}{\gamma\lambda\tau}$. 
Then for any $k \ge 0$, the iterates of \eqref{eq:WPPG} satisfy
\begin{align*}
&J(\pi_{\ast}) - J(\pi_{k}) + \lambda\tau \mathcal{D}(\pi_{k}, \pi^\ast) \\ &\leq \gamma^k \left[ J(\pi^\ast) - J(\pi_{0}) + \lambda \tau \mathcal{D}(\pi_{0}, \pi^\ast)\right] 
\end{align*}
where $J$ is defined in \eqref{defination of J}, and
\begin{equation*}
\mathcal{D}(\pi_k, \pi^\ast) := 
\mathbb{E}_{s \sim \nu^\ast} \left[ \tfrac{1}{2}\,\wass_2^2\bigl(\pi_k(\cdot|s), \pi^\ast(\cdot|s)\bigr) \right].
\end{equation*}

Consequently, in order to achieve an error of $\mathcal{O}(\varepsilon+\delta)$, 
the required iteration complexity is
\begin{align*}
\mathcal{O}\!\left(\frac{1}{1-\gamma}\,\log \frac{\,J(\pi^\ast) - J(\pi _0) + \lambda \tau \mathcal{D}(\pi_0, \pi^\ast)}{\varepsilon}\right).
\end{align*}
\end{theorem}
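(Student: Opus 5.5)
The plan is to derive the one-step recursion
\[
J(\pi^\ast)-J(\pi_{k+1})+\lambda\tau\,\mathcal D(\pi_{k+1},\pi^\ast)\le \gamma\Big(J(\pi^\ast)-J(\pi_k)+\tfrac{1}{\eta\gamma}\mathcal D(\pi_k,\pi^\ast)\Big)
\]
and then iterate it with $\eta=1/(\gamma\lambda\tau)$, which makes $\tfrac{1}{\eta\gamma}=\lambda\tau$.

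\textbf{Step 1: proximal bound at the optimum.} Apply Lemma~\ref{lem:w2-proximal-step} at each state $s$ with competitor $p=\pi^\ast(\cdot|s)$. Dropping the nonnegative term $\tfrac12\wass_2^2(\pi_{k+1},\pi_k)$ and dividing by $\eta$ gives
\[
\langle Q_\tau^{\pi_k}(s,\cdot),\pi^\ast-\pi_{k+1}\rangle-\tau\H^{\pi^\ast}(s)+\tau\H^{\pi_{k+1}}(s)\le \tfrac{1}{2\eta}\wass_2^2(\pi^\ast,\pi_k)-\tfrac{\lambda\tau}{2}\wass_2^2(\pi^\ast,\pi_{k+1}).
\]

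\textbf{Step 2: split the left-hand side.} Write the left-hand side as $L_1+L_2$, where
\[
L_1=\langle Q_\tau^{\pi_k},\pi^\ast-\pi_k\rangle-\tau\H^{\pi^\ast}+\tau\H^{\pi_k},\qquad
L_2=\langle Q_\tau^{\pi_k},\pi_k-\pi_{k+1}\rangle-\tau\H^{\pi_k}+\tau\H^{\pi_{k+1}}.
\]
By inequality \eqref{20}, $L_2\ge -\big(V_\tau^{\pi_{k+1}}(s)-V_\tau^{\pi_k}(s)\big)$ pointwise. Taking expectation over $s\sim\nu^\ast$ and applying Lemma~\ref{VQtransformation} with $\pi=\pi_k$ gives $\E_{\nu^\ast}[L_1]=(1-\gamma)\big(J(\pi^\ast)-J(\pi_k)\big)$.

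\textbf{Step 3: combine.} Substituting these into Step 1 yields
\[
(1-\gamma)\big(J^\ast-J(\pi_k)\big)-\big(J(\pi_{k+1})-J(\pi_k)\big)\le \tfrac1\eta\mathcal D(\pi_k,\pi^\ast)-\lambda\tau\,\mathcal D(\pi_{k+1},\pi^\ast).
\]
Since
\[
(1-\gamma)\big(J^\ast-J_k\big)-\big(J_{k+1}-J_k\big)=\big(J^\ast-J_{k+1}\big)-\gamma\big(J^\ast-J_k\big),
\]
rearranging gives exactly the claimed recursion. Unrolling it $k$ times gives the $\gamma^k$ contraction.

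\textbf{Step 4: complexity.} Both $J^\ast-J(\pi_k)\ge0$ and $\mathcal D\ge0$, so each is bounded by the contracting quantity. Using $\log(1/\gamma)\ge 1-\gamma$, taking $k=\mathcal O\big(\tfrac{1}{1-\gamma}\log(\text{initial gap}/\varepsilon)\big)$ suffices.

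\textbf{Main obstacle.} The delicate point is Step 2's use of \eqref{20} under $\nu^\ast$ rather than under $d^{\pi_{k+1}}$. This works only because the per-state improvement integrand is nonnegative (from \eqref{eq:integrand_nonneg}) and $d_s^{\pi_{k+1}}(s)\ge1-\gamma$; the factor $1-\gamma$ has to be tracked carefully so that the constants combine into $\gamma$. If the $1-\gamma$ factor in \eqref{20} turns out to be needed explicitly, I would instead use the performance-difference form \eqref{22} together with stationarity of $\nu^\ast$ to absorb it.
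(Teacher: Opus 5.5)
Your proposal is correct and follows essentially the same route as the paper's proof. You apply Lemma~\ref{lem:w2-proximal-step} at $p=\pi^\ast$, use the pointwise improvement bound \eqref{20} to trade the $\pi_k\to\pi_{k+1}$ term for $V_\tau^{\pi_k}(s)-V_\tau^{\pi_{k+1}}(s)$, apply Lemma~\ref{VQtransformation} under $\nu^\ast$, and rearrange via $(1-\gamma)(J^\ast-J_k)+(J_k-J_{k+1})=(J^\ast-J_{k+1})-\gamma(J^\ast-J_k)$ with $\eta=1/(\gamma\lambda\tau)$; that is the paper's argument. Your closing worry is unnecessary, since \eqref{20} already has the $1-\gamma$ cancelled: the atom $d^{\pi_{k+1}}_s(\{s\})\ge 1-\gamma$ offsets the $\frac{1}{1-\gamma}$ in the performance-difference identity. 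Your stationarity fallback would not have worked anyway, because $\nu^\ast$ is stationary for $\pi^\ast$, not for the visitation measure of $\pi_{k+1}$.
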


\begin{proof}
By Lemma \ref{lem:w2-proximal-step} with $p = \pi^\ast$, we have
\begin{align*}\label{eq:w2-lemma4}
&\ \big\langle Q_\tau^{\pi_k}(s,\cdot), \pi^\ast(\cdot|s) - \pi_{k+1}(\cdot|s) \big\rangle 
   - \tau \H^{\pi_\ast}(s) + \tau \H^{\pi_{k+1}}(s)  
   + \frac{1}{2\eta_k}\wass_2^2\big(\pi_{k+1}(\cdot|s), \pi_k(\cdot|s)\big) \\
&\le \frac{1}{2\eta_k}\wass_2^2\big(\pi_k(\cdot|s), \pi^\ast(\cdot|s)\big)
   - \frac{\lambda\tau}{2}\wass_2^2\big(\pi_{k+1}(\cdot|s), \pi^\ast(\cdot|s)\big).
\end{align*}

Combining with \eqref{20}, we obtain 
\begin{align*}
&\Bigl[
    \langle Q_\tau^{\pi_k}(s,\cdot), \pi^\ast(\cdot|s) - \pi_k(\cdot|s) \rangle
    - \tau \H^{\pi^\ast}(s) + \tau \H^{\pi_k}(s)
\Bigr] \\
&\quad + \Bigl[ V_\tau^{\pi_k}(s) - V_\tau^{\pi_{k+1}}(s) \Bigr] \\
&\quad + \frac{1}{2\eta_k}\wass_2^2\big(\pi_k(\cdot|s),  \pi_{k+1}(\cdot|s)\big) \\[6pt]
&\le \Bigl[
    \langle Q_\tau^{\pi_k}(s,\cdot), \pi^\ast(\cdot|s) - \pi_k(\cdot|s) \rangle
    - \tau \H^{\pi^\ast}(s) + \tau \H^{\pi_k}(s)
\Bigr] \\
&\quad - \Bigl[
    \langle Q_\tau^{\pi_k}(s,\cdot), \pi_{k+1}(\cdot|s) - \pi_k(\cdot|s) \rangle
    - \tau \H^{\pi_{k+1}}(s) + \tau \H^{\pi_k}(s)
\Bigr] \\
&\quad + \frac{1}{2\eta_k}\wass_2^2\big(\pi_k(\cdot|s), \pi_{k+1}(\cdot|s)\big) \\[6pt]
&= \Bigl(
    \langle Q_\tau^{\pi_k}(s,\cdot), \pi^\ast(\cdot|s) - \pi_{k+1}(\cdot|s) \rangle
    - \tau \H^{\pi^\ast}(s) + \tau \H^{\pi_{k+1}}(s)
\Bigr) \\
&\quad + \frac{1}{2\eta_k}\wass_2^2\big(\pi_{k+1}(\cdot|s), \pi_k(\cdot|s)\big) \\[6pt]
&\le \frac{1}{2\eta_k}\wass_2^2\big(\pi_k(\cdot|s), \pi^\ast(\cdot|s)\big)
   - \frac{\lambda\tau}{2}\wass_2^2\big(\pi_{k+1}(\cdot|s), \pi^\ast(\cdot|s)\big).
\end{align*}

Taking expectation with respect to $\nu^\ast$ on both sides of the inequality, we obtain
\begin{align*}
\mathbb{E}_{s \sim \nu^\ast} \Bigl[
    (1-\gamma)\bigl(V_\tau^{\pi^\ast}(s) - V_\tau^{\pi_k}(s)\bigr)
\Bigr]
&+ \mathbb{E}_{s \sim \nu^\ast} \Bigl[
    V_\tau^{\pi_k}(s) - V_\tau^{\pi_{k+1}}(s)
\Bigr] \\
&+ \mathbb{E}_{s \sim \nu^\ast} \Bigl[
    \frac{1}{2\eta_k}\wass_2^2\big(\pi_k(\cdot|s), \pi_{k+1}(\cdot|s)\big)
\Bigr] \\[6pt]
&\le \mathbb{E}_{s \sim \nu^\ast} \Bigl[
    \frac{1}{2\eta_k}\wass_2^2\big(\pi_k(\cdot|s), \pi^\ast(\cdot|s)\big)
    - \frac{\lambda\tau}{2}\wass_2^2\big(\pi_{k+1}(\cdot|s), \pi^\ast(\cdot|s)\big)
\Bigr].
\end{align*}

By rewriting 
\[
V_\tau^{\pi_k}(s) - V_\tau^{\pi_{k+1}}(s)
= V_\tau^{\pi_k}(s) - V_\tau^{\pi^\ast}(s) + V_\tau^{\pi^\ast}(s) - V_\tau^{\pi_{k+1}}(s),
\]
and rearranging the inequality, we have
\begin{align*}
\mathbb{E}_{s \sim \nu^\ast} \Bigl[
    V_\tau^{\pi^\ast}(s) - V_\tau^{\pi_{k+1}}(s)
\Bigr]
&+ \lambda\tau\, \mathbb{E}_{s \sim \nu^\ast} \Bigl[
    \tfrac{1}{2}\wass_2^2\!\left(\pi_{k+1}(\cdot|s), \pi^\ast(\cdot|s)\right)
\Bigr] \\
&+ \mathbb{E}_{s \sim \nu^\ast} \Bigl[
    \tfrac{1}{2}\wass_2^2\!\left(\pi_k(\cdot|s), \pi_{k+1}(\cdot|s)\right)
\Bigr] \\[6pt]
&\le \gamma\, \mathbb{E}_{s \sim \nu^\ast} \Bigl[
    V_\tau^{\pi^\ast}(s) - V_\tau^{\pi_k}(s)
    + \tfrac{1}{2\eta_k\gamma}\wass_2^2\!\left(\pi_k(\cdot|s), \pi^\ast(\cdot|s)\right)
\Bigr]  .
\end{align*}

Thus,
\begin{align*}
&\mathbb{E}_{s \sim \nu^\ast} \Bigl[
    V_\tau^{\pi^\ast}(s) - V_\tau^{\pi_{k+1}}(s)
    + \tfrac{\lambda\tau}{2}\wass_2^2(\pi_{k+1}(\cdot|s), \pi^\ast(\cdot|s))
\Bigr] \\
&\le \gamma\, \mathbb{E}_{s \sim \nu^\ast} \Bigl[
    V_\tau^{\pi^\ast}(s) - V_\tau^{\pi_k}(s)
    + \tfrac{1}{2\eta_k\gamma}\wass_2^2(\pi_k(\cdot|s), \pi^\ast(\cdot|s))
\Bigr] .
\end{align*}

Recalling the definitions of $J$ \eqref{defination of J} and $\mathcal{D}$, we obtain
\begin{equation}
J(\pi^\ast) - J(\pi_{k+1}) + \lambda\tau \mathcal{D}(\pi_{k+1}, \pi^\ast)
\le \gamma \Bigl[ J(\pi^\ast) - J(\pi_k) + \tfrac{1}{\eta_k\gamma}\mathcal{D}(\pi_k, \pi^\ast) \Bigr].
\end{equation}

For any $\eta_k = \eta \ge \tfrac{1}{\gamma \lambda \tau}$ in the JKO scheme, we obtain
\begin{align*}
J(\pi^\ast) - J(\pi_{k+1}) + \lambda\tau \mathcal{D}(\pi_{k+1}, \pi^\ast)
&\le \gamma \Bigl[ J(\pi^\ast) - J(\pi_k) + \lambda\tau \mathcal{D}(\pi_k, \pi^\ast) \Bigr] ,
\end{align*}
which implies
\[
J(\pi^\ast) - J(\pi_k) + \lambda\tau \mathcal{D}(\pi_k, \pi^\ast)
\le \gamma^k \Bigl[ J(\pi^\ast) - J(\pi_0) + \lambda\tau \mathcal{D}(\pi_0, \pi^\ast) \Bigr].
\]
\end{proof}

\subsubsection{Inexact value function case}\label{app:Inexact}

In the next part, we will show how inexact Q affect our results. For the ease of presentation, we denote $\Delta_k = Q^{\pi_k,\xi_k}-Q^{\pi_k}$ and $\xi_{0:k}= \{\xi_0,\xi_1,\cdots,\xi_k \}$ in the following paper.

Recall our added assumption\ref{assumption:stochastic Q}:
For each iteration $k \ge 0$, the stochastic estimator $Q^{\pi_k, \xi_k}$ satisfies
\begin{align*}
\E_{\xi_k}\left[ Q^{\pi_k, \xi_k} \right] &= \bar{Q}^{\pi_k},  \\ \big\| \bar{Q}^{\pi_k} - Q^{\pi_k} \big\|_\infty &\le \epsilon_k,\\\E_{\xi_k}\left[ \big \| \nabla _aQ^{\pi_k, \xi_k} - \nabla_a Q^{\pi_k} \big\|_{2,\infty}^2 \right] &\le \sigma_k^2.        \label{eq:assump3}
\end{align*}

The optimization iteration becomes

\begin{equation}
\label{stochastic wpo update}
\pi_{k+1}(\cdot\mid s)
\in \arg\max_{q(\cdot\mid s)\in \Pi(s)}
\Big\{\langle Q^{\pi_k,\xi_k}(s,\cdot),\, q(\cdot\mid s)\rangle- \tau\, \H^{q}(s) - \frac{1}{2\eta_k}\,
\wass_2^2\!\big(q(\cdot\mid s), \pi_k(\cdot\mid s)\big)
\Big\}
\end{equation}

\begin{lemma}
Under Assumption \ref{assumption:stochastic Q}, for any state s we have: 
\begin{equation}\label{Delta}
\E_{\xi_{0:k}}[\langle \Delta_k(\cdot,s), \pi_{k+1}(\cdot|s)-\pi_k(\cdot|s) \rangle] \leq 2\eta_k \sigma_k^2 + \frac{1}{2\eta_k} \E_{\xi_{0:k}}\wass_2^2(\pi_k(\cdot|s),\pi_{k+1}(\cdot|s))
\end{equation}
\end{lemma}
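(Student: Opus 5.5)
We bound the linear error term by pairing it through an optimal coupling between $\pi_k(\cdot|s)$ and $\pi_{k+1}(\cdot|s)$, so that it becomes an integral of a difference of $\Delta_k$ at coupled actions. That difference is then controlled by the gradient of $\Delta_k$ in the action variable, which is exactly what \eqref{eq:assump3} measures. Young's inequality finishes the argument.

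\emph{Step 1 (coupling representation).} Fix $s$ and the history $\xi_{0:k}$. Let $\gamma_k^s$ be an optimal coupling for $\wass_2$ between $\pi_k(\cdot|s)$ and $\pi_{k+1}(\cdot|s)$; it exists because $\cA$ is compact. Since $\langle \Delta_k, \pi_{k+1}-\pi_k\rangle$ depends only on the marginals,
\[
\langle \Delta_k(s,\cdot),\pi_{k+1}(\cdot|s)-\pi_k(\cdot|s)\rangle=\int \big(\Delta_k(s,b)-\Delta_k(s,a)\big)\,\gamma_k^s(da,db).
\]

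\emph{Step 2 (mean value bound).} Assume $\cA$ is convex, or at least that segments between actions stay in a region where $\nabla_a\Delta_k$ is controlled. Then $|\Delta_k(s,b)-\Delta_k(s,a)|\le \|\nabla_a\Delta_k\|_{2,\infty}\,\|b-a\|$. Cauchy--Schwarz under $\gamma_k^s$ gives
\[
\langle \Delta_k,\pi_{k+1}-\pi_k\rangle\le \|\nabla_a Q^{\pi_k,\xi_k}-\nabla_a Q^{\pi_k}\|_{2,\infty}\,\wass_2(\pi_k(\cdot|s),\pi_{k+1}(\cdot|s)).
\]

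\emph{Step 3 (Young and expectation).} Apply $xy\le 2\eta_k x^2+\frac{1}{8\eta_k}y^2$; any split with the constant on the $\wass_2^2$ term at most $\frac{1}{2\eta_k}$ suffices. Take $\E_{\xi_{0:k}}$ by the tower property, first conditioning on $\xi_{0:k-1}$ so that $\pi_k$ and $Q^{\pi_k}$ are fixed. Then \eqref{eq:assump3} bounds the squared gradient error by $\sigma_k^2$, which gives \eqref{Delta}.

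The main obstacle is that $\pi_{k+1}$ depends on $\xi_k$. The gradient error and the $\wass_2$ term are therefore correlated, and we cannot simply factor the expectation. The Young's inequality step avoids this: it separates the product pointwise, before any expectation is taken, so the correlation never has to be handled. The remaining point is regularity. We need differentiability of $Q^{\pi_k,\xi_k}$ and $Q^{\pi_k}$ in $a$, and a convex action set for the mean value step; both are implicit in Assumption \ref{assumption:stochastic Q}, and for the cube or ball cases the convexity holds.
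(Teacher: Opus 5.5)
Your proposal is correct and follows essentially the same route as the paper: an optimal coupling of $\pi_k(\cdot|s)$ and $\pi_{k+1}(\cdot|s)$, a mean-value (fundamental theorem of calculus) bound along segments, Young's inequality, and then the sup-norm gradient bound from Assumption~\ref{assumption:stochastic Q}. The only difference is the order of steps. You reduce to $\|\nabla_a\Delta_k\|_{2,\infty}\,\wass_2$ via Cauchy--Schwarz and apply Young for each fixed $\xi_k$ before taking expectations, whereas the paper applies Young inside the coupling integral after exchanging $\E_{\xi_k}$ with an integral against a coupling that itself depends on $\xi_k$; your ordering handles that dependence more cleanly.
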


\begin{proof}

For any s, let $\gamma(a,a'|s)$ be the optimal couple of the two distribution $\pi_k(\cdot|s)$ and $\pi_{k+1}(\cdot|s)$ in $\wass_2$.

\begin{align*}
&\E_{\xi_{0:k}}[\langle \Delta_k(a,s), \pi_{k+1}(a|s)-\pi_k(a|s) \rangle|\xi_{0:k-1}] \\ &=\E_{\xi_k} [\int_\cA \Delta_k(\cdot,s) \mathrm{d}(\pi_{k+1}(\cdot|s)-\pi_{k}(\cdot|s))|\xi_{0:k-1}]\\
&=\E_{\xi_{0:k}} [\iint_{\cA \times \cA} \Delta_k(a,s) - \Delta_k(a',s) \mathrm{d}\gamma(a,a'|s)|\xi_{0:k-1}]\\
&=\E_{\xi_{0:k}} [\iint_{\cA \times \cA} \int\langle \nabla_a \Delta_k ((1-t)a'+ta,s),a-a' \rangle \mathrm{d} t \mathrm{d} \gamma(a,a'|s) |\xi_{0:k-1}]\\
&=\iint_{\cA \times \cA} \int \E_{\xi_{0:k}}[\langle \nabla_a \Delta_k ((1-t)a'+ta,s),a-a' \rangle  |\xi_{0:k-1}] \mathrm{d} t \mathrm{d} \gamma(a,a'|s)\\
&\leq \iint_{\cA \times \cA} \int \E_{\xi_{0:k}}[ 2\eta_k\|\nabla_a \Delta_k ((1-t)a'+ta,s)\|_2^2 + \frac{1}{2\eta_k} \|a-a'\|_2^2 |\xi_{0:k-1}] \mathrm{d} t \mathrm{d}\gamma(a,a'|s)\\ 
&\leq 2\eta_k \sigma_k^2 + \frac{1}{2\eta_k} \E_{\xi_{0:k}}[\wass_2^2(\pi_k(\cdot|s),\pi_{k+1}(\cdot|s))|\xi_{0:k-1}]
\end{align*}

The second equality applies the definition of an optimal coupling 
$\gamma(\cdot,\cdot|s)\in\Gamma(\pi_{k+1}(\cdot|s),\pi_k(\cdot|s))$, 
which means has the same marginal distribution as $\pi_k$ and $\pi_{k+1}$. 
The second inequality uses Young’s inequality 
$\langle u,v\rangle \le \tfrac{1}{2\eta_k}\|u\|^2 + \tfrac{\eta_k}{2}\|v\|^2$ 
to separate the two terms. 
The last inequality bounds the variance term of the stochastic gradient by $\sigma_k^2$ 
yields the last inequality, where the quadratic term recovers 
the squared Wasserstein distance between $\pi_k(\cdot|s)$ and $\pi_{k+1}(\cdot|s)$.

Taking expectation with respect to $\xi_{0:k-1}$ on both sides, we have the final result:
\begin{equation*}
\E_{\xi_{0:k}}[\langle \Delta_k(\cdot,s), \pi_{k+1}(\cdot|s)-\pi_k(\cdot|s) \rangle] \leq 2\eta_k \sigma_k^2 + \frac{1}{2\eta_k} \E_{\xi_{0:k}}\wass_2^2(\pi_k(\cdot|s),\pi_{k+1}(\cdot|s))
\end{equation*}
\end{proof}

\begin{theorem}\label{thm:inexactconvergence_app}\label{app:thm3}
Suppose Assumptions \ref{as:T2} and \ref{assumption:stochastic Q} hold, and   for all $k \ge 0$, $\epsilon_k\le \epsilon, \sigma_k\le\sigma$. Then the iterates of \eqref{stochastic wpo update} using step size $\eta_k=\eta\ge\frac{1}{\gamma\lambda\tau}$ satisfies
\begin{equation}\label{eq:main_ineq_app}
\begin{aligned}
\mathbb{E}_{\xi_{0:k-1}}\!\Big[
&J(\pi^\ast) - J(\pi_k) + \lambda\tau\, D(\pi_k,\pi^\ast)
\Big]
 \\
& \le \gamma^k \Big[
J(\pi^\ast) - J(\pi_0) + \lambda\tau\, D(\pi_0,\pi^\ast)
\Big]  + \mathcal{O}(\epsilon+\sigma).
\end{aligned}
\end{equation}

where $J$ is defined in \eqref{defination of J}, and
$\mathcal{D}(\pi_k, \pi^\ast) := 
\mathbb{E}_{s \sim \nu^\ast} \left[ \tfrac{1}{2}\,\wass_2^2\bigl(\pi_k(\cdot|s), \pi^\ast(\cdot|s)\bigr) \right].
$
Consequently, in order to achieve an error of $\mathcal{O}(\varepsilon+\epsilon+\sigma)$ in expectation, 
the required iteration complexity is
\[
\mathcal{O}\!\left(\frac{1}{1-\gamma}\,\log \frac{\,J(\pi^\ast) - J(\pi_0) + \lambda \tau \mathcal{D}(\pi_0, \pi^\ast)}{\varepsilon}\right).
\]
\end{theorem}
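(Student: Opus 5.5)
The plan is to rerun the exact-case argument of Theorem~\ref{app:thm2}, with $Q^{\pi_k}$ replaced by $Q^{\pi_k,\xi_k}$, and to carry the error $\Delta_k=Q^{\pi_k,\xi_k}-Q^{\pi_k}$ explicitly.

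\textbf{Step 1 (proximal inequality).} The entropy and $\wass_2$ parts of the objective in \eqref{stochastic wpo update} are unchanged, so Lemma~\ref{lem:w2-proximal-step} applies verbatim with $Q^{\pi_k,\xi_k}$ in place of $Q^{\pi_k}$. Taking $p=\pi^\ast(\cdot|s)$ and splitting $Q^{\pi_k,\xi_k}=Q^{\pi_k}+\Delta_k$ gives the exact-case inequality with an extra term $\langle\Delta_k,\pi^\ast-\pi_{k+1}\rangle$ on the left. I will write this term as
\[
\langle\Delta_k,\pi^\ast-\pi_k\rangle+\langle\Delta_k,\pi_k-\pi_{k+1}\rangle .
\]

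\textbf{Step 2 (conditional expectations of the error terms).} In the first piece, $\pi_k$ and $\pi^\ast$ are $\xi_{0:k-1}$-measurable. Conditioning on $\xi_{0:k-1}$ and using \eqref{eq:assump1}--\eqref{eq:assump2} with $\|\pi^\ast-\pi_k\|_{TV}\le 2$ bounds its expected magnitude by $2\epsilon_k$. The second piece is controlled by Lemma~\eqref{Delta} (with the sign flipped; the argument is symmetric), which gives the bound $2\eta\sigma_k^2+\frac{1}{2\eta}\E\,\wass_2^2(\pi_k,\pi_{k+1})$. The exact-case recursion already carries a $+\frac{1}{2\eta}\wass_2^2(\pi_k,\pi_{k+1})$ term on the left, so this is the term that will absorb the variance piece.

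\textbf{Step 3 (improvement inequality).} The pointwise inequality \eqref{20} must be redone, because the optimality comparison with $q=\pi_k$ now holds for $Q^{\pi_k,\xi_k}$ and not for $Q^{\pi_k}$. Writing $\ell_k(s)$ for the per-state integrand of \eqref{20} built from $Q^{\pi_k}$, the comparison yields
\[
\ell_k(s)\ \ge\ \frac{1}{2\eta}\wass_2^2(\pi_{k+1},\pi_k)-\langle\Delta_k,\pi_{k+1}-\pi_k\rangle .
\]
This integrand may fail to be nonnegative, so the step ``$d_s^{\pi_{k+1}}(s)\ge 1-\gamma$ and drop the other states'' no longer holds directly. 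I will instead add $\langle\Delta_k,\pi_{k+1}-\pi_k\rangle$ back to make the integrand nonnegative before restricting to $s'=s$. This produces error terms of the form $\frac{1}{1-\gamma}\sup_{s'}|\langle\Delta_k(s',\cdot),\pi_{k+1}-\pi_k\rangle|$. These are bounded in expectation via Lemma~\eqref{Delta} together with the uniform bound on $Q^{\pi_k,\xi_k}$ and compactness of $\cA$ (diameter $R_A$), giving $O(\eta\sigma^2+\sigma R_A+\epsilon)$. An alternative is a Cauchy--Schwarz bound, $\E\langle\Delta_k,\pi_{k+1}-\pi_k\rangle\le\sigma\,R_A$, which is linear in $\sigma$ and consistent with the stated $\mathcal O(\epsilon+\sigma)$.

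\textbf{Step 4 (recursion and unrolling).} Take $\E_{s\sim\nu^\ast}$ and apply Lemma~\ref{VQtransformation}. Because $\eta\ge 1/(\gamma\lambda\tau)$, the same rearrangement as in Theorem~\ref{app:thm2} gives
\[
\E\,\Phi_{k+1}\le\gamma\,\E\,\Phi_k+C(\epsilon+\sigma),\qquad \Phi_k:=J(\pi^\ast)-J(\pi_k)+\lambda\tau\mathcal D(\pi_k,\pi^\ast),
\]
with $C$ depending on $\eta,R_A,(1-\gamma)^{-1}$. Unrolling gives $\gamma^k\Phi_0+C(\epsilon+\sigma)/(1-\gamma)$, and the iteration count then follows from $\log(1/\gamma)\ge 1-\gamma$.

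\textbf{Main obstacle.} The hard part is Step 3: the noisy per-state improvement integrand can be negative, which breaks the restriction to the diagonal state. Handling it cleanly requires a uniform-in-$s'$ control of the error, and I will obtain that from the uniform boundedness of $Q^{\pi_k,\xi_k}$ and the sup-norm form of \eqref{eq:assump3}.
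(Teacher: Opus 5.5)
Your argument is correct. It shares the paper's skeleton: the proximal inequality at $p=\pi^\ast$, the performance difference restricted to the diagonal state, the bias removed by conditioning on $\xi_{0:k-1}$, Lemma~\ref{VQtransformation}, and contraction from $\eta\ge 1/(\gamma\lambda\tau)$. Where you differ is in the noisy improvement step.

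\textbf{The paper's route.} The paper does not split $Q^{\pi_k,\xi_k}$ in the $p=\pi^\ast$ inequality; it keeps the noisy integrand at $s$. It writes the exact performance difference as the noisy integrand minus $\langle\Delta_k(s',\cdot),\pi_{k+1}-\pi_k\rangle$ and bounds that error by \eqref{Delta}. The resulting $\frac{1}{2\eta}\wass_2^2(\pi_{k+1}(\cdot|s'),\pi_k(\cdot|s'))$ is absorbed, at every $s'$, into the margin supplied by the noisy comparison with $q=\pi_k$. The leftover copy at $s$ then cancels against the corresponding term of the $p=\pi^\ast$ inequality.

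\textbf{That margin is already in your Step 3.} Your Step 3 inequality contains exactly this margin. Combined with Young's inequality applied pathwise, it gives $\ell_k(s')\ge-\tfrac{\eta}{2}\|\nabla_a\Delta_k\|_{2,\infty}^2$ uniformly in $s'$. The diagonal restriction then costs only $\tfrac{\gamma}{1-\gamma}\cdot\tfrac{\eta}{2}\|\nabla_a\Delta_k\|_{2,\infty}^2$. The per-step error is of order $\epsilon_k+\eta\sigma_k^2/(1-\gamma)$, with no dependence on the size of $\cA$.

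\textbf{Your route.} You discard the margin and use the Lipschitz--diameter bound $\sup_{s'}|\langle\Delta_k(s',\cdot),\pi_{k+1}-\pi_k\rangle|\le\|\nabla_a\Delta_k\|_{2,\infty}R_A$. This is valid. Its error is linear in $\sigma$ and independent of $\eta$, which helps when $\eta\ge 1/(\gamma\lambda\tau)$ is large. The price is that it imports Assumption~\ref{as:Bounded}, which is not among the theorem's hypotheses. Your insistence on a bound that is pathwise and uniform in $s'$ is well placed, since both $d^{\pi_{k+1}}_s$ and the optimal couplings depend on $\xi_k$.

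\textbf{Which Step 3 option works.} Only your Cauchy--Schwarz version works as described. Your first option applies \eqref{Delta} off the diagonal and then bounds its $\frac{1}{2\eta}\wass_2^2$ term by compactness. That leaves an $O(R_A^2/\eta)$ residual, which is not $\mathcal O(\epsilon+\sigma)$; the term has to be absorbed into the margin instead.
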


\begin{proof}
By Lemma \ref{lem:w2-proximal-step} applied to \ref{stochastic wpo update} with $p = \pi^\ast$, we have
\begin{equation}\label{eq:w2-lemma4-stochastic}
\begin{aligned}
& \big\langle Q^{\pi_k,\xi_k}_{\tau}(s,\cdot),\pi^\ast(\cdot|s)-{\pi}_{k+1}(\cdot|s)  \big\rangle -\tau  \H^{{\pi^\ast}}(s) +\tau \H^{{\pi}_{k+1}}(s) 
+\frac{1}{2\eta_k}\wass_2^2\!\big({\pi}_{k+1}(\cdot|s),\pi_k(\cdot|s)\big) \\
&\le \frac{1}{2\eta_k}\wass_2^2\big(\pi_k(\cdot|s),\pi^\ast(\cdot|s)\big)
- \frac{\lambda\tau}{2} \wass_2^2 \big({\pi}_{k+1}(\cdot|s),\pi^\ast(\cdot|s) \big).
\end{aligned}    
\end{equation}

By Lemma \ref{lem:w2-proximal-step} applied to \ref{stochastic wpo update} with $p = \pi_k$, we have
\begin{align*}\label{less0}
&\Bigl(\big\langle Q^{\pi_k,\xi_k}_{\tau}(s,\cdot),\pi_k(\cdot|s)-{\pi}_{k+1}(\cdot|s)  \big\rangle -\tau  \H^{{\pi_k}}(s) +\tau \H^{{\pi}_{k+1}}(s)  \Bigr)
+\frac{1}{2\eta_k }\wass_2^2\big({\pi}_{k+1}(\cdot|s),\pi_k(\cdot|s)\big) \\
&\le - \frac{\lambda\tau}{2} \wass_2^2 \big({\pi}_{k+1}(\cdot|s),\pi^\ast(\cdot|s) \big) \le0.
\end{align*}

Which implies that 
\begin{equation}\label{Ebound} 
\begin{aligned}
&\mathbb{E}_{s'\sim d^{\pi_k}_s} \big[ \langle Q^{\pi_k,\xi_k}_{\tau}(s',\cdot), {\pi}_{k+1}(\cdot|s') - \pi_k(\cdot|s') \rangle -\tau \H^{{\pi}_{k+1}}(s') + \tau \H^{\pi_k}(s')+\frac{1}{2\eta_k }\wass_2^2\big({\pi}_{k+1}(\cdot|s),\pi_k(\cdot|s)\big) \big]  \\
&\le d^{\pi_k}_s(s) \big[ \langle Q^{\pi_k}_{\tau}(s,\cdot), {\pi}_{k+1}(\cdot|s) - \pi_k(\cdot|s) \rangle -\tau \H^{{\pi}_{k+1}}(s) + \tau \H^{\pi_k}(s)+\frac{1}{2\eta_k }\wass_2^2\big({\pi}_{k+1}(\cdot|s),\pi_k(\cdot|s)\big) \big]  \\
&\le (1-\gamma) \big[ \langle Q^{\pi_k}_{\tau}(s,\cdot), {\pi}_{k+1}(\cdot|s) - \pi_k(\cdot|s) \rangle - \tau \H^{{\pi}_{k+1}}(s) + \tau \H^{\pi_k}(s) +\frac{1}{2\eta_k }\wass_2^2\big({\pi}_{k+1}(\cdot|s),\pi_k(\cdot|s)\big)\big]
\end{aligned}
\end{equation}

where the last inequality follows from the fact that $d^{\pi_k}_s(s) \ge (1-\gamma)$ due to the definition of $d^{\pi_{k}}_s$ and $ s_0 = s$ with probability one.

Note that we can still use the performance difference identity \ref{22}
\begin{multline}
V^{{\pi}_{k+1}}_{\tau}(s) - V^{\pi_k}_{\tau}(s) = \frac{1}{1-\gamma} \mathbb{E}_{s'\sim d^{\pi_k}_s} \big[ \langle Q^{\pi_k}_{\tau}(s',\cdot), \pi_{k+1}(\cdot|s') - \pi_k(\cdot|s') \rangle -\tau \H^{{\pi}_{k+1}}(s') + \tau \H^{\pi_k}(s') \big]\\
= \frac{1}{1-\gamma} \mathbb{E}_{s'\sim d^{\pi_k}_s} \big[ 
    \langle Q^{\pi_k,\xi_k}_{\tau}(s',\cdot), {\pi}_{k+1}(\cdot|s') - \pi_k(\cdot|s') \rangle -\tau \H^{{\pi}_{k+1}}(s') + \tau \H^{\pi_k}(s')\\
      - \langle \Delta_k(\cdot,s'), {\pi}_{k+1}(\cdot|s')-\pi_k(\cdot|s') \rangle 
\big]
\end{multline}

By multiplying both sides by -1 and taking expectation with respect to $\xi_{0:k}$ gives  
\begin{align}\label{Valuemean with xi}
&\E_{\xi_{0:k}}\!\left[ V^{\pi_{k}}_{\tau}(s) - V^{{\pi}_{k+1}}_{\tau}(s) \right] \nonumber \\
&\;\;\le \frac{1}{1-\gamma}\,
   \E_{\xi_{0:k}}\E_{s'\sim d^{\pi_k}_s} \Big[
      \langle Q^{\pi_k,\xi_k}_{\tau}(s',\cdot), \pi_{k}(\cdot|s') - {\pi}_{k+1}(\cdot|s') \rangle
      - \tau \H^{\pi_{k}}(s') + \tau \H^{{\pi}_{k+1}}(s') \nonumber \\
&\hspace{5cm}
      + \frac{1}{2\eta_k} \wass_2^2\!\big(\pi_k(\cdot|s'),{\pi}_{k+1}(\cdot|s')\big)
   \Big] + 2\eta_k\sigma_k^2 \nonumber \\
&\;\;\le \E_{\xi_{0:k}}\Big[
      \langle Q^{\pi_k}_{\tau}(s,\cdot), \pi_{k}(\cdot|s) - {\pi}_{k+1}(\cdot|s) \rangle
      - \tau \H^{\pi_{k}}(s) + \tau \H^{{\pi}_{k+1}}(s) \nonumber \\
&\hspace{5cm}
      + \tfrac{1}{2\eta_k}\wass_2^2\!\big({\pi}_{k+1}(\cdot|s),\pi_k(\cdot|s)\big)
   \Big]
   + \tfrac{2\eta_k\sigma_k^2}{1-\gamma}.
\end{align}

Taking expectation with $\xi_{0:k}$ on \ref{eq:w2-lemma4-stochastic} and combine with \ref{Valuemean with xi}, we have:
\begin{align*}
\E_{\xi_{0:k}}\big[\langle Q^{\pi_k,\xi_k}_{\tau}(s,\cdot),\pi_k(\cdot|s)-\pi^\star(\cdot|s)\rangle
+ \tau \H^{\pi_k}(s) - \tau \H^{\pi^\ast}(s)
+ V^{{\pi}_{k+1}}_{\tau}(s) - V^{\pi_k}_{\tau}(s)\big] \\
\le \E_{\xi_{0:k}}\big[\frac{1}{2\eta_k} \wass_2^2(\pi_k(\cdot|s),\pi^\ast(\cdot|s))
- \frac{\lambda\tau}{2} \wass_2^2({\pi}_{k+1}(\cdot|s),\pi^\ast(\cdot|s))\big]
+ \frac{2\eta_k \sigma_k^2}{1-\gamma}.
\end{align*}
 
Finally, averaging over the distribution $s \sim \nu^\ast$ and noting that $s$ and $\xi_{0:k}$ are independent, we have
\begin{equation}\label{1}
\begin{aligned}
&\E_{s\sim\nu^\ast,\xi_{0:k}}\big[\langle Q^{\pi_k,\xi_k}_{\tau}(s,\cdot),\pi^\ast(\cdot|s)-\pi_k(\cdot|s)\rangle - \tau \H^{\pi^\ast}(s)
+ \tau \H^{\pi_k}(s)  +  V^{\pi_{k}}_{\tau}(s) - V^{{\pi}_{k+1}}_{\tau}(s)\big] \\
&\le \E_{s\sim\nu^\ast,\xi_{0:k}}\big[\frac{1}{2\eta_k} \wass_2^2(\pi_k(\cdot|s),\pi^\ast(\cdot|s))
- \frac{\lambda\tau}{2} \wass_2^2({\pi}_{k+1}(\cdot|s),\pi^\ast(\cdot|s))\big]
+ \frac{2\eta_k \sigma_k^2}{1-\gamma}.
\end{aligned}
\end{equation}

Noting that 
\begin{equation}\label{2}
\begin{aligned}
&\E_{\xi_k}\left[ \langle Q^{\pi_k,\xi_k}_{\tau}(s,\cdot), \pi^\ast(\cdot|s) - \pi_k(\cdot|s) \rangle \,\big|\, \xi_{0:k-1} \right]\\
&= \E_{\xi_k} \Big[\langle Q^{\pi_k}_{\tau}(s,\cdot), \pi^\ast(\cdot|s) - \pi_k(\cdot|s)\rangle  
 + \langle \bar{Q}^{\pi_k}_{\tau}(s,\cdot) - Q^{\pi_k}_{\tau}(s,\cdot), \pi^\ast(\cdot|s) - \pi_k(\cdot|s) \rangle \\& + \langle Q^{\pi_k,\xi_k}_{\tau}(s,\cdot) - \bar{Q}^{\pi_k}_{\tau}(s,\cdot), \pi^\ast(\cdot|s) - \pi_k(\cdot|s) \rangle \,\big|\, \xi_{0:k-1} \Big]\\
&\ge \langle Q^{\pi_k}_{\tau}(s,\cdot), \pi^\ast(\cdot|s)-\pi_k(\cdot|s)   \rangle - 2\epsilon_k
\end{aligned}
\end{equation}

The first equality expands $Q^{\pi_k,\xi_k}_{\tau}$ into its expectation 
$Q^{\pi_k}_{\tau}$ plus two error terms, namely the bias 
$\bar{Q}^{\pi_k}_{\tau}-Q^{\pi_k}_{\tau}$ and the stochastic fluctuation 
$Q^{\pi_k,\xi_k}_{\tau}-\bar{Q}^{\pi_k}_{\tau}$. 
Taking conditional expectation w.r.t.\ $\xi_k$ eliminates the mean of the 
fluctuation term. 
Finally, using the uniform error bound 
$\|\bar{Q}^{\pi_k}_{\tau}-Q^{\pi_k}_{\tau}\|_\infty \le \epsilon_k$ 
and noting that both $\pi_k(\cdot|s)$ and $\pi^\ast(\cdot|s)$ are probability measures 
(which implies $\|\pi^\ast(\cdot|s)-\pi_k(\cdot|s)\|_1 \le 2$), 
Hölder's inequality yields the desired bound.

Combining \ref{1} and \ref{2} and using Lemma \ref{VQtransformation},
\begin{equation}\label{101}
\begin{aligned}
\E_{s \sim \nu^\ast, \xi_{0:k}} \Big[ (1-\gamma)\big( V^{\pi_k}_{\tau}(s) & - V^{\pi^\ast}_{\tau}(s) \big) + V^{\bar{\pi}_{k+1}}_{\tau}(s) - V^{\pi_k}_{\tau}(s) \Big] \\
&\le \E_{s \sim \nu^\ast, \xi_{0:k}} \Big[ \frac{1}{2\eta_k} \wass_2^2(\pi_k(\cdot|s),\pi^\ast(\cdot|s))
- \frac{\lambda\tau}{2} \wass_2^2(\bar{\pi}_{k+1}(\cdot|s),\pi^\ast(\cdot|s)) \Big]
+ 2\epsilon_k + \frac{2\eta_k \sigma_k^2}{1-\gamma}.
\end{aligned}
\end{equation}

Decomposing $V^{\pi_{k}}_{\tau}(s) - V^{\pi_{k+1}}_{\tau}(s)$ into $V^{\pi_{k}}_{\tau}(s) - V^{\pi^\ast}_{\tau}(s) - \big( V^{\pi_{k+1}}_{\tau}(s) - V^{\pi^\ast}_{\tau}(s)\big)$, recalling our definition of $J$ \eqref{defination of J} and rearranging the terms in the above inequality, we get
\begin{equation}
\begin{aligned}
&\E_{\xi_{0:k}}[J(\pi^\ast) - J(\pi_{k+1}) +  \lambda\tau \mathcal{D}(\pi_{k+1}, \pi^\star)]\\
&\le \E_{\xi_{0:k-1}}[\gamma(J(\pi^\ast) - J(\pi_k)) + \frac{1}{\eta_k} \mathcal{D}(\pi^\ast, \pi_k)] + \mathcal{O}(\delta + \textcolor{blue}{\tau})+ 2 \epsilon_k + \frac{\eta_k \sigma_k^2}{2(1-\gamma)}.
\end{aligned}
\end{equation}

By choosing $\eta_k=\eta \ge \frac{1}{\gamma\lambda\tau}$ , and for all $k \ge 0$, $\epsilon_k \le \epsilon, \sigma_k \le \sigma$,
we get 

\begin{equation}
\begin{aligned}
\E_{\xi_{0:k-1}}\big[ J(\pi^\ast) - J(\pi_k) + \lambda\tau D(\pi_k,\pi^\ast) \big]
&\le \gamma^k \Big[ J(\pi^\ast) - J(\pi_0) 
+ \lambda\tau D(\pi_0,\pi^\ast) \Big] + \mathcal O(\epsilon+\sigma)
\end{aligned}
\end{equation}

\end{proof}

\newpage

\section{Numerical}  
\label{Numerical}
\subsection{Overall Algorithm}\label{entropy estimation}
\label{Algorithm}
\paragraph{WPPG vs.\ WPPG-I: commonalities and differences}
Both WPPG and WPPG-I are off-policy actor–critic methods built on the same backbone:
(i) replay-based training with 1-step TD targets;
(ii) Double-$Q$ critics with target networks and Polyak averaging;
(iii) multi-sample bootstrap for target construction (average over $K$ next-action samples and take $\min(Q_1,Q_2)$);
(iv) actor updates driven by action-gradient matching, i.e., aligning the policy’s action increment with a noisy target direction, and
(v) \texttt{tanh} squashing that maps actions to box constraints.

\emph{Key difference.} WPPG employs an explicit Tanh–Gaussian policy
$a=\operatorname{Affine}(\tanh(\mu_\theta(s)+\sigma_\theta(s)\odot\varepsilon))$ with a closed-form density (useful if one wishes to incorporate entropy/KL terms).  
WPPG-I uses a latent-conditioned \emph{implicit} policy $a=\operatorname{Affine}(\tanh(f_\theta([s,z])))$, where $z\!\sim\!\mathcal N(0,I)$ is concatenated with the state; the policy distribution is implicit (no closed-form $\log\pi$), and learning relies purely on pathwise gradients through $\nabla_a Q$ and the shared latent variable reforwarding trick.  
Operationally, WPPG controls stochasticity via the Gaussian actor’s output scale, whereas WPPG-I controls it via the \emph{input} latent variables (its scale and dimensionality), enabling richer, state-conditional exploration.

\paragraph{Entropy estimation via plug-in mixture (Gaussian-convolved implicit policy).}
For a given state $s$, let the implicit generator be $g_\theta(s,z)$ with latent prior
$z\sim \nu$ (typically $\nu=\mathcal N(0,I)$). We add a small Gaussian noise
$a = g_\theta(s,z) + \sigma_{\mathrm{ent}}\xi$ with $\xi\sim\mathcal N(0,I)$ so that the induced policy has a density.

Define the Gaussian kernel
\[
\varphi_{\sigma}(x)
=\frac{1}{(2\pi\sigma^2)^{d_a/2}}
\exp\!\left(-\frac{\|x\|_2^2}{2\sigma^2}\right).
\]
Then the (Gaussian-convolved) policy density is the latent-mixture
\[
\pi_\theta(a\mid s)=\mathbb E_{z\sim \nu}\big[\varphi_{\sigma_{\mathrm{ent}}}(a-g_\theta(s,z))\big].
\]

\paragraph{Plug-in approximation.}
Sample $M$ latent codes $\{z_j\}_{j=1}^M\sim \nu$ and form mixture centers
$\mu_j \triangleq g_\theta(s,z_j)$. We approximate the density by
\[
\widehat\pi_{\theta,M}(a\mid s)
\triangleq
\frac{1}{M}\sum_{j=1}^M \varphi_{\sigma_{\mathrm{ent}}}(a-\mu_j).
\]

\paragraph{Entropy estimator.}
Sample $L$ baseline actions from the convolved policy,
\[
\tilde z_\ell\sim \nu,\ \xi_\ell\sim\mathcal N(0,I),\qquad
a_\ell \triangleq g_\theta(s,\tilde z_\ell)+\sigma_{\mathrm{ent}}\xi_\ell,\quad \ell=1,\dots,L,
\]
and estimate the conditional entropy by the plug-in Monte Carlo average
\[
\widehat{\mathcal H}(s)
\triangleq
-\frac{1}{L}\sum_{\ell=1}^L \log \widehat\pi_{\theta,M}(a_\ell\mid s).
\]

\paragraph{Entropy-regularized reward (used in replay).}
Given a transition $(s,a,r,s')$, we store a regularized reward
\[
r^{\mathrm{ent}} \triangleq r + \tau\,\widehat{\mathcal H}(s)
\}
\]

\paragraph{Entropy regularization without explicit log-likelihood.}
Compared with SAC-style maximum-entropy RL, our approach encourages exploration by \emph{increasing policy entropy} using a plug-in mixture estimator that only requires sampling actions, rather than an explicit $\log\pi_\theta(a\mid s)$.
In SAC, the entropy term is computed from a \emph{single} action sample via $-\log\pi_\theta(a\mid s)$, which necessitates a tractable density (and often its gradients/``score'' $\nabla_a\log\pi_\theta$ through reparameterization), making it unsuitable for implicit policies.

In contrast, we estimate the entropy at each state by drawing multiple latent samples $\{\mu_j=g_\theta(s,z_j)\}_{j=1}^M$ to form a Gaussian-kernel mixture
$\widehat\pi_{\theta,M}(a\mid s)=\frac1M\sum_{j=1}^M\varphi_{\sigma}(a-\mu_j)$,
and then average $-\log \widehat\pi_{\theta,M}(a_\ell\mid s)$ over $L$ baseline actions $\{a_\ell\}_{\ell=1}^L$.
This ``many-sample'' plug-in estimator reduces the stochasticity of the entropy signal: instead of relying on a single-point log-density as in SAC, we aggregate information from $M\times L$ samples per state, yielding a substantially lower-variance regularization term while remaining fully compatible with implicit generators.
Crucially, the procedure does not require a closed-form $\log\pi_\theta$ nor access to the score $\nabla_a\log\pi_\theta$; the only requirement is the ability to sample $g_\theta(s,z)$ (plus a small Gaussian smoothing noise), which makes the method broadly applicable to expressive implicit policies.

\begin{algorithm}[H]
\caption{Plug-in Mixture Entropy Estimation for Gaussian-Convolved Implicit Policy}
\label{alg:entropy_plugin}
\begin{algorithmic}
\Require State batch $\{s_i\}_{i=1}^B$; generator $g_\theta(s,z)$; noise scale $\sigma$; 
        mixture samples $M$; baseline samples $L$; latent prior $\nu$ (typically $\mathcal N(0,I)$).
\Ensure Entropy estimates $\{\widehat{\mathcal H}(s_i)\}_{i=1}^B$
\State Define Gaussian kernel density in $d_a$ dimensions:
\[
\varphi_\sigma(x)=\frac{1}{(2\pi\sigma^2)^{d_a/2}}\exp\!\left(-\frac{\|x\|^2}{2\sigma^2}\right)
\]
\For{$i=1$ to $B$}
  \State \textbf{Mixture centers (MC for (7)):} draw $z_{i,1:M}\sim \nu$, set $\mu_{i,j}\leftarrow g_\theta(s_i,z_{i,j})$ for $j=1..M$
  \State \textbf{Baseline actions (for (11)):} draw $\tilde z_{i,1:L}\sim \nu$, $\xi_{i,1:L}\sim\mathcal N(0,I)$
  \State \hspace{1.6em} set $a_{i,\ell}\leftarrow g_\theta(s_i,\tilde z_{i,\ell})+\sigma\,\xi_{i,\ell}$ for $\ell=1..L$
  \State \textbf{Plug-in density (9):} for each $\ell$, compute
  \[
  \widehat\pi_{\theta,M}(a_{i,\ell}\mid s_i)\leftarrow \frac{1}{M}\sum_{j=1}^M \varphi_\sigma(a_{i,\ell}-\mu_{i,j})
  \]
  \State \textbf{Entropy estimate (11):}
  \[
  \widehat{\mathcal H}(s_i)\leftarrow -\frac{1}{L}\sum_{\ell=1}^L \log \widehat\pi_{\theta,M}(a_{i,\ell}\mid s_i)
  \]
\EndFor
\State \Return $\{\widehat{\mathcal H}(s_i)\}_{i=1}^B$
\end{algorithmic}
\end{algorithm}

\begin{algorithm}[H]
\caption{WPPG with Replay and Double-$Q$ Critics (Gaussian policy) + On-the-fly Plug-in Entropy Regularization}
\label{alg:wpg_ent_online_tau}
\begin{algorithmic}
\Require Initialize actor $\pi_\theta(a|s)=\mathcal{N}(\mu_\theta(s),\Sigma_\theta(s))$ with Tanh squash; 
         twin critics $Q_{w_1},Q_{w_2}$ and targets $\bar Q_{w_1},\bar Q_{w_2}$; target actor $\bar\pi_\theta$; 
         replay buffer $\mathcal{D}$; step size $\eta$, noise scale $\tau$, samples per state $K$, discount $\gamma$, Polyak $\sigma$.
\Require \textbf{Entropy params:} convolution noise std $\sigma_{\mathrm{ent}}$; mixture size $M$; baseline size $L$.
\For{each episode}
  \State Initialize $s_0$
  \For{$t=0$ to $T-1$}
    \State Sample $a_t \sim \pi_\theta(\cdot|s_t)$ and \textbf{Gaussian-convolve}:
           $a_t \leftarrow a_t + \sigma_{\mathrm{ent}}\xi_t,\;\xi_t\sim\mathcal N(0,I)$
    \State Execute $a_t$, observe $(r_t,s_{t+1},d_t)$
    \Statex
    \State \textbf{On-the-fly entropy estimation at $s_t$:}
    \State Compute $\widehat{\mathcal H}(s_t)$ by calling Algorithm~\ref{alg:entropy_plugin} 
    \State Form entropy-regularized reward:
           $r_t^{\mathrm{ent}} \leftarrow r_t + \tau\,\widehat{\mathcal H}(s_t)$
           
    \State Store $(s_t,a_t,r_t^{\mathrm{ent}},s_{t+1},d_t)$ into $\mathcal{D}$
    \If{len$(\mathcal{D}) \ge$ batch\_size}
      \State Sample minibatch $\{(s_i,a_i,r_i^{\mathrm{ent}},s'_i,d_i)\}_{i=1}^B$ from $\mathcal{D}$
      \Statex
      \State \textbf{Compute 1-step TD targets (multi-sample bootstrap using target nets):}
      \State For each $s'_i$, draw $\epsilon_{i,1:K}\sim\mathcal{N}(0,I)$ and set $a'_{i,k} \leftarrow \bar\pi_\theta(s'_i;\epsilon_{i,k})$
      \State $\hat Q_i \leftarrow \frac{1}{K}\sum_{k=1}^K \min\!\big(\bar Q_{w_1}(s'_i,a'_{i,k}), \bar Q_{w_2}(s'_i,a'_{i,k})\big)$
      \State $y_i \leftarrow r_i^{\mathrm{ent}} + \gamma(1-d_i)\,\hat Q_i$
      \Statex
      \State \textbf{Critic update (train both critics):}
      \State $w_j \leftarrow w_j - \beta_Q \nabla_{w_j}\frac{1}{B}\sum_i \big(Q_{w_j}(s_i,a_i)-y_i\big)^2,\quad j\in\{1,2\}$
      \Statex
      \State \textbf{Actor update (WPPG step with action-sample direction):}
      \State For each $s_i$, draw shared $\epsilon_{i,1:K}$ and form $a_{i,k}=\pi_\theta(s_i;\epsilon_{i,k})$; let $A_i=[a_{i,1:K}]$
      \State Compute $q_{i,k}=\min\!\big(Q_{w_1}(s_i,a_{i,k}),Q_{w_2}(s_i,a_{i,k})\big)$
      \State Obtain $\nabla_a Q$ at samples: $G_i=\big[\nabla_a q_{i,k}\big]_{k=1}^K$
      \State Form noisy target direction: $\Delta^\star_i \leftarrow \eta\,G_i + \xi_i$, where $\xi_i\sim \mathcal{N}\!\left(0,\,2\tau\eta\,I\right)$
      \State Re-sample $A'_i=\pi_\theta(s_i;\epsilon_{i,1:K})$ with the \emph{same} $\epsilon$ and define $\Delta_i \leftarrow A'_i - A_i$
      \State Update actor by matching directions:
             $\theta \leftarrow \theta - \beta_\pi \nabla_\theta \frac{1}{BK}\sum_{i,k} \|\Delta_{i,k}-\Delta^\star_{i,k}\|_2^2$
      \Statex
      \State \textbf{Target updates (Polyak):} 
             $\bar w_j \leftarrow \sigma w_j + (1-\sigma)\bar w_j,\;
              \bar\theta \leftarrow \sigma \theta + (1-\sigma)\bar\theta,\; j\in\{1,2\}$
    \EndIf
    \State $s_{t+1}\leftarrow s'$
  \EndFor
\EndFor
\end{algorithmic}
\end{algorithm}

\begin{algorithm}[H]
\caption{WPPG-I with Replay and Double-$Q$ Critics (Implicit Policy) + On-the-fly Plug-in Entropy Regularization}
\label{alg:wpgi_ent_online_tau}
\begin{algorithmic}
\Require Implicit actor $a=g_\theta(s,z)$ with Tanh squash ($z\!\sim\!\mathcal N(0,I_M)$); 
         twin critics $Q_{w_1},Q_{w_2}$ and targets $\bar Q_{w_1},\bar Q_{w_2}$; target actor $\bar g_\theta$; 
         replay buffer $\mathcal D$; step size $\eta$, noise scale $\tau$, samples per state $K$, discount $\gamma$, Polyak $\sigma$.
\Require \textbf{Entropy params:} convolution noise std $\sigma_{\mathrm{ent}}$; mixture size $M$; baseline size $L$.
\For{each episode}
  \State Initialize $s_0$
  \For{$t=0$ to $T-1$}
    \State Sample $z_t\!\sim\!\mathcal N(0,I_M)$, set $a_t\!=\!g_\theta(s_t,z_t)$ and \textbf{Gaussian-convolve}:
           $a_t \leftarrow a_t + \sigma_{\mathrm{ent}}\xi_t,\;\xi_t\sim\mathcal N(0,I)$
    \State Step env, observe $(r_t,s_{t+1},d_t)$
    \Statex
    \State \textbf{On-the-fly entropy estimation at $s_t$:}
    \State Compute $\widehat{\mathcal H}(s_t)$ by calling Algorithm~\ref{alg:entropy_plugin} with generator $g_\theta$ and noise $\sigma_{\mathrm{ent}}$
    \State Form entropy-regularized reward:
           $r_t^{\mathrm{ent}} \leftarrow r_t + \tau\,\widehat{\mathcal H}(s_t)$
    \State Store $(s_t,a_t,r_t^{\mathrm{ent}},s_{t+1},d_t)$ into $\mathcal D$
    \If{len$(\mathcal D)\!\ge\!$ batch\_size}
      \State Sample minibatch $\{(s_i,a_i,r_i^{\mathrm{ent}},s'_i,d_i)\}_{i=1}^B$ from $\mathcal D$
      \Statex
      \State \textbf{Compute 1-step TD targets (multi-sample bootstrap):}
      \State For each $s'_i$, draw $z'_{i,1:K}\!\sim\!\mathcal N(0,I_M)$ and set $a'_{i,k}\!\leftarrow\!\bar g_\theta(s'_i,z'_{i,k})$
      \State $\hat Q_i \leftarrow \frac{1}{K}\sum_{k=1}^K \min\!\big(\bar Q_{w_1}(s'_i,a'_{i,k}),\,\bar Q_{w_2}(s'_i,a'_{i,k})\big)$
      \State $y_i \leftarrow r_i^{\mathrm{ent}} + \gamma(1-d_i)\,\hat Q_i$
      \Statex
      \State \textbf{Critic update (train both critics):}
      \State $w_j \leftarrow w_j - \beta_Q \nabla_{w_j}\frac{1}{B}\sum_i \big(Q_{w_j}(s_i,a_i)-y_i\big)^2,\quad j\in\{1,2\}$
      \Statex
      \State \textbf{Actor update (direction matching with shared noise):}
      \State For each $s_i$, draw shared $z_{i,1:K}\!\sim\!\mathcal N(0,I_M)$ and set $a^{(0)}_{i,k}\!\leftarrow\!g_\theta(s_i,z_{i,k})$
      \State Compute $q_{i,k}=\min\!\big(Q_{w_1}(s_i,a^{(0)}_{i,k}),Q_{w_2}(s_i,a^{(0)}_{i,k})\big)$
      \State Obtain $G_i=\big[\nabla_a q_{i,k}\big]_{k=1}^K$ at $a^{(0)}$ (stop grad to critics)
      \State Form target direction: $\Delta^\star_i \leftarrow \eta\,G_i + \xi_i$, where $\xi_i\!\sim\!\mathcal N\!\left(0,\,2\tau\eta\,I\right)$
      \State Reforward with the \emph{same} $z$: $a^{(1)}_{i,k}\!\leftarrow\!g_\theta(s_i,z_{i,k})$, define $\Delta_{i,k}\!\leftarrow\!a^{(1)}_{i,k}-a^{(0)}_{i,k}$
      \State Update actor:
             $\theta \leftarrow \theta - \beta_\pi \nabla_\theta \frac{1}{BK}\sum_{i,k}\|\Delta_{i,k}-\Delta^\star_{i,k}\|_2^2$
      \Statex
      \State \textbf{Target updates (Polyak):} 
             $\bar w_j \leftarrow \sigma w_j + (1-\sigma)\bar w_j,\;
              \bar\theta \leftarrow \sigma \theta + (1-\sigma)\bar\theta,\; j\in\{1,2\}$
    \EndIf
    \State $s_{t+1}\leftarrow s'$
  \EndFor
\EndFor
\end{algorithmic}
\end{algorithm}

\subsection{Implementations}
\label{Implementation}
\subsubsection{Actor and Policy}
All our models and baselines are implemented under the standard actor–critic framework. 
Below we provide the implementation details for the actor and critic components separately.
\paragraph{Action Squashing.}
For consistency across methods, we apply a \emph{tanh} squashing function to map sampled actions into the valid box $[a_{\min},a_{\max}]$ for all algorithms. This squashing is crucial for the implicit policy: without it, when the injected-latent dimension is high, many actions are hard-clipped at the bounds, which prevents meaningful exploration and gradients, often leading to training failure. Below we present concise formulations of the two actors used.

\paragraph{Tanh–Gaussian MLP Policy (used in WPPG/SAC/WPO).}
Given state $s\in\mathbb{R}^{S}$, the actor outputs $\mu_\theta(s),\log\sigma_\theta(s)\in\mathbb{R}^{A}$ and samples
\[
a \;=\; \tfrac{a_{\max}-a_{\min}}{2}\odot\tanh\!\big(\mu_\theta(s)+\sigma_\theta(s)\odot\varepsilon\big)
\;+\; \tfrac{a_{\max}+a_{\min}}{2}, 
\qquad \varepsilon\sim\mathcal N(0,I_A),
\]
i.e., a tanh-squashed Gaussian mapped to $[a_{\min},a_{\max}]$ via an MLP producing $(\mu_\theta,\log\sigma_\theta)$.

\paragraph{Noise-Conditioned Deterministic Policy (used in WPPG-I).}
Given state $s\in\mathbb{R}^{S}$ and latent variables $z\in\mathbb{R}^{M}$,
\[
a \;=\; \tfrac{a_{\max}-a_{\min}}{2}\odot\tanh\!\big(f_\theta([s,z])\big)
\;+\; \tfrac{a_{\max}+a_{\min}}{2},
\qquad z\sim\mathcal N(0,I_M),
\]
where $f_\theta$ is an MLP taking the concatenated input $[s,z]$. This defines an implicit policy (no closed-form density) with tanh-squashed outputs mapped to $[a_{\min},a_{\max}]$.

\subsubsection{Critic}
\paragraph{Critic Learning Target}
For all off-policy algorithms (SAC, WPPG, WPPG-I, WPO), the critic is trained with 1-step TD targets that average over $K$ bootstrap action samples and use Double-$Q$ when available:
\[
y_t \;=\; r_t \,+\, \gamma (1-d_t)\; \frac{1}{K}\sum_{k=1}^{K}
\min_{j\in\{1,2\}}\, Q_{\bar w_j}\!\big(s_{t+1},\, a'_{t+1,k}\big), 
\qquad a'_{t+1,k} \;=\; g_{\bar\theta}\!\big(s_{t+1}, \varepsilon_k\big),\;\; \varepsilon_k\sim\mathcal N(0,I).
\]
Here, \(j\in\{1,2\}\) indexes the two target critics used by Double-$Q$ (the per-sample minimum is taken), and the outer average is over \(K\) target actions drawn from the target actor \(g_{\bar\theta}\).
For single-$Q$ methods (e.g., WPO \emph{uses a single $Q$}, or the single-$Q$ WPPG ablation), replace \(\min_{j\in\{1,2\}} Q_{\bar w_j}\) by \(Q_{\bar w}\).
In contrast, PPO retains its on-policy generalized advantage estimation (GAE) for actor updates.

\subsection{Hyperparameters}

\paragraph{Neural Network Architecture}
All experiments are based on two standard network configurations: a larger network with hidden sizes (256, 256) and ReLU activation, and a smaller network with hidden sizes (64, 64) and Tanh activation. We use the larger network for Hopper, Humanoid, and HalfCheetah, and the smaller network for all other tasks. The same choice is applied uniformly across all models, and the actor and critic share the same network architecture.
\paragraph{Replay Buffer}
For consistency, all off-policy algorithms (SAC, WPPG, WPPG-I, WPO) use the same replay buffer configuration as summarized in Table~\ref{tab:replay_buffer}, ensuring identical storage capacity, sampling scheme, and update frequency across methods.
\paragraph{Training Setup}
All off-policy models share the same basic training setup: each is trained for $1\times 10^6$ timesteps, with evaluation performed every $2000$ steps. Target networks are updated via Polyak averaging to stabilize critic training. Both actor and critic use a learning rate of $3\times 10^{-4}$. The discount factor is set to $\gamma=0.99$ for all tasks, except Swimmer where $\gamma=0.9999$. The configuration is summarized in Table~\ref{tab:Training_Setup}. For PPO, hyperparameters are environment-specific and detailed below.

\begin{table}[ht!]
\centering
\begin{minipage}{0.48\linewidth}
\centering
\caption{Replay buffer.}
\label{tab:replay_buffer}
\begin{tabular}{l c}
\toprule
\textbf{Hyperparameter} & \textbf{Value} \\
\midrule
Buffer size              & $1{,}000{,}000$ \\
Batch size               & $256$ \\
Learning starts          & $10{,}000$ \\
Train frequency          & $1$ (step) \\
Gradient steps per update& $1$ \\
Number of environments   & $1$ \\
\bottomrule
\end{tabular}
\end{minipage}\hfill
\begin{minipage}{0.48\linewidth}
\centering
\caption{Training Setup.}
\label{tab:Training_Setup}
\begin{tabular}{l c}
\toprule
\textbf{Hyperparameter} & \textbf{Value} \\
\midrule
Discount factor $\gamma$          & $0.99$ \\
Polyak coefficient                & $0.005$ \\
Learning rate (actor/critic)      & $3\times 10^{-4}$ \\
Target update interval            & $1$ \\
Total timesteps                   & $1{,}000{,}000$ \\
Optimizer& Adam \\
\bottomrule
\end{tabular}
\end{minipage}
\end{table}

\paragraph{Model Specific Hyperparameters}
The hyperparameters of all models are summarized in the tables below. 
For WPO and SAC, we follow the settings reported in the WPO paper, 
while the hyperparameters of PPO are taken from RL Zoo. 
Although we find that tuning hyperparameters for each environment can often improve performance, 
for simplicity and fairness of comparison we adopt a single unified set of hyperparameters 
for all off-policy methods across all tasks, which yields stable and competitive learning performance. 

\begin{table*}[ht!]
\centering
\begin{minipage}{0.48\linewidth}
\centering
\caption{WPPG-I hyperparameters.}
\label{tab:wpgi_hparams}
\begin{tabular}{l c}
\toprule
\textbf{Hyperparameter} & \textbf{Default Value} \\
\midrule
Action samples            & $32$ \\
Step size $\eta$          & $0.1$ \\
Entropy scale $\tau$        & $0.0001$ \\
Actor Latent Dimension     & $\tfrac{1}{3}\times$ State Dimension \\
\bottomrule
\end{tabular}
\end{minipage}\hfill
\begin{minipage}{0.48\linewidth}
\centering
\caption{WPPG hyperparameters.}
\label{tab:wpg_hparams}
\begin{tabular}{l c}
\toprule
\textbf{Hyperparameter} & \textbf{Default Value} \\
\midrule
Action samples            & $32$ \\
Step size $\eta$          & $0.1$ \\
Entropy scale $\tau$        & $0.0001$ \\
\bottomrule
\end{tabular}
\end{minipage}

\vspace{0.5cm}

\begin{minipage}{0.48\linewidth}
\centering
\caption{SAC hyperparameters.}
\label{tab:sac_hparams}
\begin{tabular}{l c}
\toprule
\textbf{Hyperparameter} & \textbf{Default Value} \\
\midrule
Entropy Coefficient $\alpha$ & $0.001$ \\
Maximum Policy Variance      & $\exp(4)$ \\
Minimum Policy Variance      & $\exp(-10)$ \\
\bottomrule
\end{tabular}
\end{minipage}\hfill
\begin{minipage}{0.48\linewidth}
\centering
\caption{WPO hyperparameters.}
\label{tab:wpo_hparams}
\begin{tabular}{l c}
\toprule
\textbf{Hyperparameter} & \textbf{Default Value} \\
\midrule
KL Mean Penalty $\alpha_{\mu}$     & $0.001$ \\
KL Variance Penalty $\alpha_{\Sigma}$ & $0.001$ \\
Action samples                     & $32$ \\
\bottomrule
\end{tabular}
\end{minipage}
\end{table*}

\subsection{Additional Experiment Results}
\label{Additional Results}

\paragraph{Multi-Run Evaluation}
To more comprehensively demonstrate the behavior of WPPG and WPPG-I, we further evaluate both methods with multiple training runs. Specifically, each algorithm is trained 5 times with different random seeds. In the corresponding plots, the solid line denotes the mean return across the 5 runs, while the shaded area indicates the range between the minimum and maximum returns over these runs (see Fig.\ref{fig:Multi-Run Evaluation}).

\begin{figure}
    \centering
    \includegraphics[width=0.9\linewidth]{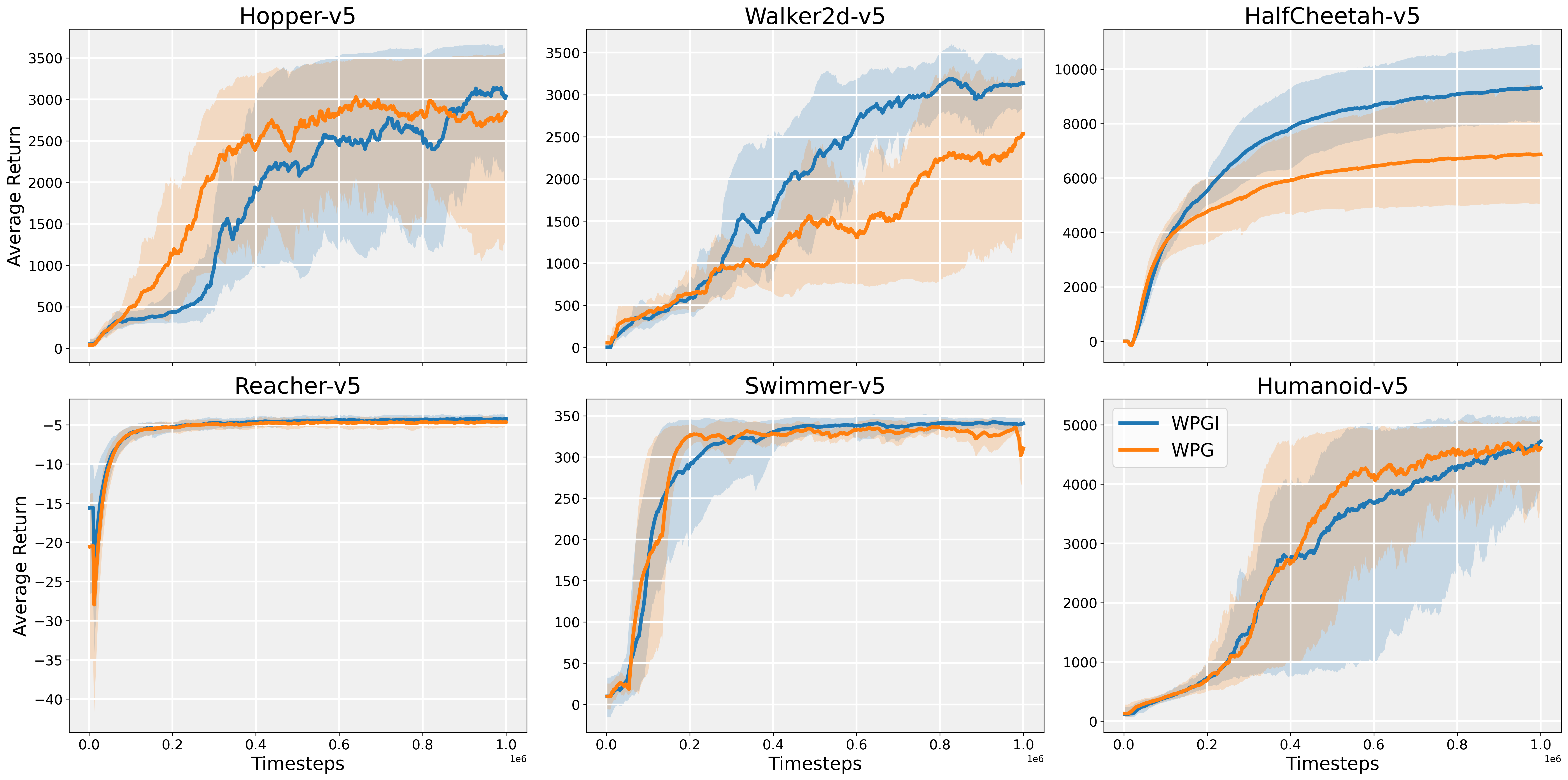}
    \caption{Multi-Run Evaluation}
    \label{fig:Multi-Run Evaluation}
\end{figure}

\paragraph{Combined Humanoid Task}
Our method shares some similarities with SAC in that both are based on entropy regularization and use the action gradient of the Q-function for policy updates. However, the key advantage of WPPG lies in its ability to train an implicit policy. To better showcase this benefit, we follow the construction in WPO \cite{pfau_wasserstein_2025} and create a combined task that increases the action dimensionality: multiple Humanoid environments are run in parallel, their states are concatenated and fed into a single agent, which outputs the concatenated actions jointly. As shown in the combined Humanoid task, WPPG-I converges to consistently higher returns than SAC, indicating that the implicit policy is able to discover action distributions that achieve higher rewards.
 (see Fig.\ref{fig:combined task}).

\begin{figure}
    \centering
    \includegraphics[width=0.8\linewidth]{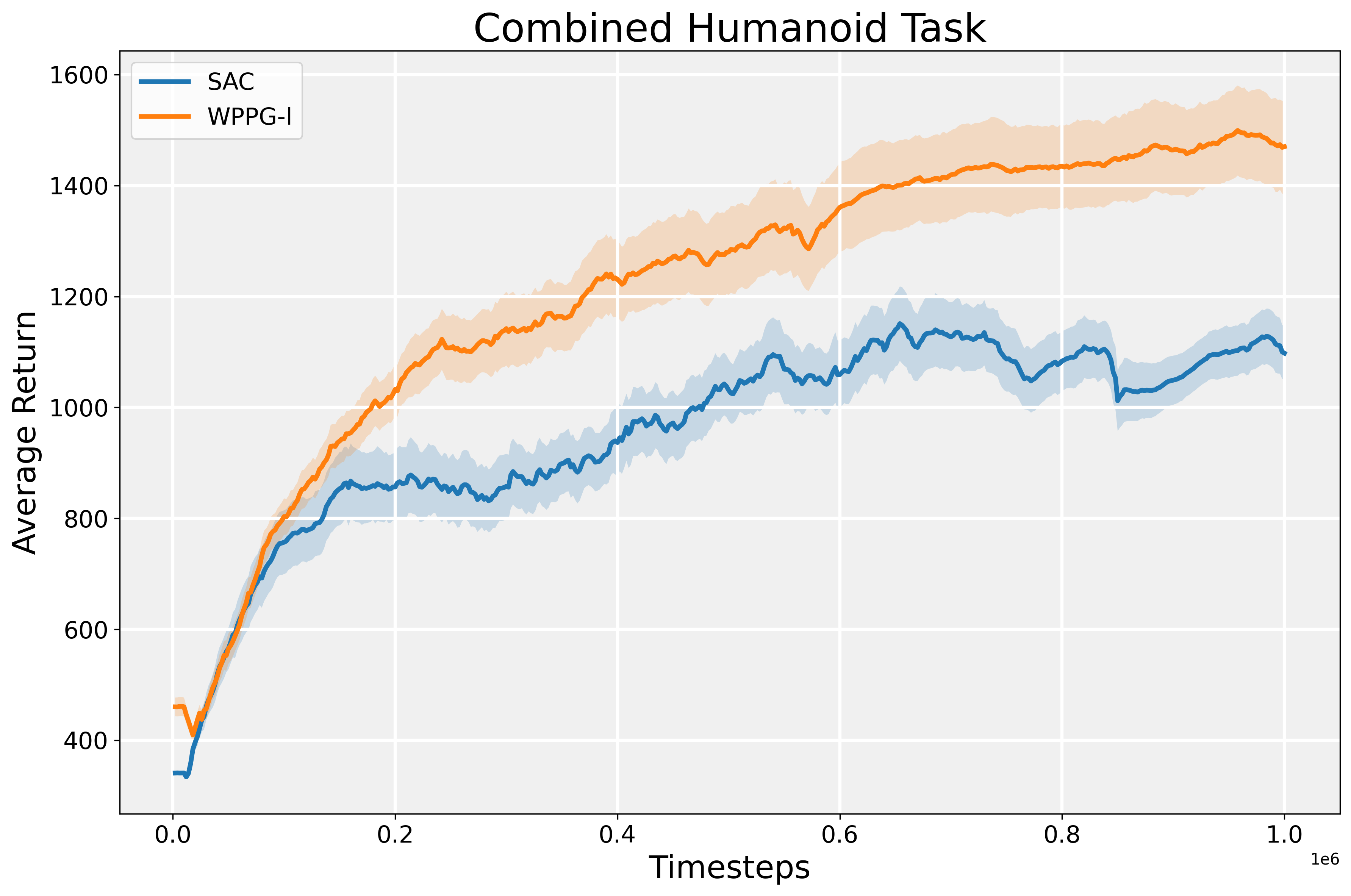}
    \caption{Combined Humanoid Task}
    \label{fig:combined task}
\end{figure}

\paragraph{Ablation on $\tau$.} 
In the preceding analysis, we showed that the parameter $\tau$ originates from entropy regularization of the policy. 
Unlike SAC and related methods that explicitly add an entropy penalty term into the $Q$-function fitting objective, WPPG does not require such a penalty. 
Instead, Gaussian noise is injected when computing the movement direction of action samples, where the scale of the injected noise $\tau$ corresponds to the magnitude of the entropy penalty. 

To study the impact of $\tau$, we conducted ablation experiments on Humanoid environment.
As illustrated in Figure~\ref{fig:ablation-combined}, on Humanoid we observe that injecting noise with $\tau$ in the range $[0, 0.01]$ significantly accelerates convergence, while larger values $0.1$ slows it down. 
This reflects a clear exploration–exploitation trade-off: noise injection encourages the policy to maintain entropy, thereby enabling exploration of richer reward information, but excessive noise hampers the ability of $\nabla_a Q(s,a)$ to provide useful guidance for policy updates. 

\begin{figure}[htbp]
    \centering
    \begin{minipage}[b]{0.35\linewidth}
        \centering
        \includegraphics[width=\linewidth]{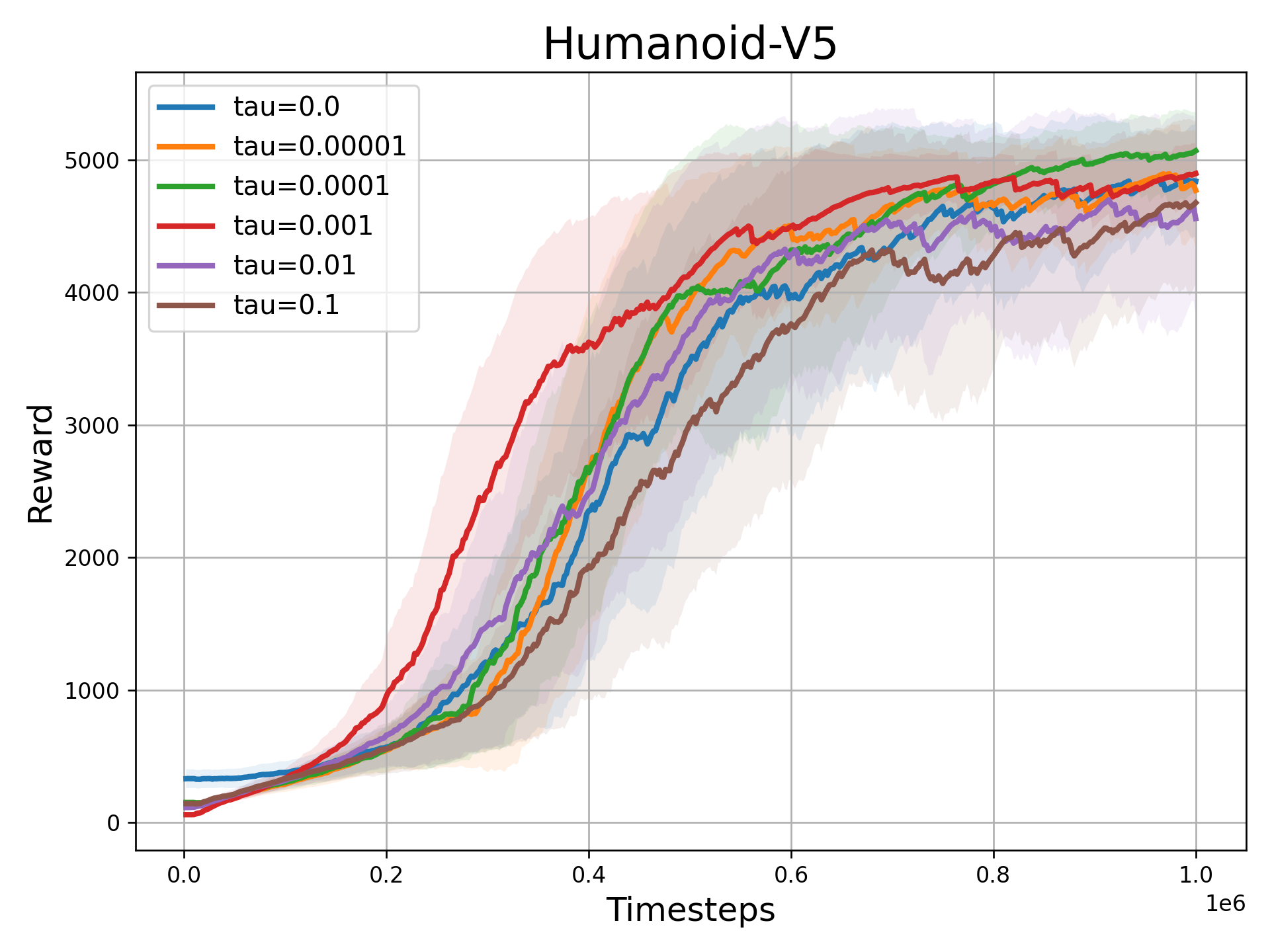}
        \label{fig:ablation-humanoid}
    \end{minipage}
    \qquad
    \begin{minipage}[b]{0.35\linewidth}
        \centering
        \includegraphics[width=\linewidth]{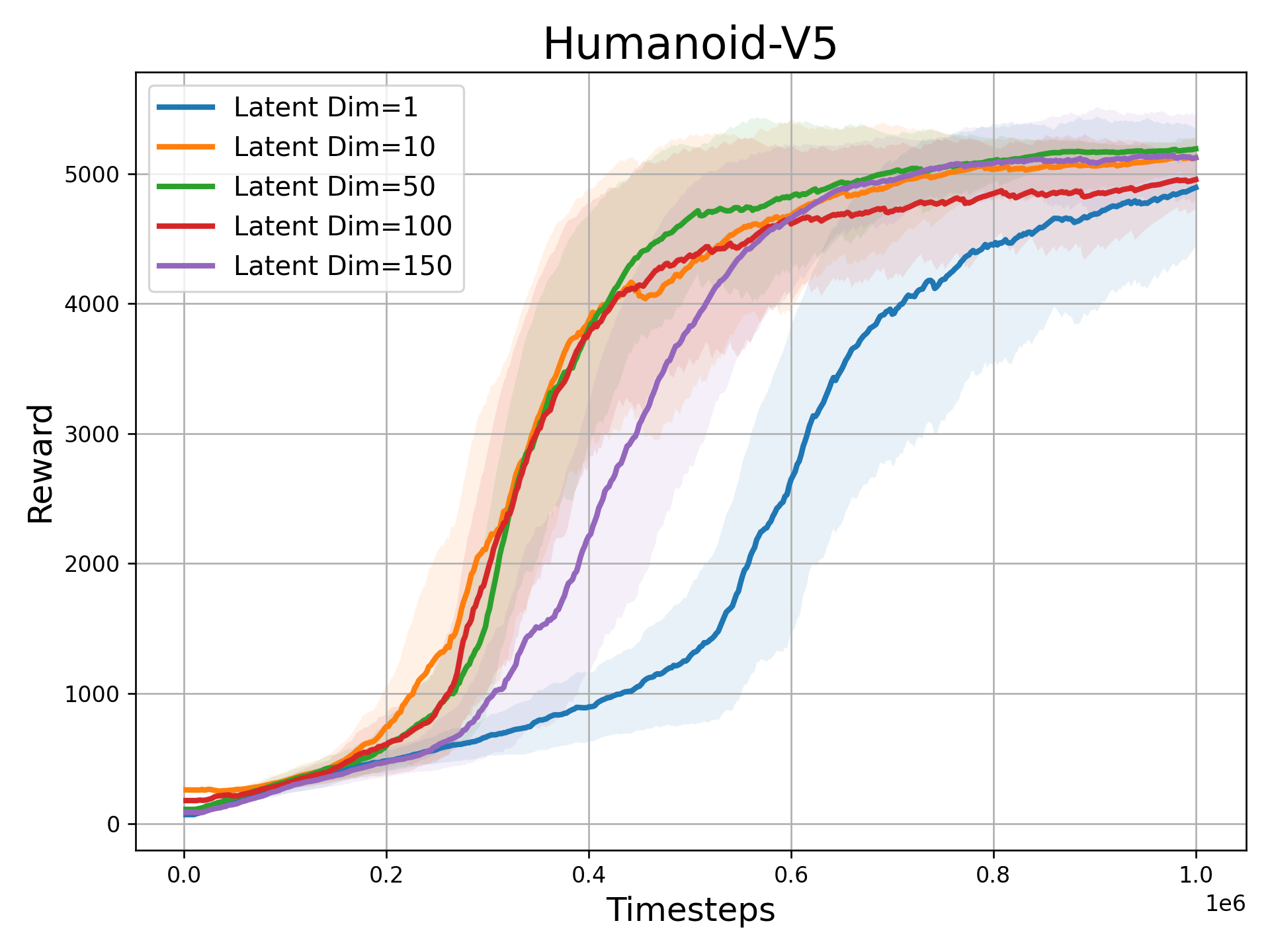}
        \label{fig:ablation-hopper}
    \end{minipage}
    \caption{Ablation study on $\tau$ (left) and $Latent\ Dimension$ (right).}
    \label{fig:ablation-combined}
\end{figure}
\paragraph{Ablation on Latent Dimension} 
We further evaluate the effect of the dimension of latent variable $z$ in our implicit generative model on WPPG-I in the Humanoid environment. 
When the latent dimension is as small as 1, the model learns slowly due to insufficient stochasticity. 
With moderate dimensions (e.g., 10, 50, 100), learning is significantly accelerated, indicating that 
a reasonable amount of latent variables enhances exploration without overwhelming the policy. 
However, when the latent dimension becomes too large (e.g., 150), excessive non-informative variables begin to 
dominate the input and degrade learning speed. 
Empirically, we find that setting the latent dimension about one-third of the state 
dimension provides a good balance between exploration and stability.

\paragraph{Ablation on Double Q Trick} 
We also evaluated the single-$Q$ variant of WPPG across all environments, and found that it outperforms WPO on nearly every task, with the corresponding results provided in the Appendix \ref{Additional Results}. In addition, consistent with prior findings, adopting double-$Q$ further improves WPPG by both stabilizing training and enhancing overall performance.

\paragraph{Ablation on Double Q Function}
Double-$Q$ plays a crucial role for WPPG. As shown in the figures, although the single-$Q$ variant of WPPG outperforms WPO on most environments, it fails to achieve fast and stable learning on challenging tasks such as Humanoid. Beyond stability, the use of double-$Q$ also opens up interesting directions for further exploration; for example, one could choose the $Q$-function with the smaller gradient magnitude to provide the action-sample update direction. We leave such extensions for future work.

\begin{figure}[ht!]
    \centering
    \includegraphics[width=0.9\linewidth]{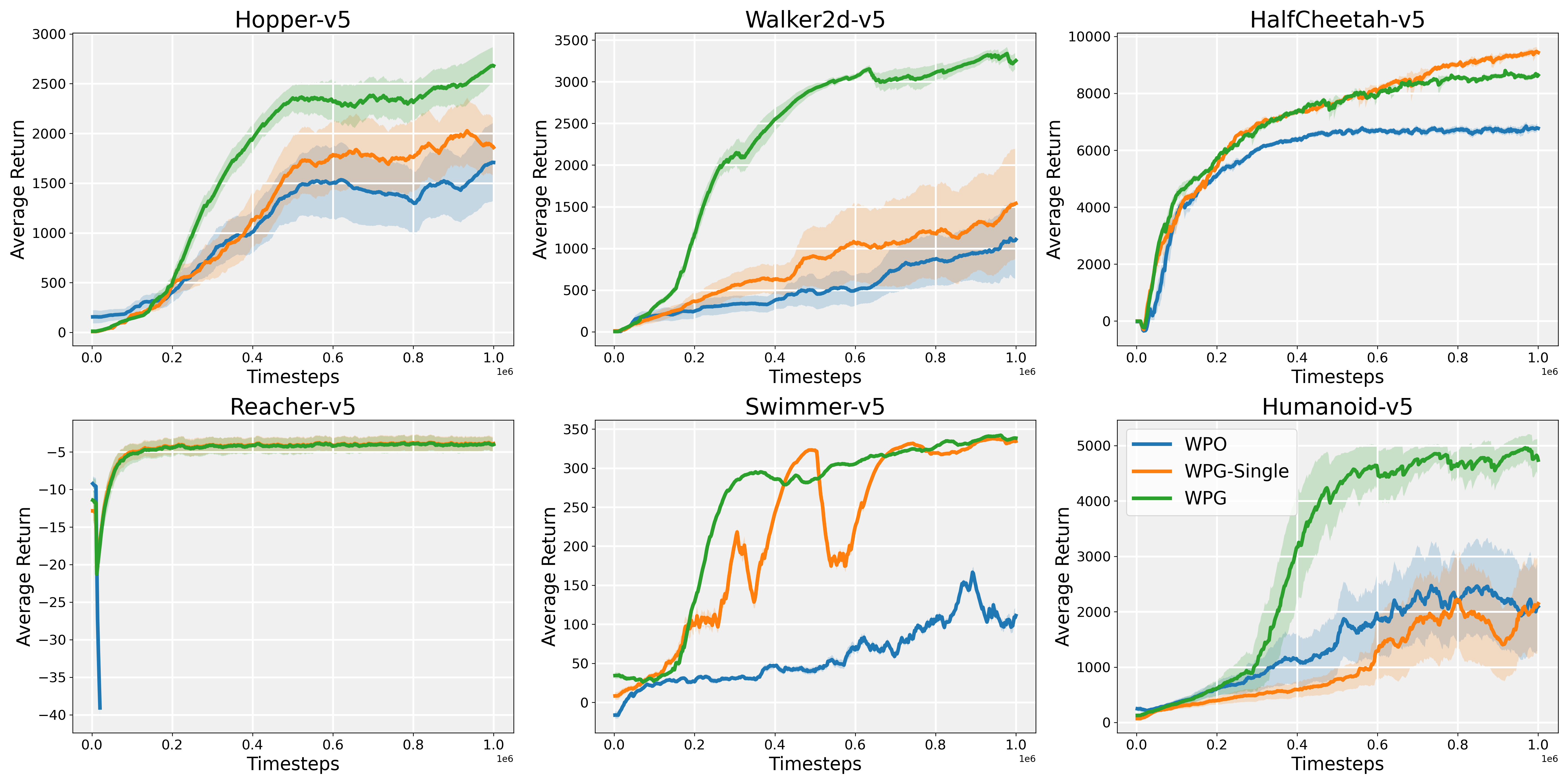}
    \caption{Ablation on Double Q Function}
    \label{fig:ablation-DQ}
\end{figure}

\paragraph{Additional Ablation Study}
Beyond the main results, we also conduct additional ablation studies on Humanoid-v5 and HalfCheetah-v5, systematically varying key hyperparameters (e.g., the Wasserstein step size 
$\eta$, the number of sampled actions, and the latent dimension of the implicit policy). These experiments, reported in the supplementary material, further validate the robustness of our method and illustrate how performance and stability depend on these design choices.
 (see Fig.\ref{fig:half-ablation} and Fig.\ref{fig:human-ablation}).

\begin{figure}
    \centering
    \includegraphics[width=0.9\linewidth]{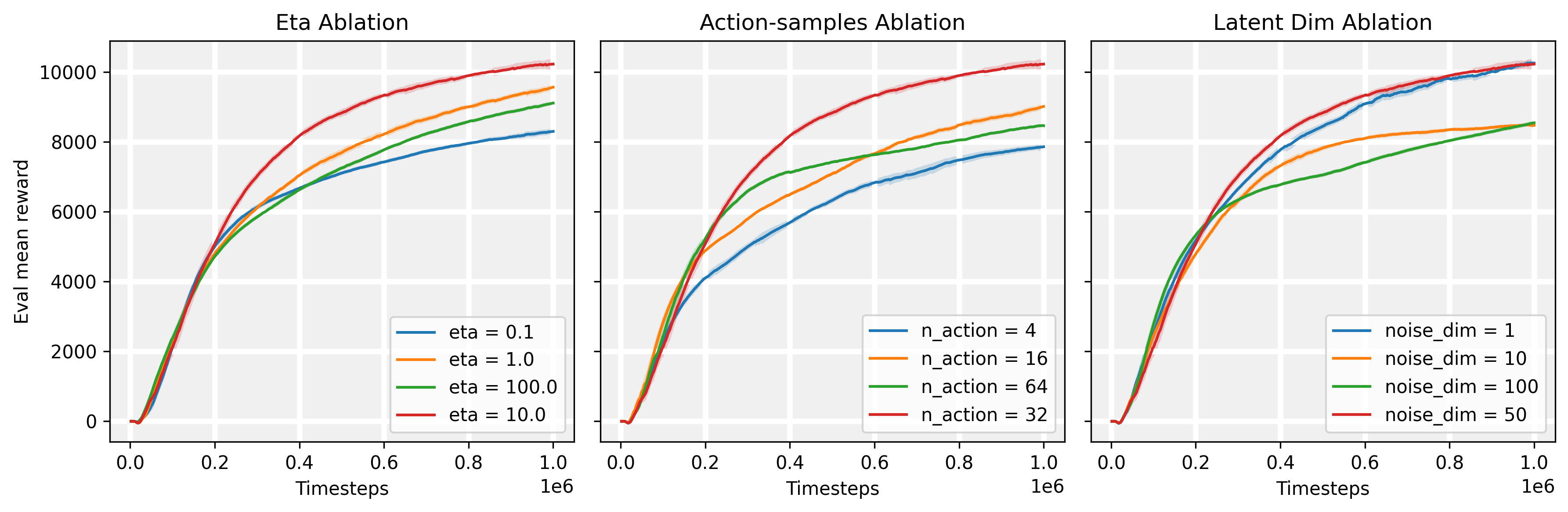}
    \caption{Additional Ablation on Eta, Action Samples and Latent Dim with HalfCheetah-v5 Task}
    \label{fig:half-ablation}
\end{figure}

\begin{figure}
    \centering
    \includegraphics[width=0.9\linewidth]{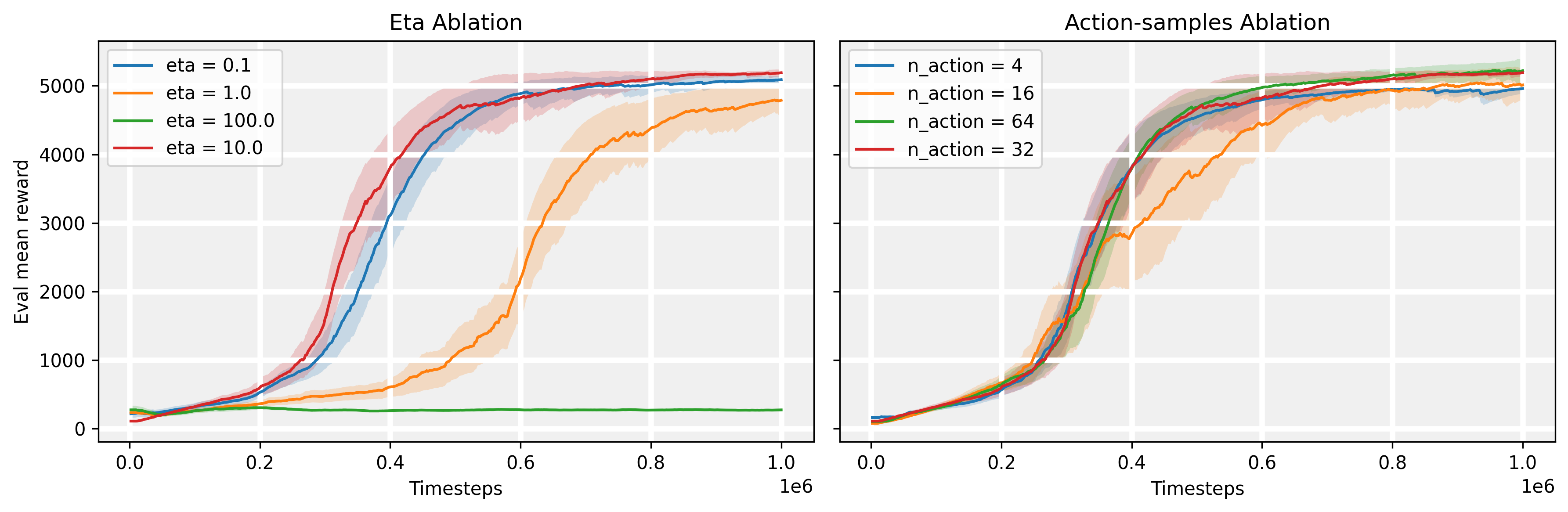}
    \caption{Additional Ablation on Eta, Action Samples and Latent Dim with Humanoid-v5 Task}
    \label{fig:human-ablation}
\end{figure}

\end{document}